\theoremstyle{plain}
\newtheorem{theorem}{Theorem}%
\newtheorem{lemma}{Lemma}
\newtheorem{assumption}{Assumption}
\newtheorem{example}{Example}
\newtheorem{definition}{Definition}
\newtheorem{remark}{Remark}
\newcommand*{\F}{{\mathsf{F}}}
\def\letters{a,b,c,d,e,f,g,h,i,j,k,l,m,n,o,p,q,r,s,t,u,v,w,x,y,z}
\def\Letters{A,B,C,D,E,F,G,H,I,J,K,L,M,N,O,P,Q,R,S,T,U,V,W,X,Y,Z}
\def\greekletters{alpha,beta,gamma,Gamma,lambda,Lambda,delta,Delta,zeta,kappa,phi,psi,nu,eta,varsigma,varphi,xi}
\edef\AllLetters{\letters,\Letters}
\def\DefineShortCuts#1#2#3#4#5{%
    \def\DefineShort##1;{\expandafter \providecommand \csname #2\string##1#3\endcsname {{#4{#5{##1}}}}}
    \@for\next:=#1\do{\expandafter\DefineShort\next;}}
\def\DefineShortCutsGreek#1#2#3#4#5{%
    \def\DefineShort##1;{\expandafter \providecommand \csname #2\string##1#3\endcsname {{#4{#5{\csname ##1\endcsname}}}}}
    \@for\next:=#1\do{\expandafter\DefineShort\next;}}
\DefineShortCuts{\AllLetters}{b}{}{\bm}{}
\DefineShortCuts{\AllLetters}{bb}{}{\overline}{\bm}
\DefineShortCuts{\Letters}{c}{}{\mathcal}{}
\DefineShortCuts{\Letters}{}{b}{\mathbb}{\bm}
\DefineShortCutsGreek{\greekletters}{b}{}{\bm}{}
\DefineShortCutsGreek{\greekletters}{bb}{}{\overline}{\bm}
\newcommand{\one}{{\bm 1}}
\newcommand\x\bx
\newcommand\y\by
\newcommand\E\Eb
\newcommand\R\Rb
\newcommand\I\Ib
\renewcommand\P\Pb %
\newcommand{\cmark}{\ding{51}}%
\newcommand{\xmark}{\ding{55}}%
\newcommand{\ave}{( \tfrac1n \bm 1_n)}
\newcommand{\mean}{( \tfrac1n \bm 1_n \bm 1_n^\top)}
\newcommand{\opt}{\star}
\newcommand{\clip}{\mathsf{Clip}}
\newcommand{\beer}{\texttt{BEER}\xspace}
\newcommand{\myalg}{\texttt{PORTER}\xspace}
\newcommand{\myalgdp}{\texttt{PORTER-DP}\xspace}
\newcommand{\myalgsgd}{\texttt{PORTER-GC}\xspace}
\newcommand{\soteriasgd}{\texttt{SoteriaFL-SGD}\xspace}
\newcommand{\dataset}[1]{#1\xspace}
\newcommand{\citet}[1]{\cite{#1}}
\newcommand{\citep}[1]{\cite{#1}}
\title{Convergence and Privacy of Decentralized Nonconvex Optimization with Gradient Clipping and Communication Compression}
\author{Boyue Li\thanks{Department of Electrical and Computer Engineering, Carnegie Mellon University, Pittsburgh, PA 15213, USA. Emails: \texttt{\{boyuel, yuejiec\}@andrew.cmu.edu.}}\\
 	Carnegie Mellon University\\
	\and
 	Yuejie Chi\footnotemark[1] \\ 	 
  	Carnegie Mellon University}
\begin{document}

\maketitle

\begin{abstract}

Achieving communication efficiency in decentralized machine learning has been attracting significant attention, with communication compression recognized as an effective technique in algorithm design. This paper takes a first step to understand the role of gradient clipping, a popular strategy in practice, in decentralized nonconvex optimization with communication compression. We propose \myalg, which considers two variants of gradient clipping added before or after taking a mini-batch of stochastic gradients, where the former variant \myalgdp allows local differential privacy analysis with additional Gaussian perturbation, and the latter variant \myalgsgd helps to stabilize training. We develop a novel analysis framework that establishes their convergence guarantees without assuming the stringent bounded gradient assumption. To the best of our knowledge, our work provides the first convergence analysis for decentralized nonconvex optimization with gradient clipping and communication compression, highlighting the trade-offs between convergence rate, compression ratio, network connectivity, and privacy.

\end{abstract}

\noindent\textbf{Keywords:} communication compression, gradient clipping, convergence rate, local differential privacy %
\section{Introduction}

Decentralized machine learning has been attracting significant attention in recent years, 
which can be often modeled as a nonconvex finite-sum optimization problem:
\begin{align}
    \label{eq:problem_definition_finite_sum}
    \min_{\x \in \R^d} f(\x) := \frac1n \sum_{i=1}^n f_i(\x),
    \text{~~~~~where~~} f_i(\x) = \frac1m \sum_{\bz \in \cZ_i} \ell(\bx; \bz),
\end{align}
where $\x \in \R^d$ and $\bz$ denote the optimization parameter and one data sample,
$\ell(\x; \bz)$ denotes the sample loss function that is nonconvex in $\x$, and
$f_i(\x)$ and $f(\x)$ denote the local objective function at agent $i$
and the global objective function. In addition,
$\cZ_i$ denotes the dataset at agent $i$,
 $m = | \cZ_i |$ denotes the local sample size, and $n$ denotes the number of agents.
 An undirected communication graph $\cG$ is used to model the connectivity between any two agents,
where there is an edge between agent $i$ and $j$ only if they can communicate.
The goal is to efficiently optimize the global objective function $f(\x)$ in a decentralized manner,
subject to the network connectivity constraints specified by $\cG$.

Communication efficiency is critical to decentralized optimization algorithms,
as communication can quickly become bottleneck of the system as the number of agents and the size of the model increase. This has led to the development of communication compression (or quantization) techniques,
which can significantly reduce the communication burden per round by transferring compressed information, especially when the communication bandwidth is limited. Therefore, a number of recent works have focused on designing decentralized nonconvex optimization algorithms with communication compression, including but not limited to \cite{koloskova2019decentralizeddeep,singh2021squarmsgd,zhao2022beer,tang2019deepsqueeze,huang2023cedas,yan2023compressed}.

Built upon this line of work, the paper aims to understand the role of gradient clipping in decentralized nonconvex optimization algorithms with communication compression. On the one hand, gradient clipping has been used widely in privacy-preserving algorithms \cite{abadi2016deep} to enable (local) differential privacy guarantees \citep{dwork2008differential}. On the other hand, gradient clipping is also observed to be beneficial in stabilizing neural network training \cite{zhang2020gradient}. However, since gradient clipping necessarily introduces bias, the characterization of the convergence becomes much more challenging compared to their unclipped counterpart. As a result, most of the existing theoretical analyses for stochastic gradient algorithms with clipping --- in the context of centralized and server/client settings --- make strong assumptions such as the bounded gradient assumption \cite{abadi2016deep,wang2019efficient,li2022soteriafl} and the uniformly bounded gradient assumption \cite{yang2022normalized,zhang2020improved,zhang2020gradient}. To the best of our knowledge, the convergence of stochastic gradient algorithms with clipping in the decentralized setting has not been investigated before.

\subsection{Our contributions}
This paper proposes \myalg (cf. \Cref{alg:dp_dsgd}),\footnote{The name is coined for two reasons: 1) \myalg has strong connection to the prior algorithm \beer (porter is a kind of dark beer), and 2) the authors developed this algorithm in Porter Hall at Carnegie Mellon University.}
a communication-efficient decentralized algorithm for nonconvex finite-sum optimization with gradient clipping and communication compression. \myalg is built on \beer \cite{zhao2022beer} --- a fast decentralized algorithm with communication compression proposed recently --- by introducing gradient clipping to the local stochastic gradient computation at agents, while inheriting the desirable designs such as error feedback and stochastic gradient tracking that are crucial in enabling the fast convergence of \beer. \myalg considers two variants of gradient clipping, corresponding to adding it before or after taking a mini-batch of stochastic gradients. In particular, the former variant \myalgdp allows local differential privacy (LDP) analysis with additional Gaussian perturbation, and the latter variant \myalgsgd helps to stabilize training. Assuming a smooth clipping operator (\Cref{def:clipping_operator}) and general compression operators (\Cref{definition:general_compression_operator}), the highlights of our contributions are as follows. 
\begin{enumerate}
    \item We establish that \myalgdp (cf. \Cref{alg:dp_dsgd}) achieves $(\epsilon, \delta)$-LDP under appropriate Gaussian perturbation. Under the bounded gradient assumption (when gradient clipping can be ignored), \myalgdp converges in 
    average squared $\ell_2$ gradient norm as
$\frac1T \sum_{t \in [T]} \E \big\| \nabla f(\bbx^{(t)}) \big\|^2_2 \lesssim \rho^{-\frac43} (1 - \alpha)^{-\frac83} \phi_m^{-1}$
in $T = \phi_m^{-2}$ iterations, where $\bbx^{(t)}$ is the average parameter, $\phi_m = \frac{\sqrt{d \log(1/\delta)}}{m \epsilon}$ is the baseline utility for a centralized stochastic algorithm to achieve $(\epsilon, \delta)$-DP with $m$ data samples \cite{abadi2016deep}, $\rho \in (0, 1]$ is the compression ratio, and $\alpha \in [0,1)$ is the mixing rate of the topology. 

\item However, the bounded gradient assumption might be too stringent to hold in practice. Instead we further establish that under the local variance and bounded dissimilarity assumptions, \myalgdp converges in minimum $\ell_2$ gradient norm as
$\min_{t \in [T]} \E \big\| \nabla f(\bbx^{(t)}) \big\|_2 \lesssim \rho^{-\frac23} (1 - \alpha)^{-\frac43} \phi_m^{-1/2}$
 in $T = \phi_m^{-2}$ iterations.

 \item We establish that under the local variance and bounded dissimilarity assumptions, by properly choosing the mini-batch size, \myalgsgd converges in minimum $\ell_2$ gradient norm as $\min_{t \in [T]} \E \big\| \nabla f(\bbx^{(t)}) \big\|_2 \lesssim \rho^{-\frac23} (1 - \alpha)^{-\frac43} T^{-1/2},$
        which matches the convergence rate of classical centralized stochastic algorithms.

\end{enumerate}

Our work develops a novel analysis framework that establishes their convergence guarantees without assuming the stringent bounded gradient assumption,  highlighting comprehensive trade-offs between convergence rate, compression ratio, network connectivity, and privacy. To the best of our knowledge, our work provides the first private decentralized optimization algorithm with communication compression, and a systematic investigation of gradient clipping in the fully decentralized setting. \Cref{table:utility} provides a detailed comparison of \myalgdp with prior art on private server-client algorithms, where the bounded gradient assumption is all in effect except ours.

\begin{table}[htbp]
    \caption{
    Comparison of final utility upper bounds and communication complexities of different stochastic algorithms that achieve $(\epsilon, \delta)$-DP/LDP.
    The Big-$O$ notation (defined in \Cref{sub:paper_organization_and_notations}) is omitted for simplicity.
    DP-SGD is a single-server optimization algorithm that serves as a baseline,
    to show the overhead brought in by the distributed setting.
    DDP-SRM and Soteria-SGD are server/client distributed algorithms,
    but DDP-SRM doesn't use communication compression.
    \\
    {\footnotesize $^{(1)}$ $\theta = (1-\omega)^{-3/2} n^{-1/2}$, where $\omega$ is the  parameter for unbiased compression.}
}
    \label{table:utility}
    \centering

\begin{tabular}{c||c|c|c|c|c}

    \toprule

    \multirow{2}{*}{Algorithm}
    & \multirow{2}{*}{Privacy}
    & Compression
    & Bounded
    & \multirow{2}{*}{Utility}
    & Communication
    \\

    & 
    & operator
    & gradient
    & 
    & rounds
    \\

    \hline
    \hline

    DP-SGD
    & \multirow{2}{*}{DP}
    & \multirow{2}{*}{-}
    & \multirow{2}{*}{\cmark}
    & \multirow{2}{*}{$\phi_m$}
    & \multirow{2}{*}{-}
    \\

    \citep{abadi2016deep}
    & 
    & 
    & 
    & 
    & 
    \\

    \hline

    DDP-SRM
    & \multirow{2}{*}{DP}
    & \multirow{2}{*}{-}
    & \multirow{2}{*}{\cmark}
    & \multirow{2}{*}{$\frac1n \phi_m$}
    & \multirow{2}{*}{$n^2 d \phi_m^{-1}$}
    \\

    \citep{wang2019efficient}
    & 
    & 
    & 
    & 
    & 
    \\

    \hline

    Soteria-SGD $^{(1)}$
    & \multirow{2}{*}{LDP}
    & \multirow{2}{*}{Unbiased}
    & \multirow{2}{*}{\cmark}
    & \multirow{2}{*}{$(1 + \theta^{1/2}) \big(\frac{1 + \omega}{n}\big)^{1/2} \phi_m$}
    & \multirow{2}{*}{$(1 + \theta^{1/2}) \big(\frac{n}{1 + \omega} \big)^{2/3}  d \phi_m^{-1}$}
    \\

    \citep{li2022soteriafl}
    & 
    & 
    & 
    & 
    & 
    \\

    \hline

    \myalgdp
    & \multirow{2}{*}{LDP}
    & \multirow{2}{*}{General}
    & \multirow{2}{*}{\cmark}
    & \multirow{2}{*}{$\frac{1}{(1 - \alpha)^{8/3} \rho^{4/3}} \phi_m $}
    & \multirow{2}{*}{$\phi_m^{-2}$}
    \\

    (\Cref{alg:dp_dsgd})
    & 
    & 
    & 
    & 
    & 
    \\

    \hline

    \myalgdp
    & \multirow{2}{*}{LDP}
    & \multirow{2}{*}{General}
    & \multirow{2}{*}{\xmark}
    & \multirow{2}{*}{$\frac{1}{(1 - \alpha)^{16/3} \rho^{8/3}} \phi_m$}
    & \multirow{2}{*}{$\phi_m^{-2}$}
    \\

    (\Cref{alg:dp_dsgd})
    & 
    & 
    & 
    & 
    & 
    \\

    \bottomrule

    \hline

\end{tabular}
 \end{table}

\subsection{Related works}
Decentralized optimization algorithms have been extensively studied to solve large-scale optimization problems.
We review most closely related works in this section, and refer readers to more comprehensive reviews in \citet{nokleby2020scaling,xin2020general}.

\paragraph{Decentralized stochastic nonconvex optimization.}
Decentralized stochastic algorithms have been a actively researched area in recent years.
Various algorithms have been proposed by directly adapting existing centralized algorithms,
e.g., \citep{kempe2003gossipbased,xiao2004fast,shah2007gossip,bianchi2013convergence,lian2017can,assran2019stochastic,wang2021cooperative}.
However, the simple adaptations usually fail to achieve better convergence rates.
Gradient tracking \citep{zhu2010discrete},
originally proposed by the control theory community,
can be used to track the global gradient at each agent,
which leads to a simple systematic framework for extending existing centralized algorithms to the decentralized setting.
Gradient tracking can be used for both deterministic optimization algorithms, e.g., \citet{di2016next,nedic2017achieving,qu2018harnessing,li2020communication},
and stochastic algorithms, e.g., \citet{sun2020improving,xin2022fast,xin2022fasta,li2022destress,huang2022tackling,luo2022optimal}.

\paragraph{Communication compression.} In
\citet{de2015taming,alistarh2017qsgd}, gradient compression was adopted to create a server/client distributed SGD algorithm, however,
the large variance of compressed gradients leads to a sub-optimal convergence rate.
\citet{seide20141bit} first proposed the use of error feedback to compensate for the variance induced by compression.
\citet{stich2018sparsified,alistarh2018convergence,mishchenko2019distributeda,li2020acceleration,gorbunov2021marina,li2021canita} all equipped similar mechanism to improve convergence for server/client distributed optimization algorithms,
and \citet{richtarik2021ef21,fatkhullin2021ef21} formalized the error feedback mechanism and reaches an $O(1/T)$ convergence rate for smooth nonconvex objective functions.
\citet{tang2018communication,koloskova2019decentralizeddeep,singh2021squarmsgd,taheri2020quantized,zhao2022beer,yan2023compressed,liao2023linearly,zhao2021faster}
further extended communication compression schemes to the decentralized setting.

\paragraph{Private optimization algorithms.}
The concern of leaking sensitive data has been increasing with the rapid development of large-scale machine learning systems.
To address this concern,
the concept of differential privacy is widely adopted \citep{dwork2006calibrating,dwork2008differential}, where 
a popular approach to protect privacy is adding noise to the model or gradients. This approach is first adopted in the single server setting  to design differentially private optimization algorithms
\citep{abadi2016deep,wang2017differentially,iyengar2019practical,feldman2020private,chen2020understanding,wang2020differentially},
while \citet{huang2020dpfl,asoodeh2021differentially,noble2022differentiallya,du2022dynamic,li2022soteriafl,murata2023diff2,zhang2022understanding} considered differential privacy for the server/client distributed setting. 

\paragraph{Gradient clipping.}
Understanding gradient clipping has gained significant attention in recent years.
Earlier works, 
e.g. \citet{pascanu2013difficulty,bengio2013advances,kim2016accurate,kanai2017preventing,you2017scaling},
used gradient clipping as a pure heuristic to solve gradient vanishing/exploding problems without theoretical understandings.
Then,
\cite{zhang2020why,zhang2020whya,zhang2020improved,reisizadeh2023variancereduced} introduced theoretical analyses to understand its impact on the convergence rate and training performance.
This question is also investigated in  \cite{chen2020understanding,zhang2022understanding,das2022uniform,fang2023improved},
which applies gradient clipping to limit the magnitude of each of the sample gradients,
so that the variance of privacy perturbation can be decided without the bounded gradient assumption. While finishing up this paper, we became aware of  \cite{koloskova23revisiting}, which also develops convergence guarantees on the minimum $\ell_2$ gradient norm of clipped stochastic gradient algorithms in the centralized setting with a piece-wise linear clipping operator. In contrast, our focus is on the decentralized setting with a smooth clipping operator, where extra care is taken to deal with the discrepancy between the local and global objective functions.

\subsection{Paper organization and notation}
\label{sub:paper_organization_and_notations}

\Cref{sec:preliminaries} introduces preliminary concepts,
\Cref{sec:algorithm} describes the algorithm development,
\Cref{sec:theoretical_guarantees} shows the theoretical performance guarantees for \myalg,
\Cref{sec:numerical_experiments} provides numerical evidence to support the analysis,
and \Cref{sec:conclusions} concludes the paper.
Proofs are postponed to appendices.

Throughout this paper, we use uppercase and lowercase boldface letters to represent matrices and vectors, respectively.
We use $\| \cdot \|_{\mathsf{op}}$ for matrix operator norm,
$\| \cdot \|_\F$ for Frobenius norm,
$\bI_n$ for the $n$-dimensional identity matrix,
$\bm1_n$ for the $n$-dimensional all-one vector
and $\bm0_{d \times n}$ for the $(d \times n)$-dimensional zero matrix.
For two real functions $f(\cdot)$ and $g(\cdot)$ defined on $\R^+$,
we say $f(x) = O \big( g(x) \big)$ or $f(x) \lesssim g(x)$ if there exists some universal constant $M > 0$ such that
$f(x)  \leq M g(x)$. The notation $f(x) =\Omega \big( g(x) \big)$ or $f(x)\gtrsim g(x)$ means $g(x) = O\big(f(x) \big)$.

\section{Preliminaries}
\label{sec:preliminaries}

\paragraph{Mixing.}%
The information mixing between agents is conducted by updating the local information via a weighted sum of information from neighbors,
which is characterized by a mixing (gossiping) matrix. Concerning this matrix is an important quantity called the mixing rate, defined in \Cref{definition:mixing_matrix}.
\begin{definition}[Mixing matrix and mixing rate]
    \label{definition:mixing_matrix}
    The {\em mixing matrix} is a matrix $\bW = [w_{ij}] \in \R^{n \times n}$,
    such that $w_{ij} = 0$ if agent $i$ and $j$ are not connected according to the communication graph $\cG$. Furthermore,
    $\bW \bm1_n = \bm1_n$ and $\bW^\top \bm1_n = \bm1_n$.
The {\em mixing rate} of a mixing matrix $\bW$ is defined as
    \begin{align}
        \alpha := \big\|\bW  - \tfrac1n \bm1_n\bm1_n^\top \big\|_{\mathsf{op}}.
        \label{eq:def_alpha0}
    \end{align}
\end{definition}

The mixing rate describes the connectivity of a communication graph and the speed of information sharing.
Generally,
a better connected graph leads to a smaller mixing rate,
for example,
$\bW$ can be the averaging matrix for a fully connected communication network,
which results in $\alpha = 0$.
A comprehensive list of bounds on $1-\alpha$ is provided by \cite[Proposition 5]{nedic2018network}.
Our analysis does not require the mixing matrix to be doubly stochastic,
while allows us to use a non-symmetric matrix with negative values as the mixing matrix, such as the FDLA matrix \citep{xiao2004fast}, which has a smaller mixing rate under the same connectivity pattern.

\paragraph{Gradient clipping.}
In practice, gradient clipping is frequently adopted to ensure the gradients are within a predetermined region,
so that the variance of privacy perturbation can be decided accordingly.
The clipping operator we adopt is a smooth clipping operator \citep{yang2022normalized} defined in \Cref{def:clipping_operator},
which scales a vector into a ball of radius $\tau$ centered at the origin.

\begin{definition}[Smooth clipping operator]
    \label{def:clipping_operator}
    For $\x \in \R^d$,
    the clipping operator is defined as
    \begin{align*}
        \clip_\tau(\x) = \frac{\tau}{\tau + \| \x \|_2} \x .
    \end{align*}
    For $\bX=[\x_1,\ldots, \x_n] \in \R^{d \times n}$,
    the distributed clipping operator is defined as
    \begin{align*}
        \clip_\tau(\bX) = [\clip_\tau(\x_1), \ldots, \clip_\tau(\x_n)].
    \end{align*}
\end{definition}

\begin{remark}
Another widely used clipping operator is the piece-wise linear clipping operator,
which scales inputs whenever its gradient norm is larger than $\tau$ and does nothing otherwise, defined by
    \begin{align*}
        \clip_\tau(\x) = \x \min \big\{1, \tau / \| \x \|_2 \big\} .
    \end{align*}
 \Cref{fig:clipping} plots the norm of a vector before and after clipping for these two clipping operators, which show that they behave quite similarly. %
\end{remark}

\begin{figure}[t]
    \centering
    \includegraphics[width=0.45\linewidth]{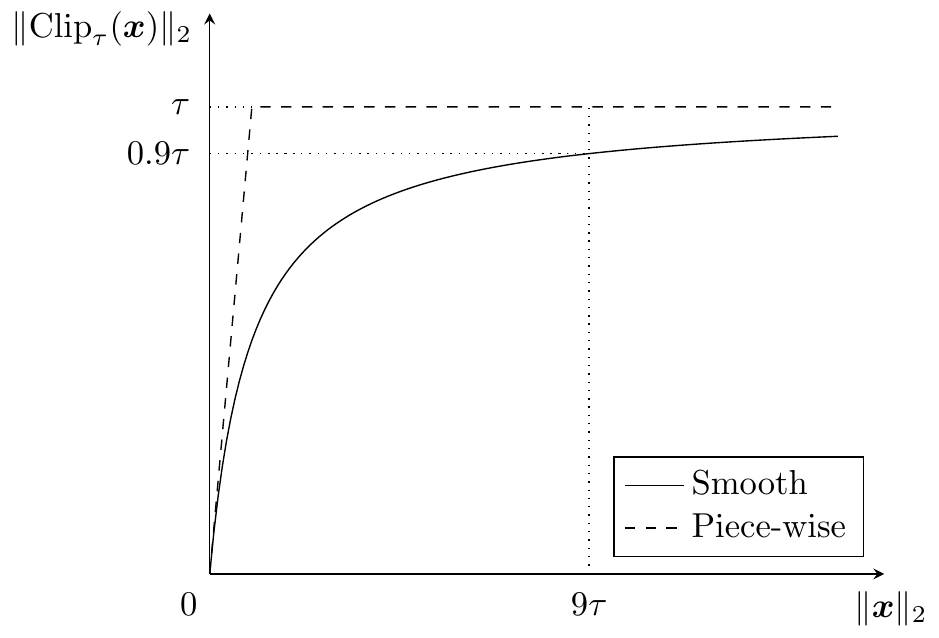}
    \caption[Illustration of input norm and clipped norm for clipping operators]{Illustration of input norm and clipped norm for the smooth clipping operator (\Cref{def:clipping_operator}) and piece-wise linear clipping operator,
    where $\tau$ is the clipping parameter.}
    \label{fig:clipping}
\end{figure}

\paragraph{Compression operators.}
Following \cite{richtarik2021ef21,fatkhullin2021ef21},
\Cref{definition:general_compression_operator} defines a randomized general compression operator that only guarantees the expected compression error $\E \| \cC(\x) - \x \|_2^2$ is less than the magnitude of original message $\| \x \|_2^2$.
\begin{definition}[General compression operator]
    \label{definition:general_compression_operator}
    A randomized map $\cC$ : $\R^d \to \R^d$ is a $\rho$-compression operator if $\forall \x \in \R^d$ and some $\rho \in [0, 1]$,
    the following inequality holds:
    \begin{align*}
    \E \big\| \cC(\x) - \x \big\|^2_2 \leq (1 - \rho) \| \x \|^2_2 .
    \end{align*}
\end{definition}

Many widely used compression schemes can be modeled as special cases, for example, random sparsificiation and top-$k$ compression.

\begin{example}[Random sparsification]
    \label{eg:random_sparsification}
    Random sparsification keeps an element from a $d$-dimensional vector with probability $\frac kd$. 
    Let $\bu \in \R^d$ where $u_i \sim B\big(\frac{k}{d}\big)$, then random sparsification is defined as
            $\mathrm{random}_{k}(\x) = \bu \odot \x$,
            which satisfies \Cref{definition:general_compression_operator} with $\rho = \frac kd$.

\end{example}

\newcommand*{\gsgd}{\mathrm{gsgd}}
\newcommand*{\topk}{\mathrm{top}_k}

\begin{example}[top$_k$]
    \label{example:topk}
    $\topk$ \citep{alistarh2018convergence,stich2018sparsified}
    keeps $k$ elements that have the largest absolute values and sets other elements to $0$,
    which is defined as
$\topk(\x) := \x \odot \bu(\x)$,
    where $[ \bu(\x) ]_i = 1$ if the absolute value of the $i$-th element is one of the $k$-largest absolute values,
    otherwise $[ \bu(\x) ]_i = 0$. It follows that
    $\topk$ satisfies Definition \ref{definition:general_compression_operator} with $\rho = k/d$.
\end{example}

\paragraph{Local differential privacy.} 
In decentralized learning systems,
all agents share information with their neighbors that are potentially sensitive.
If some agents are exploited by adversaries,
the system will face a risk of privacy leakage even when the system-level privacy is protected.
Therefore, we introduce local differential privacy (LDP) --- defined in \Cref{def:local_differential_privacy} --- following
\citet{duchi2013local,chatzikokolakis2013broadening,xiao2019local},
which protects each agent's privacy from leaking to other agents.

\begin{definition}[Local differential privacy (LDP)]
    \label{def:local_differential_privacy}
    A randomized mechanism $\cM: \cZ \to \cR$ with domain $\cZ$ and range $\cR$ satisfies $(\epsilon, \delta)$-local differential privacy for client $i$,
    if for any two neighboring dataset $\bZ_i, \bZ_i^{\prime} \in \cZ$ at client $i$ and for any subset of outputs $\bR \subseteq \cR$,
    it holds that
    \begin{align}
        \P\big( \cM(\bZ_i) \in \bR \big) \leq e^{\epsilon} \P\big( \cM(\bZ_i^{\prime}) \in \bR \big) + \delta.
    \end{align}
   The two datasets $\bZ_i $ and $\bZ_{i}^{\prime}$ are called neighboring if
they are only different by one data point at client $i$.
\end{definition}
\Cref{def:local_differential_privacy} is a stricter privacy guarantee because it can imply general differential privacy (DP).
Consequently,
LDP requires a larger perturbation variance than general DP.
To identify the impact of the decentralized LDP setting compared to centralized DP setting,
we define the baseline utility 
\begin{equation}\label{def:phi_m}
\phi_m = \frac{\sqrt{d \log(1/\delta)}}{m \epsilon},
\end{equation} 
which can be understood as the final utility of a centralized system with $m$ data samples that guarantees $(\epsilon, \delta)$-DP.
For typical private problems,
the local sample size $m$ has to be large enough for the privacy perturbation to deliver meaning guarantees,
we impose a mild assumption that $\phi_m < 1$ to simplify presentation.
For example,
the problem defined in \eqref{eq:problem_definition_finite_sum} has in total $mn$ data samples,
running an $(\epsilon, \delta)$-DP algorithm on one server that can access all data will achieve $\frac1n \phi_m$ utility in $n \phi_m^{-1}$ iterations.

\section{Proposed algorithm}
\label{sec:algorithm}

We propose \myalg, a novel stochastic decentralized optimization algorithm for finding first-order stationary points of nonconvex finite-sum problems with gradient clipping and communication compression; the details are described in \Cref{alg:dp_dsgd}. On a high level, \myalg is composed of local stochastic gradient updates and neighboring information sharing, following a similar structure as \beer \citep{zhao2022beer}, in terms of the use of error feedback \citep{richtarik2021ef21}, which accelerates the convergence with biased compression operators, and stochastic gradient tracking to track the global gradient locally at each agent. A key difference, however, is the use of gradient clipping. Motivated by efficient training and privacy preserving, we consider two variants, corresponding to clipping {\em before} the mini-batch with privacy perturbation (\myalgdp), and {\em after} taking the mini-batch (\myalgsgd), respectively.

\begin{algorithm}[htbp]
\setstretch{1.25}
\caption{\myalg}
\label{alg:dp_dsgd}
\begin{algorithmic}[1]
    \State {\textbf{input:} $\bbx^{(0)}$, gradient stepsize $\eta$, consensus stepsize $\gamma$, clipping threshold $\tau$, mini-batch size $b$, perturbation noise $\sigma_p$, number of iterations $T$}

    \State {\textbf{initialize:} $\bV^{(0)} = \bQ_v^{(0)} = \bG_p^{(0)} = \bm0_{d \times n}$, $\bQ_x^{(0)} = \bX^{(0)} = \bbx^{(0)} \one_n^\top$}

    \For {$t = 1, \ldots, T$}
	\State Draw the local mini-batch of size $b$ uniformly at random $\cZ^{(t)}= \{\cZ_i^{(t)}\}_{i=1}^n$
        \State \textbf{Option I: \myalgdp (differentially-private SGD)}  
        \State \qquad$\bG_\tau^{(t)} = \frac{1}{b} \sum_{\bZ \in \cZ^{(t)}}  \clip_\tau (\nabla \ell (\bX^{(t-1)}; \bZ ))$
         \label{alg:eq:gradient_3.5}
        \State \qquad $\bG_p^{(t)} = \bG_\tau^{(t)} + \bE^{(t)}$, where $\be_i^{(t)} \sim \cN(\bm0_d, \sigma_p^2 \bI_d)$
        \label{alg:eq:gradient_3}

        \State \textbf{Option II: \myalgsgd (SGD with gradient clipping) }  
        \State \qquad $\bG^{(t)} = \frac{1}{b} \sum_{\bZ \in \cZ^{(t)}} \nabla \ell (\bX^{(t-1)}; \bZ )$
        \label{alg:eq:gradient_1}

        \State \qquad $\bG_p^{(t)} =\bG_\tau^{(t)} = \clip_\tau(\bG^{(t)})$
        \label{alg:eq:gradient_2}

        \State {$\bQ_v^{(t)} = \bQ_v^{(t-1)} + \cC (\bV^{(t-1)} - \bQ_v^{(t-1)})$}
        \label{alg:eq:gradient_estimate_1}
        \Comment{Communication}
        \State $\bV^{(t)} = \bV^{(t-1)} + \gamma \bQ_v^{(t)} (\bW - \bI_n) + \bG_p^{(t)} - \bG_p^{(t-1)}$
        \label{alg:eq:gradient_estimate_2}

        \State $\bQ_x^{(t)} = \bQ_x^{(t-1)} + \cC (\bX^{(t-1)} - \bQ_x^{(t-1)})$
        \label{alg:eq:variable_estimate_1}
        \Comment{Communication}
        \State $\bX^{(t)} = \bX^{(t-1)} + \gamma \bQ_x^{(t)} (\bW - \bI_n) - \eta \bV^{(t)} $
        \label{alg:eq:variable_estimate_2}

    \EndFor
    \State {\textbf{output:} ~$\x_{out} \sim \text{Uniform} (\{\bx_i^{(t)} | i \in [n], t \in [T] \})$.}	

\end{algorithmic}
\end{algorithm}

Before proceeding, we introduce some notation convenient for describing decentralized algorithms. Let $\x_i \in \R^d$ be the optimization variable at agent $i$,
we define the collection of all optimization variables as a matrix $\bX = [\x_1, \x_2, \hdots, \x_n] \in \R^{d \times n}$,
and the average as $\bbx = \frac1n \sum_{i=1}^n \x_i$.
The gradient estimates $\bV$, stochastic gradients $\bG$,
perturbation noise $\bE$,
compressed surrogate $\bQ_x$ and $\bQ_v$,
and their corresponding agent-wise values are defined analogously.
The distributed gradient is defined as $\nabla F(\bX) = \big[ \nabla f_1(\x_1), \nabla f_2(\x_2), \hdots, \nabla f_n(\x_n) \big] \in \R^{d\times n}$.

To provide more detail, \myalg initializes gradient-related variables to $\bm0_d$ and other variables to the same random value $\bbx^{(0)}$,
which improves stability in early iterations and simplifies analysis,
but has no impact on the convergence rates.
Within each iteration,
\myalg is consisted of $3$ major steps.
\begin{enumerate}[(1)]
    \item \textbf{Computing clipped stochastic gradients.} We consider two options. The first option \myalgdp corresponds to differentially-private SGD (\Cref{alg:eq:gradient_3.5}-\Cref{alg:eq:gradient_3}), where \Cref{alg:eq:gradient_3.5} computes a batch of clipped stochastic gradient on each agent, and
        then \Cref{alg:eq:gradient_3} adds Gaussian noise to ensure privacy.
        The second option \myalgsgd corresponds to SGD with gradient clipping (\Cref{alg:eq:gradient_1}-\Cref{alg:eq:gradient_2}), where
        \Cref{alg:eq:gradient_1} computes a batch stochastic gradient on each agent, and
        \Cref{alg:eq:gradient_2} applies clipping to each batch stochastic gradient. 

    \item \textbf{Updating gradient estimates.}
        \Cref{alg:eq:gradient_estimate_1} updates the auxiliary variable $\bQ_v^{(t)}$,
        which is a compressed surrogate of $\bV^{(t-1)}$,
        by adding the compressed difference to itself $\cC (\bV^{(t-1)} - \bQ_v^{(t-1)})$.
        Meanwhile,
        each agent $i$ sends its compressed difference $\cC (\bv_i^{(t-1)} - \bq_{v, i}^{(t-1)})$ to all of its neighbors,
        so that every neighbor can reconstruct the auxiliary variable $\bq_{v, i}^{(t)}$ by accumulating this difference.
        \Cref{alg:eq:gradient_estimate_2} then adds a correction term $\gamma \bQ_v^{(t)} (\bW - \bI_n)$,
        and applies stochastic gradient tracking to update gradient estimates.

    \item \textbf{Updating variable estimates.}
        Similar to updating gradient estimates,
        \Cref{alg:eq:variable_estimate_1} updates the auxiliary variable $\bQ_x^{(t)}$,
        which is a compressed surrogate of variable estimates $\bX^{(t-1)}$,
        and communicates with neighbors.
        \Cref{alg:eq:variable_estimate_2} applies correction and updates the variable estimates by a gradient-style update.
\end{enumerate}

\section{Theoretical guarantees}
\label{sec:theoretical_guarantees}
This section theoretically analyzes the privacy and convergence properties of \myalg under various assumptions.
\Cref{sub:assumptions} lists all assumptions required for convergence analysis,
 \Cref{sub:convergence_guarantees_with_privacy} shows the privacy and convergence of \myalgdp using a specific perturbation variance,
and \Cref{sub:convergence_guarantees_without_privacy}
shows the convergence of \myalg corresponding to clipped SGD without privacy. %

\subsection{Assumptions}
\label{sub:assumptions}

We start with smoothness assumption in \Cref{assumption:smoothness}, which is standard and required for all of our analysis.
\begin{assumption}[$L$-smoothness]
    \label{assumption:smoothness}
    For any $\x, \y \in \R^d$ and any datum $\bz$ in dataset $\cZ$,
    $$\| \nabla \ell(\x; \bz) - \nabla \ell(\y; \bz) \|_2 \leq L \|\x - \y\|_2.$$
\end{assumption}

Note that the gradient clipping operator $\clip_\tau(\cdot)$ is utilized to ensure gradients are bounded. In addition, the boundedness of the gradient ensures the application of differentially-private mechanisms. However, stochastic gradients at different agents lose correct scaling after clipping, which breaks the stationary point property at local minima and introduces bias. To simplify analysis, one assumption that has been adopted widely in theoretical analysis \cite{li2022soteriafl} is the following bounded gradient assumption.

\begin{assumption}[Bounded gradient]
    \label{assumption:bounded_stochastic_gradient}
    For any $\x \in \R^d$ and any datum $\bz$ in dataset $\cZ$,
    $\| \nabla \ell(\x; \bz) \|_2 \leq \tau$.
\end{assumption}

Under \Cref{assumption:bounded_stochastic_gradient},
\myalg can skip the clipping operator,
and $\bg_{\tau_i}^{(t)}$ becomes an unbiased estimator of local gradient $\nabla f_i(\bx_i^{(t)})$, while still allowing privacy analysis.
However, \Cref{assumption:bounded_stochastic_gradient} is rather strong and seldomly met in practice.
For example, the gradient of a quadratic loss function is not bounded. In addition, it may result in an overly pessimistic clipping operation when there are possibly adversarial gradients with large norms in the samples.  
Going beyond the strong bounded gradient assumption, we consider a much milder assumption that bounds the local variance as follows, which is more standard in the analysis of unclipped stochastic algorithms.
\begin{assumption}[Bounded local variance]
    \label{assumption:bounded_variance}
    For any $ \x \in \R^d$ and
    $i \in [n]$,
    $$\E_{\bz \sim \cZ_i} \| \nabla \ell(\x; \bz) - \nabla f_i(\x) \|_2^2 \leq \sigma_g^2.$$
\end{assumption}

An additional challenge is associated with dealing with the decentralized environment, where the local objective functions can be rather distinct from the global one. Our analysis identifies the following assumption, called bounded gradient dissimilarity, which says that the difference between the local gradient and the global gradient should be small relative to the global one.
\begin{assumption}[Bounded gradient dissimilarity]
    \label{assumption:bounded_similarity}
    For any $ \x \in \R^d$ and
    $i \in [n]$,
    $$\big\| \nabla f(\x) - \nabla f_i(\x) \big\|_2 \leq \frac{1}{12} \big\| \nabla f(\x) \big\|_2. $$
\end{assumption}

\subsection{Privacy and convergence guarantees of \myalgdp}
\label{sub:convergence_guarantees_with_privacy}

We start by analyzing the privacy and convergence guarantees of \myalgdp, assuming the batch size $b=1$.

\paragraph{Privacy guarantee.}

\Cref{theorem:privacy} proves that \myalgdp is $(\epsilon, \delta)$-LDP when setting the variance of Gaussian perturbation properly.

\begin{theorem}[Local differential privacy]
    \label{theorem:privacy}
    Let $\phi_m = \frac{\sqrt{d \log(1/\delta)}}{m \epsilon}$ and $b = 1$.
    For any $\epsilon \leq T / m^2$ and $\delta \in (0, 1)$,
    \myalgdp is $(\epsilon, \delta)$-LDP after $T$ iterations if we set
    \begin{align}
        \sigma_p^2 = \frac{T \tau^2 \log(1/\delta)}{m^2 \epsilon^2} = T \tau^2 \phi_m^2 / d .
        \label{eq:sigma_p}
    \end{align}
\end{theorem}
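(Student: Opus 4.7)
The plan is to reduce the privacy analysis of \myalgdp\ to that of a standard sub-sampled Gaussian mechanism composed over $T$ iterations, and then invoke the moments accountant of \cite{abadi2016deep}. First, by the post-processing property of differential privacy, I would argue that for any agent $i$ all computations in \Cref{alg:dp_dsgd} depend on the local dataset $\cZ_i$ only through the sequence of noisy clipped gradients $\{\bg_p^{(t)}\}_{t=1}^T$ produced at that agent: the gradient-tracking steps (\Cref{alg:eq:gradient_estimate_1}--\Cref{alg:eq:gradient_estimate_2}) and variable-tracking steps (\Cref{alg:eq:variable_estimate_1}--\Cref{alg:eq:variable_estimate_2}) merely mix these already-privatized gradients and their compressed surrogates with quantities that are either deterministic in past privatized outputs or drawn from randomness independent of $\cZ_i$. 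Hence certifying $(\epsilon,\delta)$-LDP at agent $i$ reduces to bounding the privacy loss of releasing the stream $\{\bg_p^{(t)}\}_{t=1}^T$ against two neighboring local datasets $\bZ_i,\bZ_i'$.

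Next, I would analyze a single iteration and then compose. With $b=1$, agent $i$ draws one datum uniformly at random from $\cZ_i$, which corresponds to subsampling with ratio $q = 1/m$; the smooth clipping operator in \Cref{def:clipping_operator} guarantees $\|\clip_\tau(\nabla\ell(\cdot;\bz))\|_2 \le \tau$, so the $\ell_2$-sensitivity of the mini-batch gradient is at most $\tau$; and isotropic Gaussian noise of variance $\sigma_p^2$ is then injected. Each iteration is thus an instance of the sub-sampled Gaussian mechanism with noise multiplier $\sigma_p/\tau$. Composing $T$ such iterations via the moments accountant \cite[Theorem~1]{abadi2016deep} yields universal constants $c_1, c_2 > 0$ such that, whenever $\epsilon \le c_1 q^2 T$ and $\sigma_p/\tau \ge c_2 q\sqrt{T\log(1/\delta)}/\epsilon$, the $T$-fold composition is $(\epsilon,\delta)$-differentially private. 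Substituting $q = 1/m$ and squaring gives $\sigma_p^2 \gtrsim \tau^2 T\log(1/\delta)/(m^2\epsilon^2) = T\tau^2\phi_m^2/d$, matching \eqref{eq:sigma_p} up to absorbing constants, while the admissibility condition $\epsilon \le c_1 T/m^2$ aligns with the theorem's hypothesis $\epsilon \le T/m^2$. Since the per-agent sampling and Gaussian noise are independent across agents, this guarantee transfers to every agent, yielding the LDP conclusion of \Cref{def:local_differential_privacy}.

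The main subtlety I would check carefully is the post-processing reduction itself: one must verify that incoming messages from neighbors of agent $i$ cannot surreptitiously leak new information about $\cZ_i$ beyond what is already revealed by $\{\bg_p^{(t)}\}_{t=1}^T$. This holds here because every agent's per-iteration sampling and noise are drawn independently and all inter-agent communication flows through compressed surrogates of already-privatized quantities; I do not anticipate any further analytical difficulty once this bookkeeping is in place.
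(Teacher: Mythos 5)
Your proposal is correct and follows essentially the same route as the paper: the paper also reduces each agent's privacy to the stream of noisy clipped gradients via post-processing (its $\cC\circ\cM_i^{(t)}$ sub-mechanism construction), treats each iteration as a sub-sampled Gaussian mechanism with sensitivity $\tau$ and sampling ratio $q=1/m$, and composes over $T$ rounds with the moments accountant. The only difference is presentational — the paper unpacks \cite[Theorem 1]{abadi2016deep} into its constituent moment bounds and tail conditions rather than invoking it as a black box.
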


\Cref{theorem:privacy} shows that \myalgdp can achieve $(\epsilon, \delta)$-LDP regardless of whether the bounded gradient assumption presents,
because using the clipping operator $\clip_\tau(\cdot)$ can guarantee all the stochastic gradients' $\ell_2$ norms are bounded by $\tau$, 
so that the perturbation variance can be set accordingly.

\paragraph{Convergence with bounded gradient assumption.}
\label{sub:convergence_with_bounded_gradient_assumption}
We start by analyzing the convergence when the gradients are bounded under
 \Cref{assumption:bounded_stochastic_gradient}, in which case
\myalgdp can omit the clipping operator. %
\Cref{theorem:convergence_witht_privacy_with_bounded_gradient} presents the convergence result of \myalgdp using general compression operators (\Cref{definition:general_compression_operator}).

\begin{theorem}[Convergence of \myalgdp with bounded gradient assumption]
    \label{theorem:convergence_witht_privacy_with_bounded_gradient}

    Assume \Cref{assumption:smoothness,assumption:bounded_stochastic_gradient} hold,
    and use general compression operators (\Cref{definition:general_compression_operator}).
    Let $\Delta = \E[f(\bbx^{(0)})] - f^\star$.
    Set $\gamma = O\big( (1-\alpha) \rho \big)$,
    $\eta = O \big( \gamma^{4/3} \rho^{4/3} \phi_m \big)$,
    $T = \phi_m^{-2}$,
    $b = 1$
    and $\sigma_p^2 = T \tau^2 \phi_m^2 / d$.
   \myalgdp converges in average squared $\ell_2$ gradient norm as
    \begin{align}
        \frac1T \sum_{t=1}^T \E \| \nabla f(\bbx^{(t)}) \|_2^2
        &\lesssim \frac{\phi_m}{\rho^{\frac43} (1 - \alpha)^{\frac83}} \cdot \max \big\{\tau^2, L\Delta \big\} .
        \label{eq:convergence_with_bounded_gradient}
    \end{align}
\end{theorem}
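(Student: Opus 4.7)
The plan is to exploit the fact that under \Cref{assumption:bounded_stochastic_gradient} the clipping operator can be taken as the identity (since bounded sensitivity holds automatically), so \myalgdp reduces to a compressed decentralized stochastic gradient-tracking scheme in the spirit of \beer, augmented by the Gaussian perturbation $\bE^{(t)}$. The chosen privacy variance $d\sigma_p^2 = T\tau^2\phi_m^2 = \tau^2$ (using $T=\phi_m^{-2}$) is of the same order as the inherent stochastic-gradient variance $\sigma_g^2 \leq \tau^2$, so the effective per-agent gradient variance remains $O(\tau^2)$. My goal is to track how this variance, together with four decentralized error quantities, propagates through a \beer-style Lyapunov analysis.

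First I would reduce the dynamics to the averaged iterate. Right-multiplying the $\bV$-update by $\tfrac{1}{n}\one_n$ annihilates the mixing term because $(\bW-\bI_n)\one_n=\zero$, and a one-line induction from $\bV^{(0)}=\bG_p^{(0)}=\zero$ gives $\bbv^{(t)} = \bbg_p^{(t)}$, so that $\bbx^{(t+1)} = \bbx^{(t)} - \eta\,\bbg_p^{(t+1)}$. Applying $L$-smoothness of $f$, taking conditional expectation over the fresh minibatch and Gaussian noise, and decomposing $\bbg_p^{(t+1)}$ into $\nabla f(\bbx^{(t)})$ plus a consensus bias plus a zero-mean noise yields a descent inequality of the form
\begin{align*}
\E f(\bbx^{(t+1)}) \leq \E f(\bbx^{(t)}) - \tfrac{\eta}{2}\E\|\nabla f(\bbx^{(t)})\|_2^2 + \tfrac{C_1 L^2\eta}{n}\E\|\bX^{(t)}-\bbx^{(t)}\one_n^\top\|_\F^2 + \tfrac{C_2 L\eta^2}{n}\,(\tau^2 + d\sigma_p^2),
\end{align*}
where the consensus term captures the bias $\nabla f(\bbx^{(t)}) - \tfrac{1}{n}\sum_i\nabla f_i(\bx_i^{(t)})$ and the last term collects the stochastic-gradient and privacy variances.

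Next I would derive coupled affine recursions for four nonnegative error quantities: the consensus error $\|\bX^{(t)}-\bbx^{(t)}\one_n^\top\|_\F^2$, the gradient-tracking consensus error $\|\bV^{(t)}-\bbv^{(t)}\one_n^\top\|_\F^2$, and the two compression errors $\|\bQ_x^{(t)}-\bX^{(t)}\|_\F^2$ and $\|\bQ_v^{(t)}-\bV^{(t)}\|_\F^2$. Using $\|\bW-\tfrac{1}{n}\one_n\one_n^\top\|_{\mathsf{op}}=\alpha$ from \Cref{definition:mixing_matrix} together with \Cref{definition:general_compression_operator}, each mixing recursion contracts by $1-\Theta(\gamma(1-\alpha))$ on one error and its compressed surrogate contracts by $1-\rho/2$, modulo cross-coupling terms; the $\bV$ and $\bQ_v$ recursions additionally absorb the drift $\|\bG_p^{(t+1)}-\bG_p^{(t)}\|_\F^2$, which I bound by $L^2\|\bX^{(t)}-\bX^{(t-1)}\|_\F^2 + 2n(\tau^2+d\sigma_p^2)$ via smoothness and the independence of fresh Gaussian samples. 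A weighted Lyapunov function $\mathcal{L}^{(t)} = \E f(\bbx^{(t)}) - f^\star + \sum_{i=1}^4 \tfrac{w_i}{n}\cdot(\text{\emph{$i$-th error}})$, with positive weights balanced so that $\gamma\asymp(1-\alpha)\rho$ kills all cross-terms, is expected to satisfy
\begin{align*}
\mathcal{L}^{(t+1)} - \mathcal{L}^{(t)} \leq -\tfrac{\eta}{4}\E\|\nabla f(\bbx^{(t)})\|_2^2 + \tfrac{C L\eta^2}{(1-\alpha)^{8/3}\rho^{4/3}}\,(\tau^2+d\sigma_p^2).
\end{align*}
Telescoping from $t=0$ to $T-1$, dividing by $T\eta/4$, using $\mathcal{L}^{(0)}\leq\Delta$, and substituting $\sigma_p^2 = T\tau^2\phi_m^2/d$, $T=\phi_m^{-2}$, $\gamma\asymp(1-\alpha)\rho$, and $\eta\asymp\gamma^{4/3}\rho^{4/3}\phi_m$ then delivers \eqref{eq:convergence_with_bounded_gradient}.

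The hard part will be the recursion-and-combination step. The independent Gaussian perturbation enters every consecutive difference $\bG_p^{(t+1)}-\bG_p^{(t)}$ with full magnitude---unlike two nearby stochastic gradients, whose difference collapses via smoothness---so the gradient-tracking and $\bQ_v$ recursions are loaded with a persistent $\Theta(n(\tau^2+d\sigma_p^2))$ drift that must be discharged through the contractive factors $\gamma(1-\alpha)$ and $\rho$. Choosing the weights $w_i$ and stepsize exponents so that all four coupled recursions contract simultaneously, while still leaving a clean negative $\|\nabla f(\bbx^{(t)})\|_2^2$ coefficient and realizing the precise $\rho^{-4/3}(1-\alpha)^{-8/3}$ inflation, requires solving the resulting linear inequality system at exactly the right fractional scaling; a looser choice would immediately degrade the dependence on $\rho^{-1}$ and $(1-\alpha)^{-1}$.
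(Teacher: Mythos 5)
Your plan matches the paper's proof in all essentials: reduce to the averaged iterate via $\bbv^{(t)}=\bbg_p^{(t)}$, derive an $L$-smoothness descent inequality loaded with the consensus error and the $\tau^2+d\sigma_p^2$ variance, set up coupled contractive recursions for the four error quantities (the paper organizes them as two $2\times2$ linear systems summed over $t$ and inverted, which is equivalent to your weighted Lyapunov combination), and tune $\gamma\asymp(1-\alpha)\rho$ and $\eta$ to realize the stated rate. The only cosmetic difference is that the paper bounds the drift $\E\|\bG_p^{(t+1)}-\bG_p^{(t)}\|_\F^2$ crudely by $4n(\tau^2+\sigma_p^2 d)$ using boundedness alone rather than your smoothness-plus-variance decomposition, which yields the same persistent $\Theta(n(\tau^2+d\sigma_p^2))$ drift you correctly identify as the dominant obstruction.
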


\Cref{theorem:convergence_witht_privacy_with_bounded_gradient} shows the convergence error of the squared $\ell_2$ gradient norm with explicit dependency on the  compression ratio $\rho$ and  mixing rate $\alpha$.
When we fix $\rho$ and $\alpha$,
\Cref{theorem:convergence_witht_privacy_with_bounded_gradient} reaches an $O(\phi_m)$ final average squared $\ell_2$ gradient norm,
which matches the result of \soteriasgd \cite{li2022soteriafl}, the state-of-the-art stochastic  algorithm  with local differential privacy guarantees and {\em unbiased} communication compression in the server-client setting.
However,
due to extra complexities induced by the decentralized setting and {\em biased} compression,
\myalgdp takes $O(\phi_m^{-2})$ iterations to converge while \soteriasgd only takes $O(\phi_m^{-1})$ iterations; in addition,
 \myalgdp has a slightly worse dependency on the compression ratio $\rho$.

\paragraph{Convergence with bounded gradient assumption.} A more interesting and challenging scenario is when \Cref{assumption:bounded_stochastic_gradient} does not hold,
\myalgdp applies gradient clipping to ensure gradients are bounded to suit the privacy constraints. Fortunately, \Cref{theorem:convergence_witht_privacy_without_bounded_gradient} describes the convergence behavior of \Cref{alg:dp_dsgd} in this case, under the much weaker bounded local variance and bounded dissimilarity assumptions.

\begin{theorem}[Convergence of \myalgdp without bounded gradient assumption]
    \label{theorem:convergence_witht_privacy_without_bounded_gradient}
    Assume \Cref{assumption:smoothness,assumption:bounded_variance,assumption:bounded_similarity} hold,
    and use general compression operators (\Cref{definition:general_compression_operator}).
    Let $\Delta = \E[f(\bbx^{(0)})] - f^\star$. Set
   $\tau = \max \big\{ 365 \rho^{-\frac43} (1 - \alpha)^{-\frac83} \phi_m^{1/2}, 24 \sigma_g \big\}$,
    $\gamma = O((1 - \alpha)\rho)$,
    $\eta = O \big( L^{-1} \big)$,
    $b = 1$,
    $T= \phi_m^{-2}$
    and $\sigma_p^2 = T \tau^2 \phi_m^2 / d$.
    \Cref{alg:dp_dsgd} converges in minimum $\ell_2$ gradient norm as
    \begin{align}
        \min_{t \in [T]} \E \big\| \nabla f(\bbx^{(t)}) \big\|_2 \lesssim \max \big\{ \rho^{-\frac43} (1 - \alpha)^{-\frac83} (L \Delta \phi_m)^{1/2},~ \sigma_g \big\} .
        \label{eq:convergence_without_bounded_gradient}
    \end{align}
\end{theorem}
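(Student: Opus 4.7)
The plan is to extend \beer's Lyapunov-style analysis to the clipped and noise-perturbed setting, with the bulk of the new technical work devoted to controlling the bias introduced by $\clip_\tau$ under only the local-variance assumption. I would track the Lyapunov function
\begin{equation*}
\Phi^{(t)} = \E f(\bbx^{(t)}) - f^\star + A\,\E\|\bX^{(t)} - \bbx^{(t)}\one_n^\top\|_\F^2 + B\,\E\|\bV^{(t)} - \bbv^{(t)}\one_n^\top\|_\F^2 + C\,\E\|\bX^{(t)} - \bQ_x^{(t)}\|_\F^2 + D\,\E\|\bV^{(t)} - \bQ_v^{(t)}\|_\F^2,
\end{equation*}
whose last four components contract at rate $1 - \Theta((1-\alpha)\rho)$ under $\gamma = \Theta((1-\alpha)\rho)$ by the same compression/consensus/error-feedback arguments used in \beer, adapted to absorb the new driving variances of $O(\tau^2)$ (from clipped gradients) and $O(d\sigma_p^2)$ (from Gaussian perturbation).

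The first step is the $L$-smoothness descent lemma applied to the average iterate update $\bbx^{(t)} = \bbx^{(t-1)} - \eta\,\bbv^{(t)}$. Gradient tracking combined with $(\bW - \bI_n)\one_n = \zero$ gives the clean identity $\bbv^{(t)} = \tfrac{1}{n}\sum_i \bg_{p,i}^{(t)}$, whose conditional expectation is $\tfrac{1}{n}\sum_i \E_\bz \clip_\tau(\nabla\ell(\bx_i^{(t-1)};\bz))$ (the privacy noise being zero-mean). I would decompose this into $\nabla f(\bbx^{(t-1)})$ plus (i) a consensus residual of order $L \cdot \|\bX^{(t-1)} - \bbx^{(t-1)}\one_n^\top\|_\F/\sqrt{n}$ absorbed by the $A$-term, (ii) a dissimilarity residual bounded by $\tfrac{1}{12}\|\nabla f(\bbx^{(t-1)})\|_2$ via \Cref{assumption:bounded_similarity}, and (iii) the clipping bias, which is the centerpiece of the new analysis.

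The clipping-bias bound uses the smooth-clipping identity $\clip_\tau(\bu) - \bu = -\tfrac{\|\bu\|_2}{\tau+\|\bu\|_2}\bu$, which gives $\|\clip_\tau(\bu) - \bu\|_2 \le \|\bu\|_2^2/\tau$. Taking $\bu = \nabla\ell(\bx_i^{(t-1)};\bz)$ and invoking \Cref{assumption:bounded_variance} yields
\begin{equation*}
\big\|\E_\bz\clip_\tau(\nabla\ell(\bx_i^{(t-1)};\bz)) - \nabla f_i(\bx_i^{(t-1)})\big\|_2 \lesssim \frac{\|\nabla f_i(\bx_i^{(t-1)})\|_2^2 + \sigma_g^2}{\tau}.
\end{equation*}
The choice $\tau \ge 24\sigma_g$ controls the $\sigma_g^2/\tau$ piece as $O(\sigma_g)$, while reducing $\nabla f_i(\bx_i^{(t-1)})$ to $\nabla f(\bbx^{(t-1)})$ via \Cref{assumption:bounded_similarity} and $L$-smoothness leaves a clipping-bias contribution $\lesssim \|\nabla f(\bbx^{(t-1)})\|_2^2/\tau$, which is absorbed by the leading $\tfrac{\eta}{2}\|\nabla f(\bbx^{(t-1)})\|_2^2$ term of the descent whenever $\|\nabla f(\bbx^{(t-1)})\|_2 \le c\tau$ for a small constant $c$. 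After this absorption and the Lyapunov decay of the decentralized terms with $\eta = \Theta(1/L)$, the one-step inequality reads
\begin{equation*}
\Phi^{(t)} \le \Phi^{(t-1)} - \tfrac{\eta}{4}\,\E\|\nabla f(\bbx^{(t-1)})\|_2^2 + O\!\left(\eta\sigma_g^2 + \tfrac{\eta\,L\,d\,\sigma_p^2}{n} + \tfrac{\eta\,\tau^2\phi_m^2}{\rho^{4/3}(1-\alpha)^{8/3}}\right),
\end{equation*}
where the last term reflects the amplification of the $O(\tau^2)$ driving variance through the \beer-style compression/consensus recursion. Telescoping over $T$ iterations and substituting $T = \phi_m^{-2}$, $d\sigma_p^2 = T\tau^2\phi_m^2$, and $\tau = \max\{\Theta(\rho^{-4/3}(1-\alpha)^{-8/3}\phi_m^{1/2}),\,24\sigma_g\}$ gives $\tfrac{1}{T}\sum_t\E\|\nabla f(\bbx^{(t)})\|_2^2 \lesssim \rho^{-8/3}(1-\alpha)^{-16/3}L\Delta\phi_m + \sigma_g^2$; applying Jensen and taking square roots then yields the stated $\min_t \E\|\nabla f(\bbx^{(t)})\|_2$ bound.

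The main obstacle is the self-consistency inherent to the clipping bias: $\|\nabla f\|^2/\tau$ scales with the very quantity being controlled, and the absorption only works \emph{a priori} under $\|\nabla f(\bbx^{(t-1)})\|_2 \le c\tau$. I would close this circularity with a minimum-iterate dichotomy: because $\tau$ is chosen strictly larger than the claimed right-hand side of the theorem, the iterate achieving the minimum gradient norm automatically satisfies the absorption condition, and if the condition failed at every iterate the telescoped descent would contradict the lower bound $\Phi^{(T)} \ge 0$. Carefully tuning the numerical constants in \Cref{assumption:bounded_similarity} and in the definition of $\tau$ (the $1/12$, $24$, and $365$) so that the cross-absorption is exact, and propagating the $O(\tau^2)$ driving variance through all five Lyapunov components with amplification exactly $\rho^{-4/3}(1-\alpha)^{-8/3}$, will be the delicate bookkeeping part.
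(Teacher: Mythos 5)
Your high-level plan (Lyapunov function, reuse \beer's compression/consensus recursion, single-out the clipping bias) is reasonable, but the central step --- bounding the bias by $\|\clip_\tau(\bu)-\bu\|_2 \le \|\bu\|_2^2/\tau$ and absorbing the resulting $\|\nabla f\|_2^2/\tau$ term into the leading $-\tfrac{\eta}{2}\|\nabla f\|_2^2$ descent --- has a gap that the ``minimum-iterate dichotomy'' does not close. When $\|\nabla f(\bbx^{(t)})\|_2 > c\tau$ the clipping-bias term $\eta\|\nabla f\|_2\cdot\|\nabla f\|_2^2/\tau$ \emph{exceeds} the negative descent term, so your one-step inequality $\Phi^{(t)}\le\Phi^{(t-1)}-\tfrac{\eta}{4}\E\|\nabla f\|_2^2+O(\cdots)$ simply does not hold at that step, and there is no telescoped chain to feed into ``$\Phi^{(T)}\ge 0$'' and derive a contradiction. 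Nothing prevents every iterate from landing in the bad regime, and your dichotomy assumes the conclusion you are trying to prove (that the minimum-norm iterate is $O(\tau)$).

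The paper avoids this trap by never unpacking the clipped inner product. Its descent keeps $-\eta\,\delta^{(t)}\|\nabla f(\bbx^{(t)})\|_2^2$ where $\delta^{(t)}=\tau/(\tau+\|\nabla f(\bbx^{(t)})\|_2)$, and bounds the remaining clipping error $\E_t\|\clip_\tau(\nabla f(\bbx^{(t)}))-\bbg_\tau^{(t+1)}\|_2$ by a four-way decomposition (stochastic denominator, stochastic numerator, denominator dissimilarity, numerator dissimilarity) whose pieces are controlled by $\sigma_g/\sqrt b$, $\tfrac{1}{12}\delta^{(t)}\|\nabla f\|_2$, and consensus errors --- crucially, no $\|\nabla f\|_2^2/\tau$ appears, so every term is at worst linear in $\|\nabla f\|_2$. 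Under the contradiction hypothesis $\E\|\nabla f(\bbx^{(t)})\|_2\ge\nu$ for all $t$, the convexity/monotonicity of $x\mapsto x^2/(\tau+x)$ (Lemma~\ref{lemma:util_function}) and Jensen give $\E\big(\delta^{(t)}\|\nabla f\|_2^2\big)\ge\tfrac{\nu}{2}\E\|\nabla f\|_2$, so the per-step descent is a \emph{linear}-in-gradient decrement $-\tfrac{5\eta\nu}{24}\E\|\nabla f\|_2$ that holds unconditionally, with no absorption condition to verify. Telescoping then yields $\tfrac1T\sum_t\E\|\nabla f(\bbx^{(t)})\|_2<\nu$, the contradiction, and the theorem. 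If you want to rescue your route, you would need to replace the crude $\|\clip_\tau(\bu)-\bu\|\le\|\bu\|^2/\tau$ bound with one that degrades gracefully for $\|\bu\|\gg\tau$ (e.g.\ $\min\{\|\bu\|,\|\bu\|^2/\tau\}$) and then reproduce precisely the effective-stepsize structure the paper already exploits.
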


\Cref{theorem:convergence_witht_privacy_without_bounded_gradient} shows \myalgdp converges in minimum $\ell_2$ gradient norm with explicit dependency on compression ratio $\rho$, mixing rate $\alpha$ and gradient variance $\sigma_g$,  under much weaker assumptions.
To compare with \Cref{theorem:convergence_witht_privacy_with_bounded_gradient},
we can take the square root of \eqref{eq:convergence_with_bounded_gradient},
which translates to minimum $\ell_2$ gradient norm convergence on the order of
$
    O\big( \rho^{-2/3} (1 - \alpha)^{-4/3} \phi_m^{1/2} \cdot \max \{\tau, (L \Delta)^{1/2} \} \big) .
$
In comparison, although \Cref{theorem:convergence_witht_privacy_without_bounded_gradient} has worse dependency on compression ratio $\rho$ and mixing rate $\alpha$, it
matches the dependency on the baseline privacy loss $\phi_m$.

\subsection{Convergence guarantees of \myalgsgd}
\label{sub:convergence_guarantees_without_privacy}

\Cref{theorem:convergence_without_privacy_without_bounded_gradient} further establishes the convergence of \myalgsgd without the bounded gradient assumption, 
which applies the clipping operator to mini-batch stochastic gradients without privacy perturbation.
\begin{theorem}[Convergence of \myalgsgd without bounded gradient assumption]
    \label{theorem:convergence_without_privacy_without_bounded_gradient}
    Assume \Cref{assumption:smoothness,assumption:bounded_variance,assumption:bounded_similarity} hold,
    and use general compression operators (\Cref{definition:general_compression_operator}).
    Let $\Delta = \E[f(\bbx^{(0)})] - f^\star$.
    Set
    $\tau = O \big( \rho^{-\frac23} (1 - \alpha)^{-\frac43} T^{-1/2} \big)$,
    $\gamma = O \big((1 - \alpha)\rho \big)$,
    $\eta = O ( L^{-1} )$
    and $b = O(\sigma_g^2 \nu^{-2})$.
   \myalgsgd converges in minimum $\ell_2$ gradient norm as
    \begin{align*}
    \min_{t \in [T]} \E \big\| \nabla f(\bbx^{(t)}) \big\|_2 \lesssim \frac{1}{\rho^{\frac23} (1 - \alpha)^{\frac43}} \cdot \frac{1}{T^{1/2}} .
    \end{align*}
\end{theorem}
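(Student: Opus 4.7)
The plan is to build on the analysis framework developed in the proof of \Cref{theorem:convergence_witht_privacy_without_bounded_gradient} for \myalgdp, adapting it to the clipping-after-mini-batch structure of \myalgsgd and removing the privacy perturbation. I define a Lyapunov potential combining the function suboptimality $f(\bbx^{(t)}) - f^\star$, the consensus error $\|\bX^{(t)} - \bbx^{(t)}\one^\top\|_\F^2$, the compression-error surrogates $\|\bX^{(t)} - \bQ_x^{(t)}\|_\F^2$ and $\|\bV^{(t)} - \bQ_v^{(t)}\|_\F^2$, and a gradient-tracking deviation of the form $\|\bV^{(t)} - \clip_\tau(\nabla F(\bX^{(t-1)}))\|_\F^2$. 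With $\gamma = O((1-\alpha)\rho)$ the consensus and compression terms contract in the standard \beer fashion, and with $\eta = O(L^{-1})$ the smoothness-based descent lemma for $f(\bbx^{(t)})$ goes through up to bias terms that I will control next.

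The central new ingredient is handling the clipping bias. Because the smooth clipping operator of \Cref{def:clipping_operator} is $1$-Lipschitz, the discrepancy between $\bG_\tau^{(t)} = \clip_\tau(\bG^{(t)})$ and $\clip_\tau(\nabla F(\bX^{(t-1)}))$ is dominated by the mini-batch variance, which is at most $\sigma_g^2/b \leq \nu^2$ by \Cref{assumption:bounded_variance} and the choice $b = O(\sigma_g^2 \nu^{-2})$; this reduces matters to analyzing a clipped deterministic descent with an additive residual of order $\nu^2$. Next, invoking \Cref{assumption:bounded_similarity} together with a small consensus-error bound, one shows that when $\|\nabla f(\bbx^{(t)})\|_2 \gg \tau$ every local gradient $\nabla f_i(\bx_i^{(t)})$ is well aligned with $\nabla f(\bbx^{(t)})$, so $\clip_\tau(\nabla f_i(\bx_i^{(t)}))$ retains magnitude of order $\tau$ and inherits the same alignment; averaging gives the per-iteration correlation $\langle \nabla f(\bbx^{(t)}),\, \tfrac{1}{n}\bG_\tau^{(t)}\one_n\rangle \gtrsim \tau\,\|\nabla f(\bbx^{(t)})\|_2$, the clipped analogue of the $\|\nabla f(\bbx^{(t)})\|_2^2$ term in the unclipped descent. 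Assembling the one-step inequalities into a Lyapunov recursion and summing over $t \in [T]$ produces a telescoping bound of the form
\[
\eta\tau \sum_{t=1}^T \E\|\nabla f(\bbx^{(t)})\|_2 \;\lesssim\; \Delta \;+\; \eta T \cdot \frac{\tau^2}{\rho^{4/3}(1-\alpha)^{8/3}},
\]
whose right-hand side collects the clipping bias, mini-batch variance, and the residuals of the consensus, compression, and tracking errors after contraction. Dividing by $\eta\tau T$, using $\nu \lesssim \tau$, and plugging in the prescribed $\tau = O(\rho^{-2/3}(1-\alpha)^{-4/3} T^{-1/2})$ directly yields the advertised rate $\min_{t\in[T]} \E\|\nabla f(\bbx^{(t)})\|_2 \lesssim \rho^{-2/3}(1-\alpha)^{-4/3} T^{-1/2}$.

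The main obstacle is the gradient-tracking recursion in the clipped regime: since $\bG_\tau^{(t)}$ is a biased estimator of $\nabla F(\bX^{(t-1)})$, the tracking variable $\bV^{(t)}$ does not approximate $\nabla F(\bX^{(t-1)})$ itself but only its clipped surrogate. I will need a new invariant bounding $\E\|\bV^{(t)} - \clip_\tau(\nabla F(\bX^{(t-1)}))\|_\F^2$ with the correct dependence on $\rho$ and $1-\alpha$, by carefully tracking how the increment $\bG_\tau^{(t)} - \bG_\tau^{(t-1)}$ propagates through the $\bV$-update, exploiting the $1$-Lipschitzness of $\clip_\tau$ to absorb the iterate drift between consecutive rounds, and disentangling the coupling between this tracking error and the consensus/compression errors. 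Once this invariant is pinned down, the descent argument and telescoping above close the proof, mirroring the structure of \Cref{theorem:convergence_witht_privacy_without_bounded_gradient} but with the $\sigma_p^2$ contribution replaced by the mini-batch residual $\nu^2 \lesssim \tau^2$.
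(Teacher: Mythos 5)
Your high-level architecture (smoothness-based descent, consensus/compression error control, clipping-bias analysis, tuning $\tau \sim T^{-1/2}$) matches the paper's, and you correctly observe that the role of the privacy noise $\sigma_p^2 d$ is taken over by the mini-batch residual $\sigma_g^2/b$. But there are two genuine gaps. First, your per-iteration correlation bound $\langle \nabla f(\bbx^{(t)}), \tfrac1n \bG_\tau^{(t)}\one_n\rangle \gtrsim \tau\,\|\nabla f(\bbx^{(t)})\|_2$ is asserted only ``when $\|\nabla f(\bbx^{(t)})\|_2 \gg \tau$'' and is false otherwise: for the smooth clipping operator the exact identity is $\langle \nabla f, \clip_\tau(\nabla f)\rangle = \tau\|\nabla f\|_2^2/(\tau + \|\nabla f\|_2)$, whose ratio to $\tau\|\nabla f\|_2$ degenerates as the gradient shrinks, and in addition the gradient norm sits inside an expectation, so even a lower bound on $\E\|\nabla f(\bbx^{(t)})\|_2$ does not immediately transfer. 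The paper closes this with a proof by contradiction --- assume $\E\|\nabla f(\bbx^{(t)})\|_2 \geq \nu$ for all $t$ and take $\tau = \nu$ --- combined with Jensen's inequality for the convex map $x \mapsto x^2/(\tau+x)$ (\Cref{lemma:util_function}) to get $\E\big[\delta^{(t)}\|\nabla f(\bbx^{(t)})\|_2^2\big] \geq \tfrac{\nu}{2}\,\E\|\nabla f(\bbx^{(t)})\|_2$ at \emph{every} iteration, after which the average is shown to be below $\nu$, yielding the contradiction. Without this (or an equivalent device) your telescoping sum cannot be closed, since iterations with small gradient contribute no usable descent term.

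Second, the tracking invariant $\E\|\bV^{(t)} - \clip_\tau(\nabla F(\bX^{(t-1)}))\|_\F^2$ that you yourself flag as the ``main obstacle'' is left unestablished, so the proposal is incomplete exactly where it departs from the paper --- and it is also unnecessary. Because clipped gradients are uniformly bounded, $\|\bG_\tau^{(t)}\|_\F^2 \leq n\tau^2$, the paper bounds the tracking increment crudely by $\E\|\bG_p^{(t+1)} - \bG_p^{(t)}\|_\F^2 \leq 4n\tau^2$ and lets the summed gradient consensus error accumulate to $O\big(Tn\tau^2/(1-\hat\alpha)^2\big)$ via the same linear systems used in the proof of \Cref{theorem:convergence_witht_privacy_with_bounded_gradient}; the choice $\tau = O\big(\rho^{-2/3}(1-\alpha)^{-4/3}T^{-1/2}\big)$ then makes this accumulation harmless. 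If you adopt this crude bound you can discard your new invariant, reuse the existing linear-system estimates verbatim, and the remainder of your plan goes through once the contradiction-plus-Jensen step above is inserted.
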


\Cref{theorem:convergence_without_privacy_without_bounded_gradient} suggests that by picking the clipping threshold $\tau$ and batch size $b$ properly, \myalgsgd converges at an $O(1/T^{1/2})$ rate. In comparison, standard centralized SGD converges in average squared $\ell_2$ gradient norm at an $O(1/T)$ rate,
which also translates to a minimum $\ell_2$ gradient norm convergence in the form of $\min_{t \in [T]} \E \big\| \nabla f(\x^{(t)}) \big\|_2 \lesssim 1/T^{1/2}$.
Therefore, using gradient clipping for decentralized SGD does not affect the convergence rate, providing proper hyper parameter choices.

When the gradients are bounded, we can omit the clipping operator in \myalgsgd, which become the same as \beer \citep{zhao2022beer}. Recall that
\beer guarantees a minimum $\ell_2$ gradient norm convergence at the rate $ \frac{1}{\rho^{1/2} (1-\alpha)^{3/2}} \cdot \frac{1}{T^{1/2}}$. In comparison,
\Cref{theorem:convergence_without_privacy_without_bounded_gradient} 
has a better dependency on the mixing rate $\alpha$,
but has a slightly worse dependency on the compression ratio $\rho$,
which again emphasizes that gradient clipping does not harm convergence. %
\section{Numerical experiments}
\label{sec:numerical_experiments}

This section presents numerical experiments to examine the performance of \myalgdp,
with comparison to the state-of-the-art server/client private stochastic algorithm \soteriasgd,
which also utilizes communication compression and guarantees local differential privacy.
More specifically,
we evaluate the convergence of utility and accuracy in terms of communication rounds and communication bits,
to analyze the privacy-utility-communication trade-offs of different algorithms.

For all experiments, we split shuffled datasets evenly to $10$ agents that are connected by an Erd\H{o}s-R\'enyi random graph with connecting probability $p = 0.8$.
We use the FDLA matrix \citep{xiao2004fast} as the mixing matrix to perform weighted information aggregation to accelerate convergence. 
We use biased random sparsification (cf. \Cref{eg:random_sparsification}) for all algorithms where $k = \lfloor \frac{d}{20} \rfloor$,
i.e., the compressor randomly selects 5\% elements from each vector.
We also apply gradient clipping with $\tau = 1$ to all algorithms for simplicity.
For each experiment,
all algorithms are initialized to the same starting point,
and use best-tuned learning rates, batch size $1$ and $\sigma_p = \frac{\tau \sqrt{T \log(1/\delta)}}{m \epsilon}$.

\subsection{Logistic regression with nonconvex regularization}
\label{sub:privacy_logistic}

We run experiments on logistic regression with nonconvex regularization on the \dataset{a9a} dataset \citep{chang2011libsvm}.
Following \citet{wang2019spiderboost},
the objective function is defined as
\begin{align*}
    \ell(\x; \{\bm f,l \}) = \log\left(1+ l \exp(-\x^\top \bm f)\right) + \lambda \sum \limits_{i=1}^d\frac{x_i^2}{1+x_i^2},
\end{align*}
where $\{ \bm f, l \}$ represents a training tuple,
$\bm f \in \R^d$ is the feature vector and $l \in \{0, 1\}$ is the label, and $\lambda$ is the regularization parameter which is set to $\lambda = 0.2$.

\begin{figure}[ht]
\begin{tabular}{cc}
\includegraphics[width=0.48\textwidth]{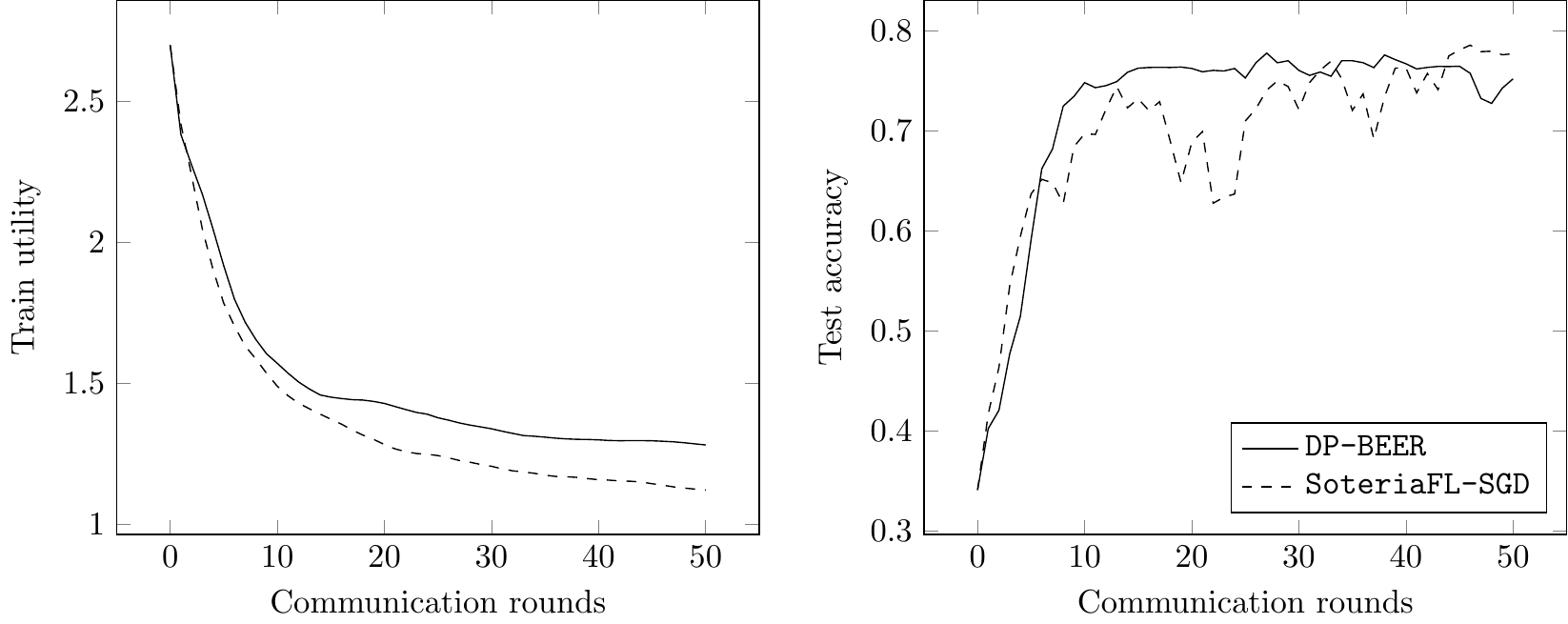} &\includegraphics[width=0.48\textwidth]{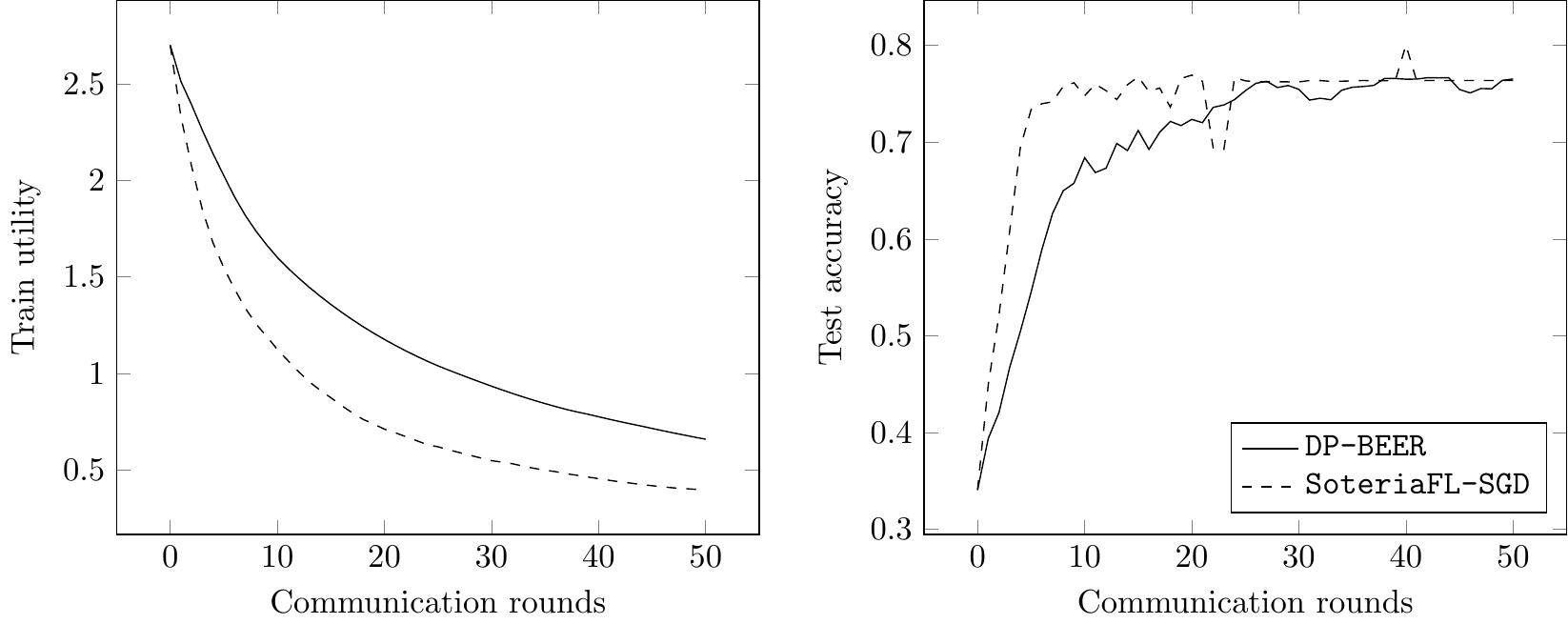} \\
(a) $(10^{-2}, 10^{-3})$-LDP & (b) $(10^{-1}, 10^{-3})$-LDP
\end{tabular}
\caption[Convergence of \myalgdp for logistic regression with nonconvex regularization when guaranteeing $(10^{-2}, 10^{-3})$-LDP]{The train utility and test accuracy vs. communication rounds for logistic regression with nonconvex regularization on the \dataset{a9a} dataset when guaranteeing $(10^{-2}, 10^{-3})$-LDP and $(10^{-1}, 10^{-3})$-LDP, respectively.
Both \myalgdp and \soteriasgd employ $\text{random}_{162}$ compression (cf. \Cref{eg:random_sparsification}). \label{fig:logistic_regression_eps_0.01}}
\end{figure}

\Cref{fig:logistic_regression_eps_0.01} shows the convergence results of \myalgdp and \soteriasgd for logistic regression with nonconvex regularization on the \dataset{a9a} dataset to reach $(10^{-2}, 10^{-3})$-LDP and $(10^{-1}, 10^{-3})$-LDP, respectively.
Under $(10^{-2}, 10^{-3})$-LDP,
which is a stricter privacy setting,
\myalgdp converges faster than \soteriasgd in test accuracy,
while converges slightly slower in train utility.
Under $(10^{-1}, 10^{-3})$-LDP,
\myalgdp performs slightly worse than \soteriasgd. Given that \myalgdp operates under the decentralized topology with much weaker information exchange,
the results highlight \myalgdp's communication efficiency by showing it can achieve similar performance as its server/client counterpart, i.e. \soteriasgd,
especially under strict privacy constraints.

\subsection{One-hidden-layer neural network training}
We evaluate \myalgdp by training a one-hidden layer neural network on the \dataset{MNIST} dataset \citep{lecun1995learning}.  
The network uses $64$ hidden neurons, sigmoid activation functions and cross-entropy loss,
where the loss function over a training pair $\{ \bm f, l \}$ is defined as 
\begin{align*}
\ell(\x; (\bm f, l)) = \mathsf{CrossEntropy}(\mathsf{softmax}(\bm W_2 ~\mathsf{sigmoid}( \bm W_1 \bm f + \bm c_1) + \bm c_2), l).
\end{align*}
Here, the model parameter is defined by $\x = \text{vec}(\bm W_1, \bm c_1, \bm W_2, \bm c_2)$,
where the dimensions of the network parameters $\bm W_1$, $\bm c_1$, $\bm W_2$, $\bm c_2$ are $64 \times 784$, $64 \times 1$, $10 \times 64$, and $10 \times 1$, respectively.

\begin{figure}[ht]
\begin{tabular}{cc}
\includegraphics[width=0.48\textwidth]{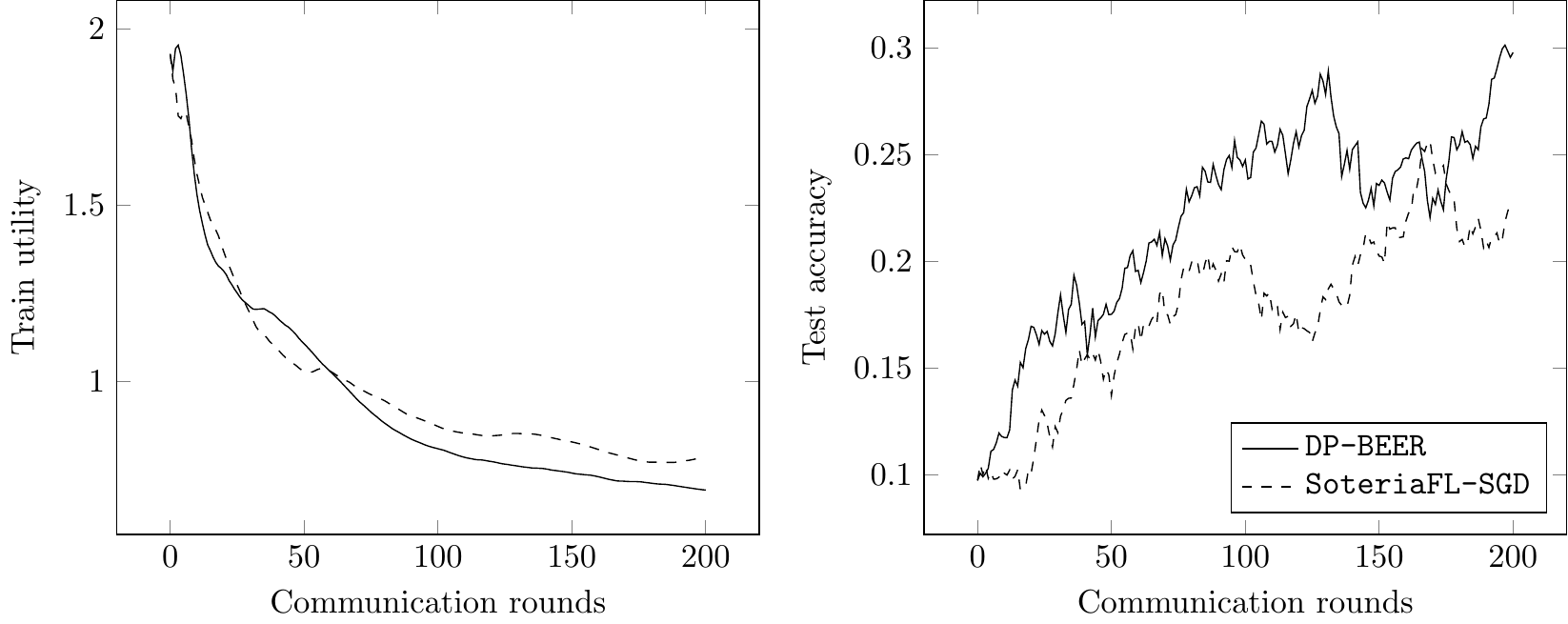} &\includegraphics[width=0.48\textwidth]{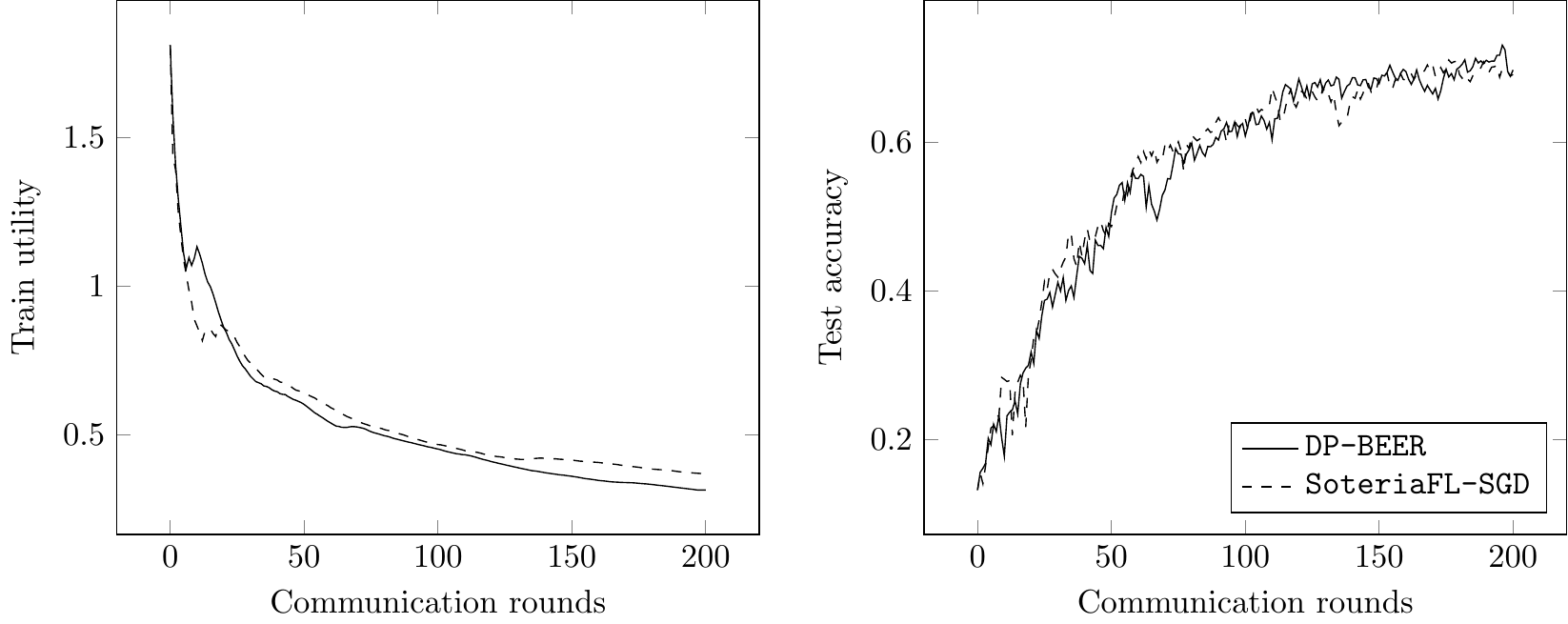} \\
(a) $(10^{-2}, 10^{-3})$-LDP & (b) $(10^{-1}, 10^{-3})$-LDP.
\end{tabular}
\caption[Convergence of \myalgdp for neural network training when guaranteeing $(10^{-2}, 10^{-3})$-LDP]{The train utility and test accuracy vs. communication rounds for training a one-hidden-layer neural network on the \dataset{MNIST} dataset when guaranteeing $(10^{-2}, 10^{-3})$-LDP and $(10^{-1}, 10^{-3})$-LDP, respectively.
Both \myalgdp and \soteriasgd employ $\text{random}_{2583}$ compression (cf. \Cref{eg:random_sparsification}). \label{fig:nn_eps_0.01}}
\end{figure}

\Cref{fig:nn_eps_0.01}
shows the convergence results of \myalgdp and \soteriasgd for training a one-hidden-layer neural network on the \dataset{MNIST} dataset to reach $(10^{-2}, 10^{-3})$-LDP and $(10^{-1}, 10^{-3})$-LDP, respectively.
Under both privacy settings,
\myalgdp converges at a similar rate as \soteriasgd in train utility.
However,
in terms of convergence in test accuracy,
\myalgdp outperforms \soteriasgd under the stricter $(10^{-2}, 10^{-3})$-LDP,
while the two algorithms have similar performance under the other setting.
This experiment again emphasizes \myalgdp's communication efficiency in comparison to the server/client algorithm \soteriasgd.
\section{Conclusions}
\label{sec:conclusions}

In this paper, we propose an algorithmic framework called \myalg, which incorporates practically-relevant gradient clipping and communication compression simultaneously in the design of decentralized nonconvex optimization algorithms. We propose two variants: \myalgdp and \myalgsgd. While they share a similar structure that makes use of gradient tracking, communication compression, and error feedback,
their focuses are on different perspectives motivated by applications in privacy preserving and neural network training, respectively.
\myalgdp adds privacy perturbation to clipped gradients to protect the local differential privacy of each agent,
with explicit utility and communication complexities.
\myalg applies gradient clipping to mini-batch stochastic gradients, which converges in minimum $\ell_2$ gradient norm at similar rate as centralized SGD without clipping under proper choices of hyperparameters.
The development of \myalg offers a simple analysis framework to understand gradient clipping in decentralized nonconvex optimization without bounded gradient assumptions, highlighting the potential of achieving both communication efficiency and privacy preserving in the decentralized framework.
  
\section*{Acknowledgements}
The work of B.~Li and Y.~Chi is supported in part by ONR under the grant N00014-19-1-2404,  by AFRL under FA8750-20-2-0504, and by NSF under the grants CCF-1901199, CCF-2007911 and CNS-2148212. B.~Li is also gratefully supported by the Wei Shen and Xuehong Zhang Presidential Fellowship at Carnegie Mellon University.

\newcommand{\etalchar}[1]{$^{#1}$}

\appendix
\section{Proof of Theorem \ref{theorem:privacy}}
\label{sec:proof_privacy}

This section proves \Cref{theorem:privacy} in the following steps:
1) define privacy loss and moment generating function,
2) define mechanisms and sub-mechanisms, and
3) bound the overall moment generating function and show the choice of perturbation variance satisfies all conditions.

\paragraph{Moment generating function.}
Let $o$ and $\text{aux}$ denote an outcome and an auxiliary input, respectively.
Then, we can define the privacy loss of an outcome $o$ on neighboring dataset $\bZ$ and $\bZ^{(i)}$ as
\begin{align}
    c(o; \cM, \text{aux}, \bZ, \bZ^{(i)})
= \log \frac{\P \big(\cM(\text{aux}, \bZ) = o \big)}{\P \big(\cM(\text{aux}, \bZ^{(i)}) = o \big)},
\notag
\end{align}
and its log moment generating functions as
\begin{align}
    \alpha_i^{\cM}(\lambda; \text{aux}, \bZ, \bZ^{(i)})
    &= \log \E_{o \sim \cM(\text{aux}, \bZ)} \big[ \exp \big( \lambda c(o; \cM, \text{aux}, \bZ, \bZ^{(i)}) \big) \big].
    \notag
\end{align}

Taking maximum over conditions,
the unconditioned log moment generating function is
\begin{align}
    {\hat\alpha}_i^{\cM} (\lambda)
    &= \max_{\text{aux}, \bZ, \bZ^{(i)}} \alpha_i^{\cM} (\lambda; \text{aux}, \bZ, \bZ^{(i)}) .
    \notag
\end{align}

\paragraph{Sub-mechanisms.}
\Cref{def:local_differential_privacy} defines the LDP mechanism,
but it is not enough to model decentralized algorithms.
To model the perturbation operation happens on agent $i$ at time $t$,
we define a sub-mechanism as $\cM_i^{(t)}: \cD \to \cR$, where $i \in [n], t \in [T]$,
which can be understood as the perturbation added on agent $i$ at time $t$.
In addition, we define another mechanism $\cC: \cR \to \cR$ to model the compression operator and $\cC \circ \cM_i^{(t)}$ to represent the full update at an agent,
and use $\cM$ to represent the full algorithm.

\paragraph{Proof of LDP.}
The overall log moment generating function for agent $i$ can be bounded using \cite[Lemma 2]{li2022soteriafl} as
\begin{align}
    \hat\alpha_i^{\cM}(\lambda)
    \leq \sum_{t=1}^T \hat\alpha_i^{\cC \circ \cM_i^{(t)}} (\lambda)
    \leq \sum_{t=1}^T \hat\alpha_i^{\cM_i^{(t)}} (\lambda)
    .
    \notag
\end{align}

Let $q = \frac 1m$ denote the probability each data sample is chosen.
For agent $i$ and $\lambda > 0$,
assume $q \leq \frac{\tau}{16 \sigma_p}$
and $\lambda \leq \frac{\sigma_p^2}{\tau^2} \log \frac{\tau}{q \sigma_p}$.
We can apply \cite[Lemma 3]{abadi2016deep} to bound each $\hat\alpha_i^{\cM_i^{(t)}}(\lambda)$
as
\begin{align}
    \hat\alpha^{\cM_i^{(t)}}(\lambda)
    \leq
    \frac{q^2 \lambda (\lambda + 1) \tau^2}{(1 - q) \sigma_p^2} + O\Big( \frac{q^3 \lambda^3 \tau^3}{\sigma_p^3} \Big)
    = O\Big( \frac{q^2 \lambda^2 \tau^2}{\sigma_p^2}  \Big) .
    \notag
\end{align}

To conclude the proof,
we can verify there exists some $\lambda$ that satisfies the following inequalities
when choosing $\sigma_p = \frac{\tau q \sqrt{T \log(1 / \delta)}}{\epsilon}$ and $q = \frac 1m$, namely,
\begin{align*}
    & \Big( \frac{T q \tau \lambda}{\sigma_p} \Big)^2
    \leq \frac{\lambda \epsilon}{2},
    \\
    & \exp(-\lambda \epsilon / 2)
    \leq \delta, \\
    & \lambda
    \leq \frac{\sigma_p^2}{\tau^2} \log \frac{\tau}{q \sigma_p}.
\end{align*}

\section{Proof of Theorem \ref{theorem:convergence_witht_privacy_with_bounded_gradient}}
\label{proof:convergence_witht_privacy_with_bounded_gradient}

This section proves \Cref{theorem:convergence_witht_privacy_with_bounded_gradient} in the following $4$ subsections: 
\Cref{sub:with_bounded_gradient_function_value_descent} derives the descent inequality,
\Cref{sub:with_bounded_gradient_var_consensus,sub:with_bounded_gradient_grad_consensus} create two linear systems to bound the sum of consensus errors in the descent inequality,
and finally \Cref{sub:with_bounded_gradient_convergence_rate} specifies hyper parameters to obtain convergence rate.

With \Cref{assumption:bounded_stochastic_gradient},
we can skip the compression operator,
in \myalgdp.
To reuse this section's results in later analysis,
we assume \Cref{assumption:bounded_variance} in deriving descent lemma and linear systems,
and lift this assumption when computing convergence rate in \Cref{sub:with_bounded_gradient_convergence_rate} using $\sigma_g \leq 2 \tau$.

Denote
$$\bG^{(t)} = \frac{1}{b} \sum_{\bZ \in \cZ^{(t)}} \nabla \ell (\bX^{(t-1)}; \bZ ).$$

\subsection{Function value descent}
\label{sub:with_bounded_gradient_function_value_descent}
Using Taylor expansion,
and taking expectation conditioned on time $t$,
\begin{align}
    \E_{t} \big[ f(\bbx^{(t + 1)}) - f(\bbx^{(t)}) \big]
    &\leq \E_{t} \langle \nabla f(\bbx^{(t)}), -\eta \bbv^{(t+1)} \rangle
    + \frac L2 \E_{t} \big\| \eta \bbv^{(t+1)} \big\|_2^2 \notag \\
    &= -\eta \langle \nabla f(\bbx^{(t)}),  \E_{t}[\bbv^{(t+1)}] \rangle
    + \frac{\eta^2 L}{2}  \E_t \big\| \bbv^{(t+1)} \big\|_2^2 \notag \\
    &= -\eta \langle \nabla f(\bbx^{(t)}),  \E_{t}[\bbg_\tau^{(t+1)} + \bbe^{(t+1)}] \rangle
    + \frac{\eta^2 L}{2}  \E_t \big\| \bbv^{(t+1)} \big\|_2^2 , \notag
\end{align}
where the last equality is due to $\bbv^{(t)} = \bbg_p^{(t)}$ that can be proved by induction.

Because $\E_t[\be_i^{(t)}] = \bm0_d$ and stochastic gradients are unbiased,
\begin{align}
    &~~~~ \E_{t} \big[ f(\bbx^{(t + 1)}) - f(\bbx^{(t)}) \big] \notag \\
    &= -\eta \langle \nabla f(\bbx^{(t)}),  \nabla F(\bX^{(t)}) \ave \rangle
    + \frac{\eta^2 L}{2} \E_t \big\| \bbv^{(t+1)} \big\|_2^2 \notag \\
    &= \frac{ \eta}{2} \Big( \big\| \nabla f(\bbx^{(t)}) - \nabla F(\bX^{(t)}) \ave \big\|_2^2 - \big\| \nabla f(\bbx^{(t)}) \big\|_2^2 - \big\| \nabla F(\bX^{(t)}) \ave \big\|_2^2 \Big)
    + \frac{\eta^2 L}{2}  \E_t \big\| \bbv^{(t+1)} \big\|_2^2 \notag \\
    &\leq -\frac{\eta}{2} \big\| \nabla f(\bbx^{(t)}) \big\|_2^2
    +\frac{\eta L^2}{2n} \big\| \bX^{(t)} - \bbx^{(t)} \one_n^\top \big\|_\F^2
    + \frac{\eta^2 L}{2}  \E_t \big\| \bbv^{(t+1)} \big\|_2^2
    - \frac{\eta}{2}  \big\| \nabla F(\bX^{(t)}) \ave \big\|_2^2,
    \label{eq:proof_1_descent_1}
\end{align}
where the last inequality is due to \Cref{assumption:smoothness}.

Let $\Delta = \E \big[ f(\bbx^{(0)}) \big] - f^\opt$.
Take full expectation and average \eqref{eq:proof_1_descent_1} over $t=1, \hdots , T$,
the expected utility can be bounded by
\begin{align}
    \frac1T \sum_{t=1}^T \E \big\| \nabla f(\bbx^{(t)}) \big\|_2^2
    &\leq \frac{2 \Delta}{\eta T}
    + \frac{1}{T} \cdot \frac{L^2}{n} \sum_{t=1}^T \E \big\| \bX^{(t)} - \bbx^{(t)}\one_n^\top \big\|_\F^2 \notag \\
    &~~~~ + \frac1T \cdot \eta L \sum_{t=1}^T \E \big\| \bbv^{(t+1)} \big\|_2^2
    - \frac1T \sum_{t=1}^T \E \big\| \nabla F(\bX^{(t)}) \ave \big\|_2^2 .
    \label{eq:proof_1_descent_2}
\end{align}

\subsection{Sum of variable consensus errors}
\label{sub:with_bounded_gradient_var_consensus}

This subsection creates a linear system to bound $\sum_{t=1}^T \E \big\| \bX^{(t)} - \bbx^{(t)}\one_n^\top \big\|_\F^2$ by
$\sum_{t=1}^T \E \big\| \bV^{(t)} - \bbv^{(t)}\one_n^\top \big\|_\F^2$ and $\sum_{t=1}^T \E \big\| \bbv^{(t)} \big\|_2^2$.
To simplify notations, let $\widehat\bW = \bI_n + \gamma(\bW - \bI_n)$,
and denote the mixing rate of $\widehat\bW$ by $\hat\alpha = \big\| \widehat\bW - \mean \big\|_{\mathrm {op}}$.
\Cref{lemma:mixing_rate_of_regularized_mixing_matrix} analyzes the mixing rate of the regularized mxing matrix.
\begin{lemma}[Mixing rate of regularized mixing matrix]
\label{lemma:mixing_rate_of_regularized_mixing_matrix}
Assuming $0 < \gamma \leq 1$.
The mixing rate of $\widehat\bW$ can be bounded as
\begin{align}
\hat\alpha\leq 1 + \gamma (\alpha - 1) .
\end{align}
\end{lemma}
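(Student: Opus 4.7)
The plan is to expand $\widehat{\bW} - \tfrac1n \bm1_n\bm1_n^\top$ as a convex combination of two matrices whose operator norms are already under control, and then conclude by the triangle inequality.

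First, I would rewrite
\begin{align*}
\widehat\bW - \tfrac1n \bm1_n\bm1_n^\top
&= \bI_n + \gamma(\bW - \bI_n) - \tfrac1n \bm1_n\bm1_n^\top \\
&= (1-\gamma)\bigl(\bI_n - \tfrac1n \bm1_n\bm1_n^\top\bigr) + \gamma\bigl(\bW - \tfrac1n \bm1_n\bm1_n^\top\bigr),
\end{align*}
which is a direct algebraic identity (the $\tfrac1n \bm1_n\bm1_n^\top$ term reassembles because $(1-\gamma)+\gamma = 1$). Since $0 < \gamma \le 1$, both coefficients $1-\gamma$ and $\gamma$ are nonnegative.

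Next, I would invoke the triangle inequality for the operator norm and use two facts: (i) $\bigl\|\bI_n - \tfrac1n\bm1_n\bm1_n^\top\bigr\|_{\mathsf{op}} = 1$ because this is the orthogonal projection onto the subspace perpendicular to $\bm1_n$, and (ii) $\bigl\|\bW - \tfrac1n\bm1_n\bm1_n^\top\bigr\|_{\mathsf{op}} = \alpha$ by the definition in \eqref{eq:def_alpha0}. Combining these,
\begin{align*}
\hat\alpha \;\le\; (1-\gamma)\cdot 1 + \gamma \cdot \alpha \;=\; 1 + \gamma(\alpha - 1),
\end{align*}
which gives the claimed bound.

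I do not anticipate any serious obstacle here: the only subtlety is noticing the correct convex-combination splitting so that the two resulting matrices are exactly the canonical centering projector and the original deviation matrix $\bW - \tfrac1n\bm1_n\bm1_n^\top$. The non-symmetry of $\bW$ (allowed by the FDLA discussion) does not affect the argument, since we work throughout with the operator norm rather than spectral radius.
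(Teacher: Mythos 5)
Your proof is correct, but it takes a genuinely different route from the paper's. The paper diagonalizes: it writes the eigenvalues of $\widehat\bW$ as $1+\gamma(\lambda_i-1)$ in terms of the eigenvalues $\lambda_i$ of $\bW$, identifies $\hat\alpha$ with $\max\{|1+\gamma(\lambda_2-1)|,\,|1+\gamma(\lambda_n-1)|\}$, and bounds this by $|1-\gamma|+\gamma\max\{|\lambda_2|,|\lambda_n|\}$. That argument implicitly assumes $\bW$ is symmetric (or at least normal with real spectrum), since otherwise the operator norm of $\bW-\tfrac1n\bm1_n\bm1_n^\top$ need not coincide with the largest non-unit eigenvalue magnitude --- which sits somewhat awkwardly with the paper's stated intention of allowing non-symmetric mixing matrices. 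Your convex-combination identity
\begin{align*}
\widehat\bW - \tfrac1n \bm1_n\bm1_n^\top
= (1-\gamma)\bigl(\bI_n - \tfrac1n \bm1_n\bm1_n^\top\bigr) + \gamma\bigl(\bW - \tfrac1n \bm1_n\bm1_n^\top\bigr),
\end{align*}
followed by the triangle inequality, uses only $0<\gamma\le 1$, the fact that the centering matrix is an orthogonal projection (norm $\le 1$), and the definition of $\alpha$ as an operator norm. It is therefore both more elementary and strictly more general, covering the non-symmetric case without modification. The only thing the eigenvalue route could offer in exchange --- an exact expression for $\hat\alpha$ in the symmetric case --- is not used anywhere in the paper, so nothing is lost.
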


\begin{proof}[Proof of \Cref{lemma:mixing_rate_of_regularized_mixing_matrix}]
Let $\lambda_1 = 1 > \lambda_2 \geq \hdots \geq \lambda_n > -1$ denote the eigenvalues of $\bW$.
Corresponding eigenvalues of $\widehat\bW$ are $1 + \gamma(\lambda_i - 1)$, $i = 1, \hdots, n$.

The mixing rate of $\widehat\bW$ is
\begin{align*}
\hat\alpha
& = \max \big\{\big| 1 + \gamma(\lambda_2 - 1) \big|, \big|1 + \gamma(\lambda_n - 1) \big| \big\} \\
& \leq \max \big\{ |1 - \gamma| + \gamma | \lambda_2 |, |1 - \gamma| + \gamma |\lambda_n | \big\} \\
&= 1 + \gamma (\alpha - 1).
\end{align*}
\end{proof}

\subsubsection{Variable consensus error}
Take expectation conditioned on time $t$,
and use Young's inequality,
the variable consensus error can be bounded as
\begin{align}
    &~~~~ \E_t \big\| \bX^{(t+1)} - \bbx^{(t+1)}\one_n^\top \big\|_\F^2 \notag \\
    &= \E_t \Big\| \Big( \bX^{(t)} + \gamma \bQ_x^{(t+1)} (\bW - \bI_n) - \eta \bV^{(t+1)} \Big) \big(\bI_n - \mean \big) \Big\|_\F^2 \notag \\
    &= \E_t \Big\|
    \big(\bX^{(t)} - \bbx^{(t)} \one_n^\top \big)\big( \widehat\bW - \mean \big)
    + \gamma (\bQ_x^{(t+1)} - \bX^{(t)}) (\bW - \bI_n)
- \eta \bV^{(t+1)} \big(\bI_n - \mean \big) \Big\|_\F^2 \notag \\
    &\leq \frac{2}{1 + \hat\alpha^2} \big\| \big(\bX^{(t)} - \bbx^{(t)} \one_n^\top \big) \big( \widehat\bW - \mean \big) \big\|_\F^2 \notag \\
    &~~~~+ \frac{2}{1 - \hat\alpha^2} \E_t \Big\| \gamma (\bQ_x^{(t+1)} - \bX^{(t)}) (\bW - \bI_n) - \eta \bV^{(t+1)} \big(\bI_n - \mean \big) \Big\|_\F^2 \notag \\
    &\leq \frac{2}{1 + \hat\alpha^2} \big\| \big(\bX^{(t)} - \bbx^{(t)} \one_n^\top \big) \big( \widehat\bW - \mean \big) \big\|_\F^2 \notag \\
    &~~~~+ \frac{4}{1 - \hat\alpha^2} \E_t \Big\| \gamma (\bQ_x^{(t+1)} - \bX^{(t)}) (\bW - \bI_n) \Big\|_\F^2
    + \frac{4}{1 - \hat\alpha^2} \E_t \Big\| \eta \bV^{(t+1)} \big(\bI_n - \mean \big) \Big\|_\F^2 \notag \\
    &\overset{(\text{i})}{\leq} \frac{2\hat\alpha^2}{1 + \hat\alpha^2} \big\| \bX^{(t)} - \bbx^{(t)}\one_n^\top \big\|_\F^2
    + \frac{16 \gamma^2}{1 - \hat\alpha^2} \E_t \big\| \bQ_x^{(t+1)} - \bX^{(t)} \big\|_\F^2
    + \frac{4 \eta^2}{1 - \hat\alpha^2} \E_t \big\| \bV^{(t+1)} \ - \bbv^{(t+1)}\one_n^\top \big\|_\F^2 \notag \\
    &\overset{(\text{ii})}{\leq} \hat\alpha \big\| \bX^{(t)} - \bbx^{(t)}\one_n^\top \big\|_\F^2
    +  \frac{16 (1 - \rho) \gamma^2}{1 - \hat\alpha} \big\| \bQ_x^{(t)} - \bX^{(t)} \big\|_\F^2
    + \frac{4 \eta^2}{1 - \hat\alpha} \E_t \big\| \bV^{(t+1)} \ - \bbv^{(t+1)}\one_n^\top \big\|_\F^2 ,
    \label{eq:proof_1_consensus}
\end{align}
where (i) is obtained by $\| \bW - \bI_n \|_{\mathrm{op}} \leq 2$,
(ii) uses $2 \hat\alpha \leq 1 + \hat\alpha^2$, $1 - \hat\alpha \leq 1 - \hat\alpha^2$ and \Cref{definition:general_compression_operator}.

\subsubsection{Variable quantization error}
Assume $\gamma$ satisfies the following inequality (which will be verified in \Cref{sub:with_bounded_gradient_convergence_rate})
\begin{align}
    \gamma^2 \leq \frac{\rho^2}{96(1 - \rho)} .
    \label{eq:proof_1_gamma_1}
\end{align}

Taking expectation conditioned on time $t$,
the variable quantization error can be decomposed and bounded as
\begin{align}
    &~~~~ \E_t \big\| \bQ_x^{(t+1)} - \bX^{(t+1)} \big\|_\F^2 \notag \\
    &= \E_t \big\| \bQ_x^{(t)} + \cC (\bX^{(t)} - \bQ_x^{(t)}) - \bX^{(t+1)} \big\|_\F^2 \notag \\
    &= \E_t \big\| \cC (\bX^{(t)} - \bQ_x^{(t)}) - (\bX^{(t)} - \bQ_x^{(t)}) - (\bX^{(t+1)} - \bX^{(t)}) \big\|_\F^2 \notag \\
    &\overset{(\text{i})}{\leq} \frac{2}{1 + (1 - \rho)} \E_t \big\| \cC (\bX^{(t)} - \bQ_x^{(t)}) - (\bX^{(t)} - \bQ_x^{(t)}) \big\|_\F^2
    + \frac{2}{1 - (1 - \rho)} \E_t \big\| \bX^{(t+1)} - \bX^{(t)} \big\|_\F^2 \notag \\
    &\overset{(\text{ii})}{\leq} \frac{2(1 - \rho)}{1 + (1 - \rho)} \big\| \bX^{(t)} - \bQ_x^{(t)} \big\|_\F^2
    + \frac{2}{\rho} \E_t \big\| \bX^{(t+1)} - \bX^{(t)} \big\|_\F^2 \notag \\
    &= \frac{2(1 - \rho)}{1 + (1 - \rho)} \big\| \bX^{(t)} - \bQ_x^{(t)} \big\|_\F^2
    \notag\\
    &~~~~ + \frac{2}{\rho} \E_t \big\| \gamma (\bQ_x^{(t+1)} - \bX^{(t)}) (\bW - \bI_n) + \gamma (\bX^{(t)} - \bbx^{(t)}\one_n^\top) (\bW - \bI_n) - \eta \bV^{(t+1)} \big\|_\F^2 \notag \\
    &\overset{(\text{iii})}{\leq} \Big(1 - \frac\rho2\Big) \big\| \bX^{(t)} - \bQ_x^{(t)} \big\|_\F^2
    + \frac{24 \gamma^2}{\rho} \E_t \big\| \bQ_x^{(t+1)} - \bX^{(t)} \big\|_\F^2
    + \frac{24 \gamma^2}{\rho} \big\| \bX^{(t)} - \bbx^{(t)}\one_n^\top \big\|_\F^2
    + \frac{6 \eta^2}{\rho} \E_t \big\| \bV^{(t+1)} \big\|_\F^2 \notag \\
    &\leq \Big( 1 - \frac\rho2 + \frac{24 (1 - \rho) \gamma^2}{\rho} \Big) \big\| \bQ_x^{(t)} - \bX^{(t)} \big\|_\F^2
    + \frac{24 \gamma^2}{\rho} \big\| \bX^{(t)} - \bbx^{(t)}\one_n^\top \big\|_\F^2
    + \frac{6 \eta^2}{\rho} \E_t \big\| \bV^{(t+1)} \big\|_\F^2
    \notag\\
    &\overset{(\text{iv})}{\leq} \Big( 1 - \frac\rho4 \Big) \big\| \bQ_x^{(t)} - \bX^{(t)} \big\|_\F^2
    + \frac{24 \gamma^2}{\rho} \big\| \bX^{(t)} - \bbx^{(t)}\one_n^\top \big\|_\F^2
    + \frac{6 \eta^2}{\rho} \E_t \big\| \bV^{(t+1)} \big\|_\F^2,
    \label{eq:proof_1_grad_consensus}
\end{align}
where (i) is obtained by applying Young's inequality,
(ii) uses \Cref{definition:general_compression_operator},
(iii) uses the fact $\big\| \bW - \bI_n \big\|_{\text{op}} \leq 2$,
and (iv) uses \eqref{eq:proof_1_gamma_1}.

\subsubsection{Linear system}
\label{sub:proof_1_variable_linear_system}

Let $\be_1^{(t)} =
\begin{bmatrix}
\big\| \bX^{(t)} - \bbx^{(t)}\one_n^\top \big\|_\F^2 \\
\big\| \bQ_x^{(t)} - \bX^{(t)} \big\|_\F^2
\end{bmatrix}$,
we can take full expectation and rewrite \eqref{eq:proof_1_consensus} and \eqref{eq:proof_1_grad_consensus} in matrix form as
\begin{align}
    \E [\be_1^{(t+1)}]
    &\leq
    \begin{bmatrix}
        \hat\alpha & \frac{16 (1 - \rho) \gamma^2}{1 - \hat\alpha} \\
        \frac{24 \gamma^2}{\rho} & 1 - \frac\rho4
    \end{bmatrix}
    \E [\be_1^{(t)}]
    +
    \begin{bmatrix}
        \frac{4 \eta^2}{1 - \hat\alpha} \E \big\| \bV^{(t+1)} \ - \bbv^{(t+1)}\one_n^\top \big\|_\F^2 \\
        \frac{6 \eta^2}{\rho} \E \big\| \bV^{(t+1)} \big\|_\F^2
    \end{bmatrix}
    \notag \\
    &:= \bG_1 \E [\be_1^{(t)}] + \bb_1^{(t)} .
    \label{eq:proof_1_consensus_error_linear_system_1}
\end{align}

We can compute $( \bI_n - \bG_1 )^{-1}$ and verify all its entries are positive:
\begin{align}
    (\bI_n - \bG_1)^{-1}
    &=
    \frac{1}{(1- \hat\alpha) \cdot \frac\rho4 - \frac{16 (1 - \rho) \gamma^2}{1 - \hat\alpha} \cdot \frac{24 \gamma^2}{\rho}}
    \begin{bmatrix}
        \frac\rho4 & \frac{16 (1 - \rho) \gamma^2}{1 - \hat\alpha} \\
        \frac{24 \gamma^2}{\rho} & 1 - \hat\alpha
    \end{bmatrix} \notag \\
    &\leq
    \frac{1}{\frac18 (1- \hat\alpha) \rho}
    \begin{bmatrix}
        \frac\rho4 & \frac{16 (1 - \rho) \gamma^2}{1 - \hat\alpha} \\
        \frac{24 \gamma^2}{\rho} & 1 - \hat\alpha
    \end{bmatrix},
    \label{eq:proof_1_variable_linear_system_inverse}
\end{align}
where we assume the following inequality to to prove \eqref{eq:proof_1_variable_linear_system_inverse}, which will be validated in \Cref{sub:with_bounded_gradient_convergence_rate}:
\begin{align}
    &(1 - \hat\alpha) \cdot \frac\rho4 - \frac{16 (1 - \rho) \gamma^2}{1 - \hat\alpha} \cdot \frac{24 \gamma^2}{\rho}
    \geq \frac{1}{8} (1 - \hat\alpha)\rho .
    \label{eq:proof_1_gamma_2}
\end{align}

Sum expected error vectors $\E[\be_1^{(t)}]$ over $t = 1, \hdots, T$,
\begin{align}
    \sum_{t=1}^T \E[\be_1^{(t)}]
    &\leq \sum_{t=1}^T ( \bG_1 \E[\be_1^{(t-1)}] + \bb_1^{(t-1)}) \notag \\
    &\leq \bG_1 \sum_{t=1}^T \E[\be_1^{(t)}]
    + \bG_1 \E[\be_1^{(0)}]
    + \sum_{t=1}^T \bb_1^{(t-1)} \notag .
\end{align}

Reorganize terms, multiply $(\bI_n - \bG_1)^{-1}$ on both sides and use $\be_1^{(0)} = \bm0_2$,
the sum of error vectors can be bounded as
\begin{align}
    \sum_{t=1}^T \E[\be_1^{(t)}]
    &\leq (\bI_n - \bG_1)^{-1} \sum_{t=0}^{T-1} \bb_1^{(t)}.
\end{align}

The sum of consensus error can be computed as
\begin{align}
    &~~~~ \sum_{t=1}^T \E \big\| \bX^{(t)} - \bbx^{(t)}\one_n^\top \big\|_\F^2 \notag \\
    &\leq
    \begin{bmatrix}
        1 & 0
    \end{bmatrix}
    (\bI_n - \bG)^{-1} \sum_{t=0}^{T-1} \bb_1^{(t)} \notag \\
    &= \frac{1}{\frac{1}{8}(1 - \hat\alpha) \rho}
    \begin{bmatrix}
        \frac\rho4 & \frac{16 (1 - \rho) \gamma^2}{1 - \hat\alpha}
    \end{bmatrix}
    \begin{bmatrix}
        \frac{4 \eta^2}{1 - \hat\alpha} \sum_{t=1}^T \E \big\| \bV^{(t)} \ - \bbv^{(t)}\one_n^\top \big\|_\F^2 \\
        \frac{6 \eta^2}{\rho} \sum_{t=1}^T \E \big\| \bV^{(t)} \big\|_\F^2
    \end{bmatrix} \notag \\
    &= \frac{\eta^2}{\frac{1}{8}(1 - \hat\alpha) \rho} \Big( \frac{\rho}{1 - \hat\alpha} \sum_{t=1}^T  \E \big\| \bV^{(t)} - \bbv^{(t)}\one_n^\top \big\|_\F^2
    + \frac{16 (1 - \rho) \gamma^2}{1 - \hat\alpha} \cdot \frac{6}{\rho} \sum_{t=1}^T \E \big\| \bV^{(t)} \big\|_\F^2 \Big) \notag \\
    &\overset{(\text{i})}{=} \frac{8 \eta^2}{(1 - \hat\alpha) \rho} \Big( \frac{\rho}{1 - \hat\alpha} + \frac{96 (1 - \rho) \gamma^2}{(1 - \hat\alpha) \rho} \Big) \sum_{t=1}^T  \E \big\| \bV^{(t)} \ - \bbv^{(t)}\one_n^\top \big\|_\F^2
    + \frac{768 (1 - \rho) \gamma^2 \eta^2}{(1 - \hat\alpha)^2\rho^2} \sum_{t=1}^T n \E \big\| \bbv^{(t)} \big\|_\F^2 \notag \\
    &\overset{(\text{ii})}{\leq} \frac{16 \eta^2}{(1 - \hat\alpha)^2} \sum_{t=1}^T  \E \big\| \bV^{(t)} \ - \bbv^{(t)}\one_n^\top \big\|_\F^2
    + \frac{768 (1 - \rho) \gamma^2 \eta^2}{(1 - \hat\alpha)^2\rho^2} \sum_{t=1}^T n \E \big\| \bbv^{(t)} \big\|_\F^2,
    \label{eq:proof_1_consensus_error_4}
\end{align}
where we use the equality
$\big\| \bV^{(t)} \big\|_\F^2 = \big\| \bV^{(t)} - \bbv^{(t)}\one_n^\top \big\|_\F^2 + n \| \bbv^{(t)} \|_2^2$ for (i) and use \eqref{eq:proof_1_gamma_1} for (ii).

\subsection{Sum of gradient consensus errors}
\label{sub:with_bounded_gradient_grad_consensus}

This section creates a linear system to bound the sum of gradient consensus error $\sum_{t=1}^T \E \big\| \bV^{(t)} - \bbv^{(t)} \one_n^\top \big\|_\F^2$ by $\sum_{t=1}^T \E \big\| \bbv^{(t)} \big\|_2^2$ and constant terms.

\subsubsection{Gradient consensus error}

Take expectation conditioned on time $t$ and reorganize terms,
the gradient consensus error can be expanded as
\begin{align}
    &~~~~ \E_t \big\| \bV^{(t+1)} - \bbv^{(t+1)}\one_n^\top \big\|_\F^2 \notag \\
    &= \E_t \Big\| \Big( \bV^{(t)} + \gamma \bQ_v^{(t+1)}(\bW - \bI_n) + \bG_p^{(t+1)} - \bG_p^{(t)} \Big) \big(\bI_n - \mean \big) \Big\|_\F^2 \notag \\
    &= \E_t \Big\|
    \big(\bV^{(t)} - \bbv^{(t)} \one_n^\top\big) \big( \widehat\bW - \mean \big)
    + \gamma \big(\bQ_v^{(t+1)} - \bV^{(t)}\big) (\bW - \bI_n)
    + \big( \bG_p^{(t+1)} - \bG_p^{(t)} \big) \big(\bI_n - \mean \big)
    \Big\|_\F^2. \notag 
\end{align}

Then, take full expectation,
use the update formula and Young's inequality similarly to \eqref{eq:proof_1_consensus},
\begin{align}
    &~~~~ \E \big\| \bV^{(t+1)} - \bbv^{(t+1)}\one_n^\top \big\|_\F^2 \notag \\
    &\leq \frac{2\hat\alpha^2}{1 + \hat\alpha^2} \big\| \bV^{(t)} - \bbv^{(t)} \one_n^\top \big\|_\F^2
    \notag\\
    &~ + \frac{2}{1 - \hat\alpha^2} \E \Big\| \gamma \big(\bQ_v^{(t+1)} - \bV^{(t)}\big) (\bW - \bI_n)
    + \big( \bG_p^{(t+1)} - \bG_p^{(t)} \big) \big(\bI_n - \mean \big) \Big\|_\F^2 \notag \\
    &\leq \hat\alpha \big\| \bV^{(t)} - \bbv^{(t)} \one_n^\top \big\|_\F^2
    + \frac{4}{1 - \hat\alpha} \E \big\| \gamma \big(\bQ_v^{(t+1)} - \bV^{(t)}\big) (\bW - \bI_n) \big\|_\F^2
    \notag\\
    &~~~~ + \frac{4}{1 - \hat\alpha} \E \big\| \big(\bG_p^{(t+1)} - \bG_p^{(t)}\big)\big(\bI_n - \mean \big) \big\|_\F^2
    \notag \\
    &\overset{(\text{i})}{\leq} \hat\alpha \big\| \bV^{(t)} - \bbv^{(t)} \one_n^\top \big\|_\F^2
    + \frac{16 \gamma^2}{1 - \hat\alpha} \E \big\| \bQ_v^{(t+1)} - \bV^{(t)} \big\|_\F^2
    + \frac{4}{1 - \hat\alpha} \E \big\| \bG_p^{(t+1)} - \bG_p^{(t)} \big\|_\F^2
    \notag \\
    &\overset{(\text{ii})}{\leq} \hat\alpha \big\| \bV^{(t)} - \bbv^{(t)} \one_n^\top \big\|_\F^2
    + \frac{16 (1 - \rho) \gamma^2}{1 - \hat\alpha} \E \big\| \bQ_v^{(t)} - \bV^{(t)} \big\|_\F^2
    + \frac{16 n (\tau^2 + \sigma_p^2 d)}{1 - \hat\alpha} ,
    \label{eq:proof_1_gradient_consensus_1}
\end{align}
where (i) is proved using the facts $\big\| \bW - \bI_n \big\|_{\mathsf{op}} \leq 2$ and $\big\| \bI_n - \mean \big\|_{\mathsf{op}} \leq 1$,
(ii) is due to \Cref{definition:general_compression_operator} and
\begin{align}
    \E \big\| \bG_p^{(t+1)} - \bG_p^{(t)} \big\|_\F^2
    &\leq 2 \E \big\| \bG_p^{(t+1)} \big\|_\F^2 + 2 \E \big\| \bG_p^{(t)} \big\|_\F^2
    \notag\\
    &= 2 \big( \E \big\| \bG_\tau^{(t+1)} \big\|_\F^2 + n \sigma_p^2 d \big) + 2 \big( \E \big\| \bG_\tau^{(t)} \big\|_\F^2 + n \sigma_p^2 d \big)
    \notag\\
    &\leq 4n (\tau^2 + \sigma_p^2 d) .
    \label{eq:proof_1_g_p}
\end{align}

\subsubsection{Gradient quantization error}
\begin{align}
    \E_t \big\| \bQ_v^{(t+1)} - \bV^{(t+1)} \big\|_\F^2
    &= \E_t \big\| (\bQ_v^{(t+1)} - \bV^{(t)}) - (\bV^{(t+1)} - \bV^{(t)}) \big\|_\F^2 \notag \\
    &\leq \frac{2}{1 + (1 - \rho)} \E_t \big\| \bQ_v^{(t+1)} - \bV^{(t)} \big\|_\F^2
    + \frac{2}{1 - (1 - \rho)} \E_t \big\| \bV^{(t+1)} - \bV^{(t)} \big\|_\F^2 \notag \\
    &\leq \frac{2 (1 - \rho)}{2 - \rho} \big\| \bQ_v^{(t)} - \bV^{(t)} \big\|_\F^2
    + \frac{2}{\rho} \E_t \big\| \gamma \bQ_v^{(t+1)} (\bW - \bI_n) + \bG_p^{(t+1)} - \bG_p^{(t)} \big\|_\F^2 \notag \\
    &\leq \frac{2 (1 - \rho)}{2 - \rho} \big\| \bQ_v^{(t)} - \bV^{(t)} \big\|_\F^2
    + \frac{6 \gamma^2}{\rho} \E_t \big\| (\bQ_v^{(t+1)} - \bV^{(t)}) (\bW - \bI_n) \big\|_\F^2 \notag \\
    &~~~~ + \frac{6 \gamma^2}{\rho} \big\| \bV^{(t)} (\bW - \bI_n) \big\|_\F^2
    + \frac{6}{\rho} \E_t \big\| \bG_p^{(t+1)} - \bG_p^{(t)} \big\|_\F^2 \notag \\
    &\overset{\text{(i)}}{\leq} \Big(1 - \frac\rho2 + \frac{24 \gamma^2 (1 - \rho)}{\rho} \Big) \big\| \bQ_v^{(t)} - \bV^{(t)} \big\|_\F^2
    + \frac{24 \gamma^2}{\rho} \big\| \bV^{(t)} - \bbv^{(t)}\one_n^\top \big\|_\F^2
    + \frac{24 n (\tau^2 + \sigma_p^2 d)}{\rho} \notag\\
    &\overset{\text{(ii)}}{\leq} \Big(1 - \frac\rho4 \Big) \big\| \bQ_v^{(t)} - \bV^{(t)} \big\|_\F^2
    + \frac{24 \gamma^2}{\rho} \big\| \bV^{(t)} - \bbv^{(t)}\one_n^\top \big\|_\F^2
    + \frac{24 n (\tau^2 + \sigma_p^2 d)}{\rho} ,
    \label{eq:proof_1_gradient_consensus_2}
\end{align}
where we use \eqref{eq:proof_1_g_p} and the fact $\frac{2 (1 - \rho)}{2 - \rho} = 1 - \frac{\rho}{2 - \rho} \geq 1 - \frac\rho2$ when $\rho \geq 0$ to reach (i) and use \eqref{eq:proof_1_gamma_1} to reach (ii).

\subsubsection{Linear system}

Let $\be_2^{(t)} =
\begin{bmatrix}
\big\| \bV^{(t)} - \bbv^{(t)}\one_n^\top \big\|_\F^2 \\
\big\| \bQ_v^{(t)} - \bV^{(t)} \big\|_\F^2
\end{bmatrix}$.
We can write \eqref{eq:proof_1_gradient_consensus_1} and \eqref{eq:proof_1_gradient_consensus_2} in matrix form as
\begin{align}
    \E [\be_2^{(t+1)}]
    &\leq
    \begin{bmatrix}
        \hat\alpha & \frac{16 (1 - \rho) \gamma^2}{1 - \hat\alpha} \\
        \frac{24 \gamma^2}{\rho} & 1 - \frac\rho4
    \end{bmatrix}
    \E [\be_2^{(t)}]
    +
    \begin{bmatrix}
        \frac{16 n (\tau^2 + \sigma_p^2 d)}{1 - \hat\alpha} \\
        \frac{24 n (\tau^2 + \sigma_p^2 d)}{\rho} \\
    \end{bmatrix} \notag \\
    &:= \bG_2 \E [\be_2^{(t)}] + \bb_2^{(t)} \notag .
\end{align}

Because $\bG_2 = \bG_1$,
we can use the same argument as in \Cref{sub:proof_1_variable_linear_system},
and use \eqref{eq:proof_1_variable_linear_system_inverse} to prove
\begin{align}
    \sum_{t=1}^T \E \big\| \bV^{(t)} - \bbv^{(t)} \one_n^\top \big\|_\F^2
    &\leq
    \begin{bmatrix}
        1 & 0
    \end{bmatrix}
    (\bI_n - \bG_2)^{-1} \Big( \E [\be_2^{(0)}] + \sum_{t=0}^{T-1} \bb_2^{(t)} \Big) \notag \\
    &\leq \frac{1}{\frac{1}{8}(1 - \hat\alpha) \rho}
    \begin{bmatrix}
        \frac\rho4 & \frac{16 (1 - \rho) \gamma^2}{1 - \hat\alpha}
    \end{bmatrix}
    \begin{bmatrix}
        \frac{16 T n (\tau^2 + \sigma_p^2 d)}{1 - \hat\alpha} \\
        \frac{24 T n (\tau^2 + \sigma_p^2 d)}{\rho} \\
    \end{bmatrix} \notag \\
    &= \frac{T n (\tau^2 + \sigma_p^2 d)}{\frac{1}{8}(1 - \hat\alpha) \rho} \cdot \Big( \frac\rho4 \cdot \frac{16}{1 - \hat\alpha} +  \frac{16 (1 - \rho) \gamma^2}{1 - \hat\alpha} \cdot \frac{24}{\rho} \Big) \notag \\
    &\leq \frac{T n (\tau^2 + \sigma_p^2 d)}{\frac{1}{8}(1 - \hat\alpha) \rho} \cdot \Big( \frac{4 \rho}{1 - \hat\alpha} + \frac{4 \rho}{1 - \hat\alpha} \Big) \notag \\
    &= \frac{64}{(1 - \hat\alpha)^2} T n (\tau^2 + \sigma_p^2 d),
    \label{eq:proof_1_consensus_error_5}
\end{align}
where we use \eqref{eq:proof_1_gamma_1} to prove the last inequality.

With \eqref{eq:proof_1_consensus_error_5},
we can bound \eqref{eq:proof_1_consensus_error_4} by
\begin{align}
    &~~~~ \sum_{t=1}^T \E \big\| \bX^{(t)} - \bbx^{(t)}\one_n^\top \big\|_\F^2 \notag \\
    &\leq \frac{16 \eta^2}{(1 - \hat\alpha)^2} \sum_{t=1}^T  \E \big\| \bV^{(t)} \ - \bbv^{(t)}\one_n^\top \big\|_\F^2
    + \frac{768 (1 - \rho) \gamma^2 \eta^2}{(1 - \hat\alpha)^2\rho^2} \sum_{t=1}^T n \E \big\| \bbv^{(t)} \big\|_\F^2 \notag \\
    &\leq \frac{16 \eta^2}{(1 - \hat\alpha)^2} \cdot \frac{64}{(1 - \hat\alpha)^2} T n (\tau^2 + \sigma_p^2 d)
    + \frac{768 (1 - \rho) \gamma^2 \eta^2}{(1 - \hat\alpha)^2\rho^2} \sum_{t=1}^T n \E \big\| \bbv^{(t)} \big\|_\F^2 \notag \\
    &\leq \frac{1024 \eta^2}{(1 - \hat\alpha)^4} T n (\tau^2 + \sigma_p^2 d)
    + \frac{8 \eta^2}{(1 - \hat\alpha)^2} \sum_{t=1}^T n \E \big\| \bbv^{(t)} \big\|_\F^2 ,
    \label{eq:proof_1_consensus_error_6}
\end{align}
where we use \eqref{eq:proof_1_gamma_1} again to prove the last inequality.

\subsection{Convergence rate}
\label{sub:with_bounded_gradient_convergence_rate}

Note bounded gradient assumption can imply \Cref{assumption:bounded_variance} for some $\sigma_g \leq 2\tau$,
we can bound the expected norm of average gradient estimate as
\begin{align}
    \E \big\| \bbv^{(t)} \big\|_2^2
    &= \E \big\| \bbg_p^{(t)} \big\|_2^2 \notag\\
    &= \E \big\| \bbg_\tau^{(t)} \big\|_2^2 + \frac{\sigma_p^2 d}{n} \notag\\
    &\leq \E \big\| \nabla F(\bX^{(t)})\ave \big\|_2^2 + \frac{\sigma_g^2}{b} + \frac{\sigma_p^2 d}{n} \notag\\
    &\leq \E \big\| \nabla F(\bX^{(t)})\ave \big\|_2^2 + \frac{4\tau^2}{b} + \frac{\sigma_p^2 d}{n} .
    \label{eq:proof_1_consensus_error_7}
\end{align}

We assume
\begin{align}
    \eta L \leq \frac18(1 - \hat\alpha)^{\frac43} .
    \label{eq:proof_1_eta}
\end{align}

Using \eqref{eq:proof_1_consensus_error_6} \eqref{eq:proof_1_consensus_error_7},
expected utility \eqref{eq:proof_1_descent_2} can be bounded by
\begin{align}
    \frac1T \sum_{t=1}^T \E \| \nabla f(\bbx^{(t)}) \|_2^2
    &\leq \frac{2 \Delta}{\eta T}
    + \frac{1}{T} \cdot \frac{L^2}{n} \sum_{t=1}^T \E \big\| \bX^{(t)} - \bbx^{(t)}\one_n^\top \big\|_\F^2 \notag \\
    &~~~~ + \frac1T \sum_{t=1}^T \eta L \E \| \bbv^{(t)} \|_2^2
    - \frac1T \sum_{t=1}^T \E \big\| \nabla F(\bX^{(t)}) \ave \big\|_2^2 \notag \\
    &\leq \frac{2 \Delta}{\eta T}
    + \frac1T \cdot \frac{L^2}{n} \Big( \frac{1024 \eta^2}{(1 - \hat\alpha)^4} T n (\tau^2 + \sigma_p^2 d)
    + \frac{8 \eta^2}{(1 - \hat\alpha)^2} \sum_{t=1}^T n \E \big\| \bbv^{(t)} \big\|_2^2 \Big)
     \notag \\
    &~~~~ + \frac1T \sum_{t=1}^T \eta L \E \| \bbv^{(t)} \|_2^2
    - \frac1T \sum_{t=1}^T \E \big\| \nabla F(\bX^{(t)}) \ave \big\|_2^2 \notag \\
    &\overset{(\text{i})}{\leq} \frac{2 \Delta}{\eta T}
    + \frac{1024 \eta^2 L^2}{(1 - \hat\alpha)^4} (\tau^2 + \sigma_p^2 d)
    + \frac{2 \eta L}{(1 - \hat\alpha)^{\frac43}T} \sum_{t=1}^T \E \| \bbv^{(t)} \|_2^2
    - \frac1T \sum_{t=1}^T \E \big\| \nabla F(\bX^{(t)}) \ave \big\|_2^2 \notag \\
    &\leq \frac{2 \Delta}{\eta T}
    + \frac{1024 \eta^2 L^2}{(1 - \hat\alpha)^4} (\tau^2 + \sigma_p^2 d)
    + \frac{2 \eta L}{(1 - \hat\alpha)^{\frac43}} \Big(\frac{4\tau^2}{b} + \frac{\sigma_p^2 d}{n} \Big) \notag \\
    &\overset{(\text{ii})}{=} \frac{2 \Delta}{\eta T}
    + \frac{1024 \eta^2 L^2 \tau^2}{(1 - \hat\alpha)^4} (1 + T \phi_m^2)
    + \frac{8 \eta L \tau^2}{(1 - \hat\alpha)^{\frac43}} (1 + T \phi_m^2) \notag \\
    &\overset{(\text{iii})}{=}
    \frac{2 \Delta}{\eta T}
    + \frac{2048 \eta^2 L^2 \tau^2}{(1 - \hat\alpha)^4} 
    + \frac{16 \eta L \tau^2}{(1 - \hat\alpha)^{\frac43}} 
    \label{eq:privacy_biased_bounded_gradient_1}
\end{align}
where we use \eqref{eq:proof_1_eta} for (i),
substitute $b = 1$
and $\sigma_p^2 d = \Big(\frac{\tau \sqrt{T \log(1/\delta)}} {m \epsilon} \Big)^2 = T \tau^2 \phi_m^2$ for (ii),
and substitute $T = \phi_m^{-2}$ for (iii).

We set the step size as
\begin{align}
    \eta
    &= \frac{\gamma^{\frac43}(1 - \alpha)^{\frac43}}{32} \cdot \frac{\phi_m }{L}, \notag
\end{align}
\eqref{eq:privacy_biased_bounded_gradient_1} can be further bounded as
\begin{align}
    \frac1T \sum_{t=1}^T \E \| \nabla f(\bbx^{(t)}) \|_2^2
    &\leq \frac{64 L \Delta \phi_m}{\gamma^{\frac43}(1 - \alpha)^{\frac43}}
    + \frac{2 \tau^2 \phi_m^2 }{(1 - \hat\alpha)^{\frac43}}
    + \frac{\tau^2 \phi_m}{2}  \notag \\
    &\leq \frac{64 L \Delta \phi_m}{\gamma^{\frac43}(1 - \alpha)^{\frac43}}
    + \frac{3 \tau^2 \phi_m }{(1 - \hat\alpha)^{\frac43}} \notag \\
    &\leq \frac{67 \phi_m}{\gamma^{\frac43}(1 - \alpha)^{\frac43}} \max \big\{\tau^2, L\Delta \big\} ,
    \notag
\end{align}
where we use \Cref{lemma:mixing_rate_of_regularized_mixing_matrix} to reach the last inequality.

Lastly, set the hyper parameter $\gamma$ as
\begin{align}
    \gamma
    &= \frac{1}{100}(1-\alpha) \rho \notag.
\end{align}

We can now verify conditions
\eqref{eq:proof_1_gamma_1},
\eqref{eq:proof_1_gamma_2}
and the condition on $\eta$ are all met to conclude the proof:
\begin{align}
    & \gamma^2
    \leq \frac{\rho^2}{10000}
    &&\Rightarrow 
    \tag{\ref{eq:proof_1_gamma_1}}
    \\
    & \gamma^4
    = \gamma^2 \cdot \frac{(1-\alpha)^2\rho^2}{10000}
    \leq \frac{(1-\hat\alpha)^2\rho^2}{10000}
    &&\Rightarrow
    \tag{\ref{eq:proof_1_gamma_2}}
    \\
    & \eta L
    \leq \frac{(1 - \hat\alpha)^{\frac43}}{32}
    &&\Rightarrow
    \tag{\ref{eq:proof_1_eta}} 
\end{align}
\section{Proof of Theorem \ref{theorem:convergence_witht_privacy_without_bounded_gradient}}
\label{proof:convergence_with_privacy_without_bounded_gradient}

This section proves \Cref{theorem:convergence_witht_privacy_without_bounded_gradient} in $2$ subsections.
\Cref{sub:function_value_descent} derives the descent inequality using results from \Cref{sub:with_bounded_gradient_var_consensus,sub:with_bounded_gradient_grad_consensus}.
\Cref{sub:convergence_rate} first assumes all expected gradient norm $\E \big\| \nabla f(\bbx^{(t)}) \big\|_2$ are greater than a threshold $\nu$ (i.e. $\E \big\| \nabla f(\bbx^{(t)}) \big\|_2 \geq \nu$ for all $t = 1, \hdots, T$),
then specifies parameters and proves the average of expected gradient norm is smaller than that threshold $\frac1T \sum_{t=1}^T \E \big\| \nabla f(\bbx^{(t)}) \big\|_2 \leq \nu$,
which contradicts the assumption hence proves the algorithm reaches $\E \big\| \nabla f(\bbx^{(t)}) \big\|_2 \leq \nu$ within $T$ steps.

\subsection{Function value descent}
\label{sub:function_value_descent}

Let $\delta_i^{(t)} = \frac{\tau}{\tau + \| \bg_i^{(t)} \|_2}$
and $\delta^{(t)} = \frac{\tau}{\tau + \| \nabla f(\bbx^{(t)}) \|_2}$.
Similar to \Cref{sub:with_bounded_gradient_function_value_descent},
use Taylor expansion and take expectation conditioned on $t$,
we can expand the function value descent as
\begin{align}
    &~~~~ \E_{t} \big[ f(\bbx^{(t + 1)}) - f(\bbx^{(t)}) \big] \notag \\
    &\leq \E_{t} \langle \nabla f(\bbx^{(t)}), -\eta \bbv^{(t+1)} \rangle
    + \frac L2 \E_{t} \big\| \eta \bbv^{(t+1)} \big\|_2^2 \notag \\
    &= -\eta \E_t \big\langle \nabla f(\bbx^{(t)}), \bbg_p^{(t+1)} \big\rangle
    + \frac{\eta^2 L}{2}  \E_t \big\| \bbv^{(t+1)} \big\|_2^2 \notag \\
    &= -\eta \E_t \big\langle \nabla f(\bbx^{(t)}), \bbg_\tau^{(t+1)} \big\rangle
    + \frac{\eta^2 L}{2}  \E_t \big\| \bbv^{(t+1)} \big\|_2^2 \notag \\
    &= -\eta \E_t \big\langle \nabla f(\bbx^{(t)}), \clip_\tau(\nabla f(\bbx^{(t)})) \big\rangle
    +\eta \E_t \big\langle \nabla f(\bbx^{(t)}), \clip_\tau(\nabla f(\bbx^{(t)})) - \bbg_\tau^{(t+1)} \big\rangle
    + \frac{\eta^2 L}{2}  \E_t \big\| \bbv^{(t+1)} \big\|_2^2 \notag \\
    &= -\eta \delta^{(t)} \big\| \nabla f(\bbx^{(t)}) \big\|_2^2
    +\eta \E_t \big\langle \nabla f(\bbx^{(t)}), \clip_\tau(\nabla f(\bbx^{(t)})) - \bbg_\tau^{(t+1)} \big\rangle
    + \frac{\eta^2 L}{2}  \E_t \big\| \bbv^{(t+1)} \big\|_2^2 \notag \\
    &\leq -\eta \delta^{(t)} \big\| \nabla f(\bbx^{(t)}) \big\|_2^2
    + \eta \big\| \nabla f(\bbx^{(t)}) \big\|_2 \E_t \big\| \clip_\tau(\nabla f(\bbx^{(t)})) - \bbg_\tau^{(t+1)} \big\|_2
    + \frac{\eta^2 L}{2}  \E_t \big\| \bbv^{(t+1)} \big\|_2^2 .
    \label{eq:proof_2_descent_1}
\end{align}
The $\E_t \big\| \clip_\tau(\nabla f(\bbx^{(t)})) - \bbg_\tau^{(t+1)} \big\|_2$ term in \eqref{eq:proof_2_descent_1}
is the error introduced by gradient clipping,
which can be analyzed by splitting it to $4$ terms as following
\begin{align}
    &~~~~ \E_t \big\| \clip_\tau(\nabla f(\bbx^{(t)})) - \bbg_\tau^{(t+1)} \big\|_2 \notag \\
    &= \E_t \Big\| \frac1n \sum_{i=1}^n \frac{\tau}{\tau + \| \bg_i^{(t)} \|_2} \bg_i^{(t)} - \frac{\tau}{\tau + \| \nabla f(\bbx^{(t)}) \|_2} \nabla f(\bbx^{(t)}) \Big\|_2 \notag \\
    &= \E_t \Bigg\| \frac1n \sum_{i=1}^n \Big(
    \frac{\tau}{\tau + \| \bg_i^{(t)} \|_2} \bg_i^{(t)}
    - \frac{\tau}{\tau + \| \nabla f_i(\x_i^{(t)}) \|_2} \bg_i^{(t)} \Big) \notag\\
    &~~~~~~~+
    \frac1n \sum_{i=1}^n \Big(
    \frac{\tau}{\tau + \| \nabla f_i(\x_i^{(t)}) \|_2} \bg_i^{(t)}
    - \frac{\tau}{\tau + \| \nabla f_i(\x_i^{(t)}) \|_2} \nabla f_i(\x_i^{(t)}) \Big)  \notag\\
    &~~~~~~~+
    \frac1n \sum_{i=1}^n \Big(
    \frac{\tau}{\tau + \| \nabla f_i(\x_i^{(t)}) \|_2} \nabla f_i(\x_i^{(t)})
    - \frac{\tau}{\tau + \| \nabla f(\bbx^{(t)}) \|_2} \nabla f_i(\x_i^{(t)}) \Big)  \notag \\
    &~~~~~~~+
    \frac1n \sum_{i=1}^n \Big(
    \frac{\tau}{\tau + \| \nabla f(\bbx^{(t)}) \|_2} \nabla f_i(\x_i^{(t)})
    - \frac{\tau}{\tau + \| \nabla f(\bbx^{(t)}) \|_2} \nabla f(\bbx^{(t)}) \Big) 
    \Bigg\|_2 \notag \\
    &\leq \frac1n \sum_{i=1}^n \E_t \Bigg\| 
    \Big( \frac{\tau}{\tau + \| \bg_i^{(t)} \|_2}
    - \frac{\tau}{\tau + \| \nabla f_i(\x_i^{(t)}) \|_2} \Big) \bg_i^{(t)}
    \Bigg\|_2
    \label{eq:proof_2_clipping_error_1}
    \\
    &\qquad\qquad + \frac1n \sum_{i=1}^n \E_t \Bigg\| 
    \frac{\tau}{\tau + \| \nabla f_i(\x_i^{(t)}) \|_2} \big( \bg_i^{(t)} - \nabla f_i(\x_i^{(t)}) \big)
    \Bigg\|_2
    \label{eq:proof_2_clipping_error_2}
    \\
    &\qquad\qquad + \frac1n \sum_{i=1}^n \Bigg\| 
    \Big( \frac{\tau}{\tau + \| \nabla f_i(\x_i^{(t)}) \|_2}
    - \frac{\tau}{\tau + \| \nabla f(\bbx^{(t)}) \|_2} \Big) \nabla f_i(\x_i^{(t)})
    \Bigg\|_2
    \label{eq:proof_2_clipping_error_3}
    \\
    &\qquad\qquad+ \Bigg\| 
    \frac{\tau}{\tau + \| \nabla f(\bbx^{(t)}) \|_2} \Big( \frac1n \sum_{i=1}^n \nabla f_i(\x_i^{(t)}) - \nabla f(\bbx^{(t)})
    \Big) \Bigg\|_2 .
    \label{eq:proof_2_clipping_error_4}
\end{align}

Next, we bound each term separately
using triangle inequality,
\Cref{assumption:bounded_variance,assumption:bounded_similarity}.

Bound the first term \eqref{eq:proof_2_clipping_error_1} as
\begin{align}
    &~~~~ \frac1n \sum_{i=1}^n \E_t \Bigg\|
    \Big( \frac{\tau}{\tau + \big\| \bg_i^{(t)} \big\|_2}
    - \frac{\tau}{\tau + \big\| \nabla f_i(\x_i^{(t)}) \big\|_2} \Big) \bg_i^{(t)}
    \Bigg\|_2 \notag \\
    &= \frac1n \sum_{i=1}^n \E_t \Bigg\|
    \frac{\tau( \big\| \bg_i^{(t)} \big\|_2 - \big\| \nabla f_i(\x_i^{(t)}) \big\|_2)}{(\tau + \big\| \bg_i^{(t)} \big\|_2)(\tau + \big\| \nabla f_i(\x_i^{(t)}) \big\|_2)} \bg_i^{(t)}
    \Bigg\|_2 \notag \\
    &= \frac1n \sum_{i=1}^n \E_t \Bigg( \Big|
    \big\| \bg_i^{(t)} \big\|_2 - \big\| \nabla f_i(\x_i^{(t)}) \big\|_2 \Big|
    \cdot
    \frac{\tau}{\tau + \big\| \nabla f_i(\x_i^{(t)}) \big\|_2}
    \cdot
    \frac{\big\| \bg_i^{(t)} \big\|_2}{\tau + \big\| \bg_i^{(t)} \big\|_2}
    \Bigg)
    \notag \\
    &\leq \frac1n \sum_{i=1}^n \E_t \Big|
    \big\| \bg_i^{(t)} \big\|_2 - \big\| \nabla f_i(\x_i^{(t)}) \big\|_2
    \Big| \notag \\
    &\leq \frac1n \sum_{i=1}^n \sqrt{\E_t \big(
    \big\| \bg_i^{(t)} \big\|_2 - \big\| \nabla f_i(\x_i^{(t)}) \big\|_2
    \big)^2} \notag \\
    &= \frac1n \sum_{i=1}^n \sqrt{\E_t \big(
    \big\| \bg_i^{(t)} \big\|_2^2 + \big\| \nabla f_i(\x_i^{(t)}) \big\|_2^2
    -2 \big\| \bg_i^{(t)} \big\|_2 \big\| \nabla f_i(\x_i^{(t)}) \big\|_2
    \big)} \notag \\
    &\leq \frac1n \sum_{i=1}^n \sqrt{\E_t \big(
    \big\| \bg_i^{(t)} \big\|_2^2 + \big\| \nabla f_i(\x_i^{(t)}) \big\|_2^2
    - 2 \langle \bg_i^{(t)}, \nabla f_i(\x_i^{(t)}) \rangle
    \big)} \notag \\
    &= \frac1n \sum_{i=1}^n \sqrt{\E_t 
    \big\| \bg_i^{(t)} - \nabla f_i(\x_i^{(t)}) \big\|_2^2
    } \notag \\
    &\leq
    \frac{\sigma_g}{\sqrt{b}} .
    \label{eq:proof_2_clipping_error_bound_1}
\end{align}

Bound the second term \eqref{eq:proof_2_clipping_error_2} as
\begin{align}
    \frac1n \sum_{i=1}^n \E_t \Bigg\|
    \frac{\tau}{\tau + \big\| \nabla f_i(\x_i^{(t)}) \big\|_2} \big(\bg_i^{(t)} - \nabla f_i(\x_i^{(t)}) \big)
    \Bigg\|_2
    \leq
    \frac1n \sum_{i=1}^n \frac{\tau \sigma_g / \sqrt{b}}{\tau + \big\| \nabla f_i(\x_i^{(t)}) \big\|_2}
    \leq
    \frac{\sigma_g}{\sqrt{b}} .
    \label{eq:proof_2_clipping_error_bound_2}
\end{align}

Bound the third term \eqref{eq:proof_2_clipping_error_3} as
\begin{align}
    &~~~~ \frac1n \sum_{i=1}^n \Bigg\|
    \Big( \frac{\tau}{\tau + \big\| \nabla f_i(\x_i^{(t)}) \big\|_2}
    - \frac{\tau}{\tau + \big\| \nabla f(\bbx^{(t)}) \big\|_2} \Big) \nabla f_i(\x_i^{(t)})
    \Bigg\|_2 \notag\\
    &= \frac1n \sum_{i=1}^n \Bigg\|
    \frac{\tau \big(\big\| \nabla f_i(\x_i^{(t)}) \big\|_2 - \big\| \nabla f(\bbx^{(t)}) \big\|_2 \big)}{\big( \tau + \big\| \nabla f_i(\x_i^{(t)}) \big\|_2 \big) \big( \tau + \big\| \nabla f(\bbx^{(t)}) \big\|_2 \big)}
    \nabla f_i(\x_i^{(t)})
    \Bigg\|_2 \notag\\
    &\leq
    \frac1n \sum_{i=1}^n \frac{\tau \Big| \big\| \nabla f_i(\x_i^{(t)}) \big\|_2 - \big\| \nabla f(\bbx^{(t)}) \big\|_2 \Big| }{\tau + \big\| \nabla f(\bbx^{(t)}) \big\|_2}
    \notag\\
    &\leq
    \frac1n \sum_{i=1}^n \delta^{(t)} \Big| \big\| \nabla f_i(\x_i^{(t)}) \big\|_2 - \big\| \nabla f_i(\bbx^{(t)}) \big\|_2 \Big|
    + \frac1n \sum_{i=1}^n \delta^{(t)} \Big| \big\| \nabla f_i(\bbx^{(t)}) \big\|_2 - \big\| \nabla f(\bbx^{(t)}) \big\|_2 \Big|
    \notag\\
    &\leq
    \frac1n \sum_{i=1}^n \delta^{(t)} L \big\| \x_i^{(t)} - \bbx^{(t)} \big\|_2
    + \frac1n \sum_{i=1}^n \delta^{(t)} \cdot \frac{1}{12} \| \nabla f(\bbx^{(t)}) \|_2
    \notag\\
    &\leq
    \frac{\delta^{(t)} L}{\sqrt n } \big\| \bX^{(t)} - \bbx^{(t)}\one_n^\top \big\|_\F
    + \frac{1}{12} \| \nabla f(\bbx^{(t)}) \|_2 ,
    \label{eq:proof_2_clipping_error_bound_3}
\end{align}
where we use $\delta^{(t)} \leq 1$ to reach the last inequality.

Bound \eqref{eq:proof_2_clipping_error_4} as
\begin{align}
    &~~~~ \Bigg\| 
    \frac{\tau}{\tau + \| \nabla f(\bbx^{(t)}) \|_2} \Big( \frac1n \sum_{i=1}^n \nabla f_i(\x_i^{(t)}) - \nabla f(\bbx^{(t)})
    \Big) \Bigg\|_2 \notag \\
    &= \frac{\tau}{\tau + \big\| \nabla f(\bbx^{(t)}) \big\|_2}
    \big\| \nabla F(\bX^{(t)})\ave - \nabla f(\bbx^{(t)})
    \big\|_2 \notag \\
    &\leq \frac{\delta^{(t)} L}{\sqrt{n}} \big\| \bX^{(t)} - \bbx^{(t)}\one_n^\top \big\|_\F .
    \label{eq:proof_2_clipping_error_bound_4}
\end{align}

Using \eqref{eq:proof_2_clipping_error_bound_1}, \eqref{eq:proof_2_clipping_error_bound_2}, \eqref{eq:proof_2_clipping_error_bound_3} and \eqref{eq:proof_2_clipping_error_bound_4},
the function value descent inequality \eqref{eq:proof_2_descent_1} becomes
\begin{align}
    \E_{t} \big[ f(\bbx^{(t + 1)}) - f(\bbx^{(t)}) \big]
    &\leq -\eta \delta^{(t)} \big\| \nabla f(\bbx^{(t)}) \big\|_2^2
    + \frac{\eta^2 L}{2}  \E_t \big\| \bbv^{(t+1)} \big\|_2^2 \notag\\
    &~~~~+ \eta \big\| \nabla f(\bbx^{(t)}) \big\|_2 \E_t \big\| \clip_\tau \big(\nabla f(\bbx^{(t)}) \big) - \bbg_\tau^{(t+1)} \big\|_2
    \notag\\
    &\leq -\eta \delta^{(t)} \big\| \nabla f(\bbx^{(t)}) \big\|_2^2
    + \frac{\eta^2 L}{2}  \E_t \big\| \bbv^{(t+1)} \big\|_2^2 \notag\\
    &~~~~+ \eta \big\| \nabla f(\bbx^{(t)}) \big\|_2 \Big(\frac{2 \sigma_g}{\sqrt{b}} + \frac{1}{12} \delta^{(t)} \| \nabla f(\bbx^{(t)}) \|_2 + \frac{2 \delta^{(t)} L}{\sqrt{n}} \big\| \bX^{(t)} - \bbx^{(t)}\one_n^\top \big\|_\F \Big)
    \notag\\
    &= -\frac{11}{12} \eta \delta^{(t)} \big\| \nabla f(\bbx^{(t)}) \big\|_2^2
    + \frac{\eta^2 L}{2} \E_t \big\| \bbv^{(t+1)} \big\|_2^2 \notag\\
    &~~~~+ \eta \big\| \nabla f(\bbx^{(t)}) \big\|_2 \Big(\frac{2 \sigma_g}{\sqrt{b}} + \frac{2 \delta^{(t)} L}{\sqrt{n}} \big\| \bX^{(t)} - \bbx^{(t)}\one_n^\top \big\|_\F \Big)
    \notag\\
    &\leq -\frac{5}{12} \eta \delta^{(t)} \big\| \nabla f(\bbx^{(t)}) \big\|_2^2
    + \frac{\eta^2 L}{2}  \E_t \big\| \bbv^{(t+1)} \big\|_2^2 \notag\\
    &~~~~+ \frac{2 \eta \sigma_g}{\sqrt{b}} \big\| \nabla f(\bbx^{(t)}) \big\|_2
    + \frac{2 \delta^{(t)} \eta L^2}{n} \big\| \bX^{(t)} - \bbx^{(t)}\one_n^\top \big\|_\F^2 ,
    \label{eq:proof_2_descent_2}
\end{align}
where the last inequality is due to
\begin{align}
    &~~~~ \eta \big\| \nabla f(\bbx^{(t)}) \big\|_2 \cdot \frac{2 \delta^{(t)} L}{\sqrt{n}} \big\| \bX^{(t)} - \bbx^{(t)}\one_n^\top \big\|_\F
    \notag\\
    &\leq \eta\delta^{(t)} \cdot 2 \cdot \sqrt{\frac12 \big\| \nabla f(\bbx^{(t)}) \big\|_2^2} \cdot \sqrt{\frac{2L^2}{n} \big\| \bX^{(t)} - \bbx^{(t)}\one_n^\top \big\|_\F^2}
    \notag\\
    &\leq \frac12 \eta\delta^{(t)} \big\| \nabla f(\bbx^{(t)}) \big\|_2^2 + \frac{2\delta^{(t)} \eta L^2}{n} \big\| \bX^{(t)} - \bbx^{(t)}\one_n^\top \big\|_\F^2 .
    \notag
\end{align}

\subsection{Convergence rate}
\label{sub:convergence_rate}

Different from \eqref{eq:proof_1_consensus_error_7},
with the use of gradient clipping operator,
we can only bound the expected norm of average gradient estimate as
\begin{align}
    \E \| \bbv^{(t)} \|_2^2
    &= \E \| \bbg_p^{(t)} \|_2^2 \notag\\
    &= \E \| \bbg_\tau^{(t)} + \bbe^{(t)} \|_2^2 \notag\\
    &= \E \| \bbg_\tau^{(t)} \|_2^2 + \E \| \bbe^{(t)} \|_2^2 \notag\\
    &\leq \tau^2 + \frac{\sigma_p^2 d}{n} .
    \label{eq:proof_2_vbar}
\end{align}

Let $\Delta = \E \big[ f(\bbx^{(0)}) \big] - f^\star$.
The techniques used is similar to that used in \Cref{proof:convergence_witht_privacy_with_bounded_gradient},
so that we can reuse results from \Cref{sub:with_bounded_gradient_var_consensus,sub:with_bounded_gradient_grad_consensus},
namely \eqref{eq:proof_1_consensus_error_6},
in the following proof.
Take full expectation and use \eqref{eq:proof_2_vbar},
sum \eqref{eq:proof_2_descent_2} over $t=1, \hdots , T$,
\begin{align}
    -\Delta
    &\leq 
    -\frac{5\eta}{12} \sum_{t=1}^T \E \Big( \delta^{(t)} \big\| \nabla f(\bbx^{(t)}) \big\|_2^2 \Big)
    + \frac{2 \eta \sigma}{\sqrt{b}} \sum_{t=1}^T \E \big\| \nabla f(\bbx^{(t)}) \big\|_2 \notag\\
    &\qquad\qquad + \frac{2 \eta L^2}{n} \sum_{t=1}^T \E \big\| \bX^{(t)} - \bbx^{(t)}\one_n^\top \big\|_\F^2
    + \frac{\eta^2 L}{2} \sum_{t=1}^T \E \big\| \bbv^{(t+1)} \big\|_2^2
    \notag\\
    &\leq
    -\frac{5\eta}{12} \sum_{t=1}^T \E \Big( \delta^{(t)} \big\| \nabla f(\bbx^{(t)}) \big\|_2^2 \Big)
    + \frac{2 \eta \sigma}{\sqrt{b}} \sum_{t=1}^T \E \big\| \nabla f(\bbx^{(t)}) \big\|_2 \notag\\
    &\qquad\qquad+ \frac{2 \eta L^2}{n} \Big( \frac{8 \eta^2}{(1 - \hat\alpha)^2} \sum_{t=1}^T n \E \big\| \bbv^{(t)} \big\|_2^2
    + \frac{1024 \eta^2}{(1 - \hat\alpha)^4} T n (\tau^2 + \sigma_p^2 d) \Big)
    + \frac{\eta^2 L}{2} \sum_{t=1}^T \E \big\| \bbv^{(t+1)} \big\|_2^2
    \notag\\
    &=
    -\frac{5\eta}{12} \sum_{t=1}^T \E \Big( \delta^{(t)} \big\| \nabla f(\bbx^{(t)}) \big\|_2^2 \Big)
    + \frac{2 \eta \sigma}{\sqrt{b}} \sum_{t=1}^T \E \big\| \nabla f(\bbx^{(t)}) \big\|_2 \notag\\
    &\qquad\qquad
    + \frac{16 \eta^3 L^2}{(1 - \hat\alpha)^2} \Big(\tau^2 + \frac{\sigma_p^2 d}{n} \Big)
    + \frac{2048 \eta^3L^2}{(1 - \hat\alpha)^4} T (\tau^2 + \sigma_p^2 d)
    + \frac{\eta^2 L T}{2} \Big(\tau^2 + \frac{\sigma_p^2 d}{n} \Big).
    \label{eq:proof_2_descent_3}
\end{align}

To be able to analyze the expected clipped gradient norm $\E \big( \delta^{(t)} \big\| \nabla f(\bbx^{(t)}) \big\|_2^2 \big)$,
we need to use convexity and monotonicity from \Cref{lemma:util_function}.
\begin{lemma}
    \label{lemma:util_function}
    Let $g(x) = \frac{x}{c + x}$ and $h(x) = x g(x) = \frac{x^2}{c + x}$.
    When $x \geq 0$, $g(x)$ and $h(x)$ increase monotonically,
    while $g(x)$ is concave and $h(x)$ is convex.
\end{lemma}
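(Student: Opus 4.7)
The plan is to verify each of the four claims by direct computation of first and second derivatives, exploiting a convenient algebraic decomposition that reduces both functions to a common building block. Note first that in context (the functions arise as $\tfrac{\tau}{\tau+\|\cdot\|}$ and $\tfrac{\|\cdot\|^2}{\tau+\|\cdot\|}$ type quantities), $c > 0$ is implicit; I will state this explicitly at the start of the proof.

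The key simplification is to rewrite
\[
g(x) = \frac{x}{c+x} = 1 - \frac{c}{c+x}, \qquad h(x) = \frac{x^2}{c+x} = (x-c) + \frac{c^2}{c+x},
\]
where the second identity is verified by combining over a common denominator. Both functions are thereby expressed in terms of $\frac{1}{c+x}$, whose derivatives are elementary: its first derivative is negative and its second derivative is positive on $x \geq 0$.

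From these decompositions I would read off
\[
g'(x) = \frac{c}{(c+x)^2}, \qquad g''(x) = -\frac{2c}{(c+x)^3}, \qquad h'(x) = 1 - \frac{c^2}{(c+x)^2}, \qquad h''(x) = \frac{2c^2}{(c+x)^3}.
\]
For $x \geq 0$ and $c>0$, clearly $g'(x)>0$ and $g''(x) < 0$, establishing that $g$ is strictly increasing and concave. Similarly $h''(x) > 0$ establishes convexity of $h$. Monotonicity of $h$ follows from $h'(x) = 1 - \tfrac{c^2}{(c+x)^2} \geq 0$ for $x \geq 0$, since $(c+x)^2 \geq c^2$ in that range (with equality only at $x=0$).

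There is no real obstacle here; the only thing to be careful about is stating the domain and the positivity of $c$ cleanly so the denominators never vanish and the sign determinations are unambiguous. The decomposition $h(x) = (x-c) + \tfrac{c^2}{c+x}$ is the one non-obvious step and avoids the messier quotient-rule differentiation of $x^2/(c+x)$ directly.
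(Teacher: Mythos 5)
Your proof is correct and follows essentially the same approach as the paper: both verify the claims by computing first and second derivatives and checking their signs, and you arrive at exactly the same expressions $g'(x)=\tfrac{c}{(c+x)^2}$, $g''(x)=-\tfrac{2c}{(c+x)^3}$, $h''(x)=\tfrac{2c^2}{(c+x)^3}$ (the paper gets them via the product rule $h'=g+xg'$ rather than your partial-fraction decomposition, a purely cosmetic difference). Your explicit remark that $c>0$ is needed is a small but worthwhile improvement over the paper, which leaves this implicit.
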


\begin{proof}[Proof of \Cref{lemma:util_function}]
    It is sufficient to prove \Cref{lemma:util_function} by evaluating the first-order and second-order derivatives of $g(x)$ and $h(x)$.

Because
$g'(x) = \frac{(c + x) - x}{(c + x)^2} = \frac{c}{(c + x)^2} > 0$
and $h'(x) = g(x) + x g'(x) \geq 0$,
$g(x)$ and $h(x)$ increase monotonically.

$g(x)$ is concave because $g''(x)
-\frac{2c (c + x)}{(c + x)^4}
= -\frac{2c}{(c + x)^3} < 0$.

$h(x)$ is convex because
$h''(x)
= 2g'(x) + xg''(x)
= \frac{2c}{(c + x)^2} -\frac{2cx}{(c + x)^3}
= \frac{2c^2}{(c + x)^3} > 0$.
\end{proof}

Next,
we substitute $\tau = \nu$ (cf. \Cref{theorem:convergence_witht_privacy_with_bounded_gradient}),
and assume the following inequality
\begin{align}
    \E \| \nabla f(\bbx^{(t)}) \|_2 \geq \nu.
    \label{eq:clipping_contradiction_assumption}
\end{align}

By \Cref{lemma:util_function},
the expectation of clipped gradients can be bounded as
\begin{align}
    \E \Big( \delta^{(t)} \big\| \nabla f(\bbx^{(t)}) \big\|_2^2 \Big)
    &= \E \Big( \frac{\tau \big\| \nabla f(\bbx^{(t)}) \big\|_2^2}{\tau + \big\| \nabla f(\bbx^{(t)}) \big\|_2} \Big) \notag\\
    &\geq \frac{\tau \big( \E \big\| \nabla f(\bbx^{(t)}) \big\|_2 \big)^2}{\tau + \E \big\| \nabla f(\bbx^{(t)}) \big\|_2} \notag\\
    &\geq \frac{\tau \nu}{\tau + \nu} \E \big\| \nabla f(\bbx^{(t)}) \big\|_2
    \notag\\
    &= \frac{\nu}{2} \E \big\| \nabla f(\bbx^{(t)}) \big\|_2 .
    \label{eq:proof_2_expectation_1}
\end{align}

Using
\eqref{eq:proof_2_vbar},
\eqref{eq:proof_2_expectation_1}
\Cref{assumption:bounded_similarity,assumption:bounded_variance},
and set $b = 1$,
we can further bound the RHS of \eqref{eq:proof_2_descent_3} as
\begin{align}
    -\Delta
    &\leq
    -\frac{5 \eta \nu}{24} \sum_{t=1}^T \E \big\| \nabla f(\bbx^{(t)}) \big\|_2
    + \frac{2 \eta \sigma_g}{\sqrt{b}} \sum_{t=1}^T \E \big\| \nabla f(\bbx^{(t)}) \big\|_2 
    \notag\\
    &\qquad\qquad
    + \frac{16 \eta^3 L^2}{(1 - \hat\alpha)^2} \Big(\tau^2 + \frac{\sigma_p^2 d}{n} \Big)
    + \frac{2048 \eta^3L^2}{(1 - \hat\alpha)^4} T (\tau^2 + \sigma_p^2 d)
    + \frac{\eta^2 L T}{2} \Big(\tau^2 + \frac{\sigma_p^2 d}{n} \Big)
    \label{eq:privacy_proof_clipping_descent_for_later_1}
    \\
    &\overset{\text{(i)}}{\leq} - \frac{\eta \nu}{8} \sum_{t=1}^T \E \big\| \nabla f(\bbx^{(t)}) \big\|_2
    + \frac{2048 T \eta^3L^2}{(1 - \hat\alpha)^4} (\tau^2 + \sigma_p^2 d)
    + \frac{3 T \eta^2 L}{1 - \hat\alpha} \Big(\tau^2 + \frac{\sigma_p^2 d}{n} \Big)
    \notag\\
    &= - \frac{\eta \nu}{8} \sum_{t=1}^T \E \big\| \nabla f(\bbx^{(t)}) \big\|_2
    + \frac{2048 T \eta^3L^2 \tau^2}{(1 - \hat\alpha)^4} (1 + T \phi_m^2)
    + \frac{3 T \eta^2 L \tau^2}{1 - \hat\alpha} (1 + T \phi_m^2) ,
    \label{eq:privacy_proof_clipping_descent_1}
\end{align}
where we use the condition $\nu \geq 24 \sigma_g$ to prove (i) and substitute $\sigma_p^2 = T \tau^2 \phi_m^2 / d$ to prove \eqref{eq:privacy_proof_clipping_descent_1}.

Reorganize terms,
\eqref{eq:privacy_proof_clipping_descent_1} can be further bounded as
\begin{align}
    \frac1T \sum_{t=1}^T \E \big\| \nabla f(\bbx^{(t)}) \big\|_2
    &\leq 
    \frac{8 \Delta}{\eta \nu T}
    + \frac{16384 \eta^2L^2 \nu}{(1 - \hat\alpha)^4} (1 + T \phi_m^2)
    + \frac{24 \eta L \nu} {1 - \hat\alpha} (1 + T \phi_m^2)
    \notag\\
    &\overset{\text{(i)}}{\leq}
    \frac{8 \Delta \phi_m}{\eta \nu}
    + \frac{32768 \eta^2L^2 \nu}{(1 - \hat\alpha)^4}
    + \frac{48 \eta L \nu} {1 - \hat\alpha}
    \notag\\
    &\overset{\text{(ii)}}{\leq}
    \frac{8 \Delta \phi_m}{\eta \nu}
    + \frac{4096 \eta L \nu}{(1 - \hat\alpha)^{\frac83}}
    + \frac{48 \eta L \nu} {1 - \hat\alpha} ,
    \notag\\
    &\leq
    \frac{8 \Delta \phi_m}{\eta \nu}
    + \frac{4144 \eta L \nu}{(1 - \hat\alpha)^{\frac83}} ,
    \label{eq:privacy_proof_clipping_descent_2}
\end{align}
where we substitute $T = \phi_m^{-2}$ to prove (i),
and use \eqref{eq:proof_1_eta} for (ii).

Set $\eta = \frac{\gamma^{\frac83} (1 - \alpha)^{\frac83}}{8288 L}$
and $\gamma = \frac{1}{100}(1 - \alpha)\rho$,
\eqref{eq:privacy_proof_clipping_descent_2} can be further bounded as
\begin{align}
    \frac1T \sum_{t=1}^T \E \big\| \nabla f(\bbx^{(t)}) \big\|_2
    &< \frac{8288}{\gamma^{\frac83} (1 - \alpha)^{\frac83}} \cdot \frac{192 L \Delta \phi_m}{\nu}
    + \frac{\gamma^{\frac83} (1 - \alpha)^{\frac83}}{8288} \cdot \frac{4144 \nu}{(1 - \hat\alpha)^{\frac83}}
    \notag\\
    &= \frac{8288}{\gamma^{\frac83} (1 - \alpha)^{\frac83}} \cdot
    \frac{192 L \Delta \phi_m}{\nu}
    + \frac\nu2
    \notag\\
    &< \frac{1784 \sqrt{L \Delta \phi_m}}{\gamma^{\frac43} (1 - \alpha)^{\frac43}} .
    \label{eq:proof_2_utility}
\end{align}

Choosing $\nu = \frac{1784 \sqrt{L \Delta \phi_m}}{\gamma^{\frac43} (1 - \alpha)^{\frac43}}$,
\eqref{eq:proof_2_utility} simplifies to
$\frac1T \sum_{t=1}^T \E \big\| \nabla f(\bbx^{(t)}) \big\|_2 < \nu$,
which further implies that there exists some $ t \in [T]$ such that $\E \big\| \nabla f(\bbx^{(t)}) \big\|_2 < \nu$.
However,
this contradicts the assumption \eqref{eq:clipping_contradiction_assumption},
which leads to the convergence results in the theorem.

Lastly, we can verify conditions
\eqref{eq:proof_1_gamma_1},
\eqref{eq:proof_1_gamma_2} and \eqref{eq:proof_1_eta}
are all met,
which concludes the proof.
\section{Proof of Theorem \ref{theorem:convergence_without_privacy_without_bounded_gradient}}
\label{proof:convergence_without_privacy_without_bounded_gradient}

This section proves \Cref{theorem:convergence_without_privacy_without_bounded_gradient} based on results from \Cref{proof:convergence_with_privacy_without_bounded_gradient}.
We first assume all expected gradient norm $\E \big\| \nabla f(\bbx^{(t)}) \big\|_2$ are greater than a threshold $\nu$ (i.e. $\E \big\| \nabla f(\bbx^{(t)}) \big\|_2 \geq \nu$ for all $t = 1, \hdots, T$).
Set $b = \big( \frac{24 \sigma_g}{\nu}\big)^2$ and $\sigma_p = 0$ in \eqref{eq:privacy_proof_clipping_descent_for_later_1},
we can reach the following descent inequality
\begin{align}
    -\Delta
    &\leq - \frac{\eta \nu}{8} \sum_{t=1}^T \E \big\| \nabla f(\bbx^{(t)}) \big\|_2
    + \frac{2048 T \eta^3L^2 \tau^2}{(1 - \hat\alpha)^4}
    + \frac{3 T \eta^2 L \tau^2}{1 - \hat\alpha} .
\end{align}

Reorganize terms,
\begin{align}
    \frac1T \sum_{t=1}^T \E \big\| \nabla f(\bbx^{(t)}) \big\|_2
    &\leq 
    \frac{8 \Delta}{\eta \nu T}
    + \frac{16384 \eta^2L^2 \nu}{(1 - \hat\alpha)^4} 
    + \frac{24 \eta L \nu} {1 - \hat\alpha} 
    \notag\\
    &<
    \frac{8 \Delta}{\eta \nu T}
    + \frac{2048 \eta L \nu}{(1 - \hat\alpha)^{\frac83}}
    + \frac{24 \eta L \nu} {1 - \hat\alpha} ,
    \notag\\
    &\leq
    \frac{8 \Delta}{\eta \nu T}
    + \frac{2072 \eta L \nu}{(1 - \hat\alpha)^{\frac83}} ,
    \notag
\end{align}
where we use \eqref{eq:proof_1_eta} for the second inequality.

Set $\eta = \frac1L \cdot \sqrt{\frac{8}{2072}}$ and $\nu = \sqrt{\frac{L \Delta}{(1 - \hat\alpha)^{\frac83} T}}$,
the average $\ell_2$ norm of gradients can be bounded as
\begin{align}
    \frac1T \sum_{t=1}^T \E \big\| \nabla f(\bbx^{(t)}) \big\|_2
    &<
    \sqrt{\frac{L \Delta}{(1 - \hat\alpha)^{\frac83} T}}
    = \nu ,
    \notag
\end{align}
which implies that there exists some $ t \in [T]$, such that $\E \big\| \nabla f(\bbx^{(t)}) \big\|_2 < \nu$ which contradicts the assumption,
and proves that \myalgsgd reaches $\E \big\| \nabla f(\bbx^{(t)}) \big\|_2 \leq \nu$ within $T$ iterations


\begin{thebibliography}{DSZOR15}

\bibitem[ACC{\"O}21]{asoodeh2021differentially}
S.~Asoodeh, W.-N. Chen, F.~P. Calmon, and A.~{\"O}zg{\"u}r.
\newblock Differentially private federated learning: An information-theoretic perspective.
\newblock In {\em 2021 {{IEEE International Symposium}} on {{Information Theory}} ({{ISIT}})}, pages 344--349, July 2021.

\bibitem[ACG{\etalchar{+}}16]{abadi2016deep}
M.~Abadi, A.~Chu, I.~Goodfellow, H.~B. McMahan, I.~Mironov, K.~Talwar, and L.~Zhang.
\newblock Deep learning with differential privacy.
\newblock In {\em Proceedings of the 2016 {{ACM SIGSAC Conference}} on {{Computer}} and {{Communications Security}}}, pages 308--318, October 2016.

\bibitem[AGL{\etalchar{+}}17]{alistarh2017qsgd}
D.~Alistarh, D.~Grubic, J.~Li, R.~Tomioka, and M.~Vojnovic.
\newblock {{QSGD}}: Communication-efficient sgd via gradient quantization and encoding.
\newblock In {\em Advances in {{Neural Information Processing Systems}}}, volume~30. {Curran Associates, Inc.}, 2017.

\bibitem[AHJ{\etalchar{+}}18]{alistarh2018convergence}
D.~Alistarh, T.~Hoefler, M.~Johansson, N.~Konstantinov, S.~Khirirat, and C.~Renggli.
\newblock The convergence of sparsified gradient methods.
\newblock In {\em Advances in {{Neural Information Processing Systems}}}, volume~31. {Curran Associates, Inc.}, 2018.

\bibitem[ALBR19]{assran2019stochastic}
M.~Assran, N.~Loizou, N.~Ballas, and M.~Rabbat.
\newblock Stochastic gradient push for distributed deep learning.
\newblock In {\em International Conference on Machine Learning}, pages 344--353. PMLR, 2019.

\bibitem[BBP13]{bengio2013advances}
Y.~Bengio, N.~{Boulanger-Lewandowski}, and R.~Pascanu.
\newblock Advances in optimizing recurrent networks.
\newblock In {\em 2013 {{IEEE International Conference}} on {{Acoustics}}, {{Speech}} and {{Signal Processing}}}, pages 8624--8628, May 2013.

\bibitem[BJ13]{bianchi2013convergence}
P.~Bianchi and J.~Jakubowicz.
\newblock Convergence of a multi-agent projected stochastic gradient algorithm for non-convex optimization.
\newblock {\em IEEE Transactions on Automatic Control}, 58(2):391--405, February 2013.

\bibitem[CABP13]{chatzikokolakis2013broadening}
K.~Chatzikokolakis, M.~E. Andr{\'e}s, N.~E. Bordenabe, and C.~Palamidessi.
\newblock Broadening the scope of differential privacy using metrics.
\newblock In E.~De~Cristofaro and M.~Wright, editors, {\em Privacy {{Enhancing Technologies}}}, Lecture {{Notes}} in {{Computer Science}}, pages 82--102, {Berlin, Heidelberg}, 2013. {Springer}.

\bibitem[CL11]{chang2011libsvm}
C.-C. Chang and C.-J. Lin.
\newblock {LIBSVM}: a library for support vector machines.
\newblock {\em ACM transactions on intelligent systems and technology (TIST)}, 2(3):1--27, May 2011.

\bibitem[CWH20]{chen2020understanding}
X.~Chen, S.~Z. Wu, and M.~Hong.
\newblock Understanding gradient clipping in private {{SGD}}: A geometric perspective.
\newblock In {\em Advances in {{Neural Information Processing Systems}}}, volume~33, pages 13773--13782. {Curran Associates, Inc.}, 2020.

\bibitem[DJW13]{duchi2013local}
J.~C. Duchi, M.~I. Jordan, and M.~J. Wainwright.
\newblock Local privacy and statistical minimax rates.
\newblock In {\em 2013 {{IEEE}} 54th {{Annual Symposium}} on {{Foundations}} of {{Computer Science}}}, pages 429--438, October 2013.

\bibitem[DKX{\etalchar{+}}22]{das2022uniform}
R.~Das, S.~Kale, Z.~Xu, T.~Zhang, and S.~Sanghavi.
\newblock Beyond uniform {L}ipschitz condition in differentially private optimization.
\newblock {\em arXiv preprint arXiv:2206.10713}, 2022.

\bibitem[DLC{\etalchar{+}}22]{du2022dynamic}
J.~Du, S.~Li, X.~Chen, S.~Chen, and M.~Hong.
\newblock Dynamic differential-privacy preserving sgd.
\newblock {\em arXiv preprint arXiv:2111.00173}, 2022.

\bibitem[DLS16]{di2016next}
P.~Di~Lorenzo and G.~Scutari.
\newblock Next: In-network nonconvex optimization.
\newblock {\em IEEE Transactions on Signal and Information Processing over Networks}, 2(2):120--136, 2016.

\bibitem[DMNS06]{dwork2006calibrating}
C.~Dwork, F.~McSherry, K.~Nissim, and A.~Smith.
\newblock Calibrating noise to sensitivity in private data analysis.
\newblock In S.~Halevi and T.~Rabin, editors, {\em Theory of {{Cryptography}}}, Lecture {{Notes}} in {{Computer Science}}, pages 265--284, {Berlin, Heidelberg}, 2006. {Springer}.

\bibitem[DSZOR15]{de2015taming}
C.~M. De~Sa, C.~Zhang, K.~Olukotun, and C.~R{\'e}.
\newblock Taming the wild: A unified analysis of {{Hogwild}}-style algorithms.
\newblock {\em Advances in Neural Information Processing Systems}, 28, 2015.

\bibitem[Dwo08]{dwork2008differential}
C.~Dwork.
\newblock Differential privacy: A survey of results.
\newblock In M.~Agrawal, D.~Du, Z.~Duan, and A.~Li, editors, {\em Theory and {{Applications}} of {{Models}} of {{Computation}}}, Lecture {{Notes}} in {{Computer Science}}, pages 1--19, {Berlin, Heidelberg}, 2008. {Springer}.

\bibitem[FKT20]{feldman2020private}
V.~Feldman, T.~Koren, and K.~Talwar.
\newblock Private stochastic convex optimization: Optimal rates in linear time.
\newblock In {\em Proceedings of the 52nd {{Annual ACM SIGACT Symposium}} on {{Theory}} of {{Computing}}}, {{STOC}} 2020, pages 439--449, {New York, NY, USA}, June 2020. {Association for Computing Machinery}.

\bibitem[FLFL23]{fang2023improved}
H.~Fang, X.~Li, C.~Fan, and P.~Li.
\newblock Improved convergence of differential private {SGD} with gradient clipping.
\newblock In {\em The Eleventh International Conference on Learning Representations}, 2023.

\bibitem[FSG{\etalchar{+}}21]{fatkhullin2021ef21}
I.~Fatkhullin, I.~Sokolov, E.~Gorbunov, Z.~Li, and P.~Richt{\'a}rik.
\newblock {EF21} with bells \& whistles: Practical algorithmic extensions of modern error feedback.
\newblock {\em arXiv preprint arXiv:2110.03294}, 2021.

\bibitem[GBLR21]{gorbunov2021marina}
E.~Gorbunov, K.~P. Burlachenko, Z.~Li, and P.~Richtarik.
\newblock {{MARINA}}: Faster non-convex distributed learning with compression.
\newblock In {\em Proceedings of the 38th {{International Conference}} on {{Machine Learning}}}, pages 3788--3798. {PMLR}, July 2021.

\bibitem[HDJ{\etalchar{+}}20]{huang2020dpfl}
X.~Huang, Y.~Ding, Z.~L. Jiang, S.~Qi, X.~Wang, and Q.~Liao.
\newblock {{DP-FL}}: A novel differentially private federated learning framework for the unbalanced data.
\newblock {\em World Wide Web}, 23(4):2529--2545, July 2020.

\bibitem[HP23]{huang2023cedas}
K.~Huang and S.~Pu.
\newblock {CEDAS}: A compressed decentralized stochastic gradient method with improved convergence.
\newblock {\em arXiv preprint arXiv:2301.05872}, 2023.

\bibitem[HSZ{\etalchar{+}}22]{huang2022tackling}
Y.~Huang, Y.~Sun, Z.~Zhu, C.~Yan, and J.~Xu.
\newblock Tackling data heterogeneity: A new unified framework for decentralized sgd with sample-induced topology.
\newblock In {\em Proceedings of the 39th {{International Conference}} on {{Machine Learning}}}, pages 9310--9345. {PMLR}, June 2022.

\bibitem[INS{\etalchar{+}}19]{iyengar2019practical}
R.~Iyengar, J.~P. Near, D.~Song, O.~Thakkar, A.~Thakurta, and L.~Wang.
\newblock Towards practical differentially private convex optimization.
\newblock In {\em 2019 {{IEEE Symposium}} on {{Security}} and {{Privacy}} ({{SP}})}, pages 299--316, May 2019.

\bibitem[KDG03]{kempe2003gossipbased}
D.~Kempe, A.~Dobra, and J.~Gehrke.
\newblock Gossip-based computation of aggregate information.
\newblock In {\em 44th {{Annual IEEE Symposium}} on {{Foundations}} of {{Computer Science}}, 2003. {{Proceedings}}.}, pages 482--491, October 2003.

\bibitem[KFI17]{kanai2017preventing}
S.~Kanai, Y.~Fujiwara, and S.~Iwamura.
\newblock Preventing gradient explosions in gated recurrent units.
\newblock In {\em Advances in {{Neural Information Processing Systems}}}, volume~30. {Curran Associates, Inc.}, 2017.

\bibitem[KHS23]{koloskova23revisiting}
A.~Koloskova, H.~Hendrikx, and S.~U. Stich.
\newblock Revisiting gradient clipping: Stochastic bias and tight convergence guarantees.
\newblock In {\em Proceedings of the 40th International Conference on Machine Learning}, Proceedings of Machine Learning Research. PMLR, 23--29 Jul 2023.

\bibitem[KLL16]{kim2016accurate}
J.~Kim, J.~K. Lee, and K.~M. Lee.
\newblock Accurate image super-resolution using very deep convolutional networks.
\newblock In {\em Proceedings of the {{IEEE Conference}} on {{Computer Vision}} and {{Pattern Recognition}}}, pages 1646--1654, 2016.

\bibitem[KSJ19]{koloskova2019decentralizeddeep}
A.~Koloskova, S.~Stich, and M.~Jaggi.
\newblock Decentralized stochastic optimization and gossip algorithms with compressed communication.
\newblock In K.~Chaudhuri and R.~Salakhutdinov, editors, {\em Proceedings of the 36th International Conference on Machine Learning}, volume~97 of {\em Proceedings of Machine Learning Research}, pages 3478--3487. PMLR, 09--15 Jun 2019.

\bibitem[LCCC20]{li2020communication}
B.~Li, S.~Cen, Y.~Chen, and Y.~Chi.
\newblock Communication-efficient distributed optimization in networks with gradient tracking and variance reduction.
\newblock {\em Journal of Machine Learning Research}, 21(180):1--51, 2020.

\bibitem[LJB{\etalchar{+}}95]{lecun1995learning}
Y.~LeCun, L.~D. Jackel, L.~Bottou, C.~Cortes, J.~S. Denker, H.~Drucker, I.~Guyon, U.~A. Muller, E.~Sackinger, and P.~Simard.
\newblock Learning algorithms for classification: A comparison on handwritten digit recognition.
\newblock {\em Neural networks: the statistical mechanics perspective}, 261(276):2, 1995.

\bibitem[LKQR20]{li2020acceleration}
Z.~Li, D.~Kovalev, X.~Qian, and P.~Richtarik.
\newblock Acceleration for compressed gradient descent in distributed and federated optimization.
\newblock In {\em Proceedings of the 37th {{International Conference}} on {{Machine Learning}}}, pages 5895--5904. {PMLR}, November 2020.

\bibitem[LLC22]{li2022destress}
B.~Li, Z.~Li, and Y.~Chi.
\newblock {{DESTRESS}}: Computation-optimal and communication-efficient decentralized nonconvex finite-sum optimization.
\newblock {\em SIAM Journal on Mathematics of Data Science}, 4(3):1031--1051, September 2022.

\bibitem[LLP23]{liao2023linearly}
Y.~Liao, Z.~Li, and S.~Pu.
\newblock A linearly convergent robust compressed {{Push-Pull}} method for decentralized optimization.
\newblock {\em arXiv preprint arXiv:2303.07091}, 2023.

\bibitem[LR21]{li2021canita}
Z.~Li and P.~Richtarik.
\newblock {{CANITA}}: Faster rates for distributed convex optimization with communication compression.
\newblock In {\em Advances in {{Neural Information Processing Systems}}}, volume~34, pages 13770--13781. {Curran Associates, Inc.}, 2021.

\bibitem[LY22]{luo2022optimal}
L.~Luo and H.~Ye.
\newblock An optimal stochastic algorithm for decentralized nonconvex finite-sum optimization.
\newblock {\em arXiv preprint arXiv:2210.13931}, 2022.

\bibitem[LZLC22]{li2022soteriafl}
Z.~Li, H.~Zhao, B.~Li, and Y.~Chi.
\newblock {SoteriaFL}: A unified framework for private federated learning with communication compression.
\newblock In {\em Advances in Neural Information Processing Systems}, volume~35, pages 4285--4300. Curran Associates, Inc., 2022.

\bibitem[LZZ{\etalchar{+}}17]{lian2017can}
X.~Lian, C.~Zhang, H.~Zhang, C.-J. Hsieh, W.~Zhang, and J.~Liu.
\newblock Can decentralized algorithms outperform centralized algorithms? a case study for decentralized parallel stochastic gradient descent.
\newblock In {\em Advances in Neural Information Processing Systems}, pages 5330--5340, 2017.

\bibitem[MGTR19]{mishchenko2019distributeda}
K.~Mishchenko, E.~Gorbunov, M.~Takáč, and P.~Richtárik.
\newblock Distributed learning with compressed gradient differences.
\newblock {\em arXiv preprint arXiv:1901.09269}, 2019.

\bibitem[MS23]{murata2023diff2}
T.~Murata and T.~Suzuki.
\newblock {{DIFF2}}: Differential private optimization via gradient differences for nonconvex distributed learning.
\newblock {\em arXiv preprint arXiv:2302.03884}, 2023.

\bibitem[NBD22]{noble2022differentiallya}
M.~Noble, A.~Bellet, and A.~Dieuleveut.
\newblock Differentially private federated learning on heterogeneous data.
\newblock In {\em Proceedings of {{The}} 25th {{International Conference}} on {{Artificial Intelligence}} and {{Statistics}}}, pages 10110--10145. {PMLR}, May 2022.

\bibitem[NOR18]{nedic2018network}
A.~Nedi{\'c}, A.~Olshevsky, and M.~G. Rabbat.
\newblock Network topology and communication-computation tradeoffs in decentralized optimization.
\newblock {\em Proceedings of the IEEE}, 106(5):953--976, 2018.

\bibitem[NOS17]{nedic2017achieving}
A.~Nedi{\'c}, A.~Olshevsky, and W.~Shi.
\newblock Achieving geometric convergence for distributed optimization over time-varying graphs.
\newblock {\em SIAM Journal on Optimization}, 27(4):2597--2633, 2017.

\bibitem[NRB20]{nokleby2020scaling}
M.~Nokleby, H.~Raja, and W.~U. Bajwa.
\newblock Scaling-{{Up Distributed Processing}} of {{Data Streams}} for {{Machine Learning}}.
\newblock {\em Proceedings of the IEEE}, 108(11):1984--2012, November 2020.

\bibitem[PMB13]{pascanu2013difficulty}
R.~Pascanu, T.~Mikolov, and Y.~Bengio.
\newblock On the difficulty of training recurrent neural networks.
\newblock In {\em Proceedings of the 30th {{International Conference}} on {{Machine Learning}}}, pages 1310--1318. {PMLR}, May 2013.

\bibitem[QL18]{qu2018harnessing}
G.~Qu and N.~Li.
\newblock Harnessing smoothness to accelerate distributed optimization.
\newblock {\em IEEE Transactions on Control of Network Systems}, 5(3):1245--1260, 2018.

\bibitem[RLDJ23]{reisizadeh2023variancereduced}
A.~Reisizadeh, H.~Li, S.~Das, and A.~Jadbabaie.
\newblock Variance-reduced clipping for non-convex optimization.
\newblock {\em arXiv preprint arXiv:2303.00883}, 2023.

\bibitem[RSF21]{richtarik2021ef21}
P.~Richtarik, I.~Sokolov, and I.~Fatkhullin.
\newblock {{EF21}}: A new, simpler, theoretically better, and practically faster error feedback.
\newblock In {\em Advances in {{Neural Information Processing Systems}}}, volume~34, pages 4384--4396. {Curran Associates, Inc.}, 2021.

\bibitem[SCJ18]{stich2018sparsified}
S.~U. Stich, J.-B. Cordonnier, and M.~Jaggi.
\newblock Sparsified {{SGD}} with memory.
\newblock In {\em Advances in {{Neural Information Processing Systems}}}, volume~31. {Curran Associates, Inc.}, 2018.

\bibitem[SDGD21]{singh2021squarmsgd}
N.~Singh, D.~Data, J.~George, and S.~Diggavi.
\newblock {{SQuARM-SGD}}: {{Communication-Efficient Momentum SGD}} for {{Decentralized Optimization}}.
\newblock {\em IEEE Journal on Selected Areas in Information Theory}, 2(3):954--969, September 2021.

\bibitem[SFD{\etalchar{+}}14]{seide20141bit}
F.~Seide, H.~Fu, J.~Droppo, G.~Li, and D.~Yu.
\newblock 1-bit stochastic gradient descent and its application to data-parallel distributed training of speech {{DNNs}}.
\newblock In {\em Interspeech 2014}, pages 1058--1062. {ISCA}, September 2014.

\bibitem[Sha07]{shah2007gossip}
D.~Shah.
\newblock Gossip algorithms.
\newblock {\em Foundations and Trends\textregistered{} in Networking}, 3(1):1--125, 2007.

\bibitem[SLH20]{sun2020improving}
H.~Sun, S.~Lu, and M.~Hong.
\newblock Improving the sample and communication complexity for decentralized non-convex optimization: Joint gradient estimation and tracking.
\newblock In {\em International Conference on Machine Learning}, pages 9217--9228. PMLR, 2020.

\bibitem[TGZ{\etalchar{+}}18]{tang2018communication}
H.~Tang, S.~Gan, C.~Zhang, T.~Zhang, and J.~Liu.
\newblock Communication compression for decentralized training.
\newblock {\em Advances in Neural Information Processing Systems}, 31:7652--7662, 2018.

\bibitem[TLQ{\etalchar{+}}19]{tang2019deepsqueeze}
H.~Tang, X.~Lian, S.~Qiu, L.~Yuan, C.~Zhang, T.~Zhang, and J.~Liu.
\newblock Deepsqueeze: Decentralization meets error-compensated compression.
\newblock {\em arXiv preprint arXiv:1907.07346}, 2019.

\bibitem[TMHP20]{taheri2020quantized}
H.~Taheri, A.~Mokhtari, H.~Hassani, and R.~Pedarsani.
\newblock Quantized decentralized stochastic learning over directed graphs.
\newblock In {\em Proceedings of the 37th {{International Conference}} on {{Machine Learning}}}, pages 9324--9333. {PMLR}, November 2020.

\bibitem[WJ21]{wang2021cooperative}
J.~Wang and G.~Joshi.
\newblock Cooperative {{SGD}}: A unified framework for the design and analysis of local-update {{SGD}} algorithms.
\newblock {\em The Journal of Machine Learning Research}, 22(1):213:9709--213:9758, January 2021.

\bibitem[WJEG19]{wang2019efficient}
L.~Wang, B.~Jayaraman, D.~Evans, and Q.~Gu.
\newblock Efficient privacy-preserving stochastic nonconvex optimization.
\newblock {\em arXiv preprint arXiv:1910.13659}, 2019.

\bibitem[WJZ{\etalchar{+}}19]{wang2019spiderboost}
Z.~Wang, K.~Ji, Y.~Zhou, Y.~Liang, and V.~Tarokh.
\newblock Spider{B}oost and momentum: Faster variance reduction algorithms.
\newblock In {\em Advances in Neural Information Processing Systems}, pages 2406--2416, 2019.

\bibitem[WXDX20]{wang2020differentially}
D.~Wang, H.~Xiao, S.~Devadas, and J.~Xu.
\newblock On differentially private stochastic convex optimization with heavy-tailed data.
\newblock In {\em Proceedings of the 37th {{International Conference}} on {{Machine Learning}}}, pages 10081--10091. {PMLR}, November 2020.

\bibitem[WYX17]{wang2017differentially}
D.~Wang, M.~Ye, and J.~Xu.
\newblock Differentially private empirical risk minimization revisited: Faster and more general.
\newblock {\em Advances in Neural Information Processing Systems}, 30, 2017.

\bibitem[XB04]{xiao2004fast}
L.~Xiao and S.~Boyd.
\newblock Fast linear iterations for distributed averaging.
\newblock {\em Systems \& Control Letters}, 53(1):65--78, September 2004.

\bibitem[XKK22a]{xin2022fasta}
R.~Xin, U.~A. Khan, and S.~Kar.
\newblock Fast decentralized nonconvex finite-sum optimization with recursive variance reduction.
\newblock {\em SIAM Journal on Optimization}, 32(1):1--28, March 2022.

\bibitem[XKK22b]{xin2022fast}
R.~Xin, U.~A. Khan, and S.~Kar.
\newblock A {{Fast Randomized Incremental Gradient Method}} for {{Decentralized Nonconvex Optimization}}.
\newblock {\em IEEE Transactions on Automatic Control}, 67(10):5150--5165, October 2022.

\bibitem[XPNK20]{xin2020general}
R.~Xin, S.~Pu, A.~Nedi{\'c}, and U.~A. Khan.
\newblock A general framework for decentralized optimization with first-order methods.
\newblock {\em Proceedings of the IEEE}, 108(11):1869--1889, 2020.

\bibitem[XYD19]{xiao2019local}
H.~Xiao, Y.~Ye, and S.~Devadas.
\newblock Local differential privacy in decentralized optimization.
\newblock {\em arXiv preprint arXiv:1902.06101}, 2019.

\bibitem[YCC{\etalchar{+}}23]{yan2023compressed}
Y.~Yan, J.~Chen, P.-Y. Chen, X.~Cui, S.~Lu, and Y.~Xu.
\newblock Compressed decentralized proximal stochastic gradient method for nonconvex composite problems with heterogeneous data.
\newblock {\em arXiv preprint arXiv:2302.14252}, 2023.

\bibitem[YGG17]{you2017scaling}
Y.~You, I.~Gitman, and B.~Ginsburg.
\newblock Scaling {SGD} batch size to 32k for {ImageNet} training.
\newblock Technical Report UCB/EECS-2017-156, EECS Department, University of California, Berkeley, Sep 2017.

\bibitem[YZCL22]{yang2022normalized}
X.~Yang, H.~Zhang, W.~Chen, and T.-Y. Liu.
\newblock Normalized/clipped {SGD} with perturbation for differentially private non-convex optimization.
\newblock {\em arXiv preprint arXiv:2206.13033}, 2022.

\bibitem[ZBLR21]{zhao2021faster}
H.~Zhao, K.~Burlachenko, Z.~Li, and P.~Richt{\'a}rik.
\newblock Faster rates for compressed federated learning with client-variance reduction.
\newblock {\em arXiv preprint arXiv:2112.13097}, 2021.

\bibitem[ZCH{\etalchar{+}}22]{zhang2022understanding}
X.~Zhang, X.~Chen, M.~Hong, S.~Wu, and J.~Yi.
\newblock Understanding clipping for federated learning: Convergence and client-level differential privacy.
\newblock In {\em Proceedings of the 39th {{International Conference}} on {{Machine Learning}}}, pages 26048--26067. {PMLR}, June 2022.

\bibitem[ZHSJ20a]{zhang2020gradient}
J.~Zhang, T.~He, S.~Sra, and A.~Jadbabaie.
\newblock Why gradient clipping accelerates training: A theoretical justification for adaptivity.
\newblock In {\em International Conference on Learning Representations}, 2020.

\bibitem[ZHSJ20b]{zhang2020why}
J.~Zhang, T.~He, S.~Sra, and A.~Jadbabaie.
\newblock Why gradient clipping accelerates training: A theoretical justification for adaptivity.
\newblock In {\em International {{Conference}} on {{Learning Representations}}}, March 2020.

\bibitem[ZJFW20]{zhang2020improved}
B.~Zhang, J.~Jin, C.~Fang, and L.~Wang.
\newblock Improved analysis of clipping algorithms for non-convex optimization.
\newblock {\em Advances in Neural Information Processing Systems}, 33:15511--15521, 2020.

\bibitem[ZKV{\etalchar{+}}20]{zhang2020whya}
J.~Zhang, S.~P. Karimireddy, A.~Veit, S.~Kim, S.~J. Reddi, S.~Kumar, and S.~Sra.
\newblock Why are adaptive methods good for attention models?
\newblock {\em arXiv preprint arXiv:1912.03194}, October 2020.

\bibitem[ZLL{\etalchar{+}}22]{zhao2022beer}
H.~Zhao, B.~Li, Z.~Li, P.~Richt{\'a}rik, and Y.~Chi.
\newblock Beer: Fast {O (1/T)} rate for decentralized nonconvex optimization with communication compression.
\newblock {\em Advances in Neural Information Processing Systems}, 35, 2022.

\bibitem[ZM10]{zhu2010discrete}
M.~Zhu and S.~Mart{\'\i}nez.
\newblock Discrete-time dynamic average consensus.
\newblock {\em Automatica}, 46(2):322--329, 2010.

\end{thebibliography}
\end{document}